\newcommand{\tabitem}{\textbullet~~}
\newcommand{\best}{\cellcolor{green!15!white}\bf }
\newcommand{\R}{\mathbb{R}}
\newcommand{\E}{\mathbb{E}}
\newcommand{\Z}{\mathbb{Z}}
\renewcommand{\d}{\mathrm{d}}
\renewcommand{\P}{\mathcal{P}}
\newcommand{\T}{\mathrm{T}}
\newcommand{\F}{\mathcal F}
\newcommand{\zb}{\bm}
\DeclareMathOperator*{\argmin}{arg\,min}
\DeclareMathOperator{\dom}{dom}
\DeclareMathOperator{\prox}{prox}
\newcommand{\tT}{\mathrm{T}}
\newcommand{\opt}{\mathrm{opt}}
\theoremstyle{plain}
\newtheorem{theorem}{Theorem}[section]
\newtheorem{lemma}[theorem]{Lemma}
\newtheorem{corollary}[theorem]{Corollary}
\newtheorem{example}[theorem]{Example}
\theoremstyle{definition}
\newtheorem{definition}[theorem]{Definition}
\newtheorem{assumption}[theorem]{Assumption}
\theoremstyle{remark}
\newtheorem{remark}[theorem]{Remark}
\icmltitlerunning{Importance Corrected Neural JKO Sampling}
\begin{document}

\twocolumn[
\icmltitle{Importance Corrected Neural JKO Sampling}




\begin{icmlauthorlist}
\icmlauthor{Johannes Hertrich}{dauphine,ucl}
\icmlauthor{Robert Gruhlke}{fub}
\end{icmlauthorlist}

\icmlaffiliation{dauphine}{Université Paris Dauphine - PSL}
\icmlaffiliation{ucl}{University College London}
\icmlaffiliation{fub}{FU Berlin}

\icmlcorrespondingauthor{Johannes Hertrich}{johannes.hertrich@dauphine.psl.eu}
\icmlcorrespondingauthor{Robert Gruhlke}{r.gruhlke@fu-berlin.de}

\icmlkeywords{Sampling, Wasserstein Gradient Flows, Normalizing Flows, Rejection Sampling}

\vskip 0.3in
]



\printAffiliationsAndNotice{\icmlEqualContribution} 

\begin{abstract}
    In order to sample from an unnormalized probability density function, we propose to combine continuous normalizing flows (CNFs) with rejection-resampling steps based on importance weights. We relate the iterative training of CNFs with regularized velocity fields to a JKO scheme and prove convergence of the involved velocity fields to the velocity field of the Wasserstein gradient flow (WGF).
    The alternation of local flow steps and non-local rejection-resampling steps allows to overcome local minima or slow convergence of the WGF for multimodal distributions.
    Since the proposal of the rejection step is generated by the model itself, 
    they do not suffer from common drawbacks of classical rejection schemes.
    The arising model can be trained iteratively, reduces the reverse Kullback-Leibler (KL) loss function in each step, allows to generate \textit{iid} samples and moreover allows for evaluations of the generated underlying density.
    Numerical examples show that our method yields accurate results on various test distributions including high-dimensional multimodal targets and outperforms the state of the art in almost all cases significantly.
\end{abstract}
\section{Introduction}

We consider the problem of sampling from an unnormalized probability density function. That is, we are given an integrable function $g\colon\R^d\to\R_{>0}$ and we aim to generate samples from the probability distribution $\nu$ given by the density $q(x)=g(x)/Z_g$, where the normalizing constant $Z_g=\int_{\R^d} g(x)\d x$ is unknown.
Many classical sampling methods are based on Markov chain Monte Carlo (MCMC) methods like the overdamped Langevin sampling, see, e.g., \cite{WT2011}. The generated probability path of the underlying stochastic differential equation follows the Wasserstein-$2$ gradient flow of the reverse KL divergence $\mathcal F(\mu)=\mathrm{KL}(\mu,\nu)$.
Over the last years, generative models like normalizing flows \citep{RM2015} or diffusion models \citep{HJA2020,SSKKEP2021} became more popular for sampling, see, e.g., \cite{PDHD2024,VGD2023}. Also these methods are based on the reverse KL divergence as a loss function. 
While generative models have successfully been applied in data-driven setups, their application to the problem of sampling from arbitrary unnormalized densities is not straightforward. This difficulty arises from the significantly harder nature of the problem, even in moderate dimensions, particularly when dealing with target distributions that exhibit phenomena such as concentration effects, multimodalities, heavy tails, or other issues related to the curse of dimensionality.

In particular, the reverse KL is non-convex in the Wasserstein space as soon as the target density $\nu$ is not log-concave which is for example the case when $\nu$ consists of multiple modes. 
In this case generative models often collapse to one or a small number of modes. We observe that for continuous normalizing flows (CNFs, \citealp{CRBD2018, GCBSD2018}) this can be prevented by regularizing the $L^2$-norm of the velocity field as proposed under the name OT-flow by \cite{OFLR2021}. In particular, this regularization converts the objective functional into a convex one.
However, the minimizer of the regularized loss function is no longer given by the target measure $\nu$ but by the Wasserstein proximal mapping of the objective function applied onto the latent distribution. Considering that the Jordan-Kinderlehrer-Otto (JKO) scheme \citep{JKO1998} iteratively applies the Wasserstein proximal mapping and converges to the Wasserstein gradient flow, several papers proposed to approximate the steps of the scheme by generative models, see \cite{AHS2023,ASM2022,FZTC2022,LCBBR2022,MKLGS2021,VWTON2023,XCX2024}. We will refer to this class of method by the name \textit{neural JKO}. Even though this approximates the same gradient flow of the Langevin dynamics these approaches have usually the advantage of faster inference (once they are trained) and additional possibly allow for density evaluations.

However, already the time-continuous Wasserstein gradient flow suffers from the non-convexity of the reverse KL loss function by getting stuck in local minima or by very slow convergence times.
In particular, it is well-known that Langevin-based sampling methods often do not distribute the mass correctly onto multimodal target distributions.
As a remedy, \cite{N2001} proposed importance sampling, i.e., to reweight the sample based on quotients of the target distribution and its current approximation which leads to an unbiased estimator of the Monte Carlo integral. However, this estimator may lead to highly imbalanced weights between these samples and the resulting estimator might have a large variance. Moreover, the density of the current approximation has to be known up to a possible multiplicative constant, which is not the case for many MCMC methods like Langevin sampling or Hamiltonian Monte Carlo. In \cite{DDJ2006} the authors propose a scheme for alternating importance sampling steps with local Monte Carlo steps.
However, the corresponding importance sampling step generates non-iid samples, that de-correlate over time by construction.

\begin{figure*}
\begin{subfigure}[c]{.18\textwidth}
    \centering
    {\tiny Latent distribution}
    \includegraphics[width=\textwidth]{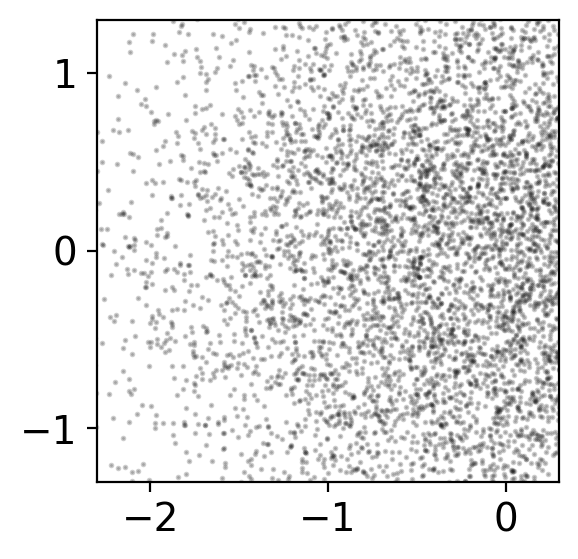}
\end{subfigure}
\begin{subfigure}[t]{.06\textwidth}
\centering
\begin{tikzpicture}[thick,scale=0.5, every node/.style={scale=0.5}]
    \draw [-stealth](0,0) -- node [text width=0.8cm,midway,above]{CNF\\[-.5em] \phantom{.}} (1,0);
\end{tikzpicture}
\end{subfigure}
\begin{subfigure}[c]{.18\textwidth}
    \centering
    {\tiny After Step 1}
    \includegraphics[width=\textwidth]{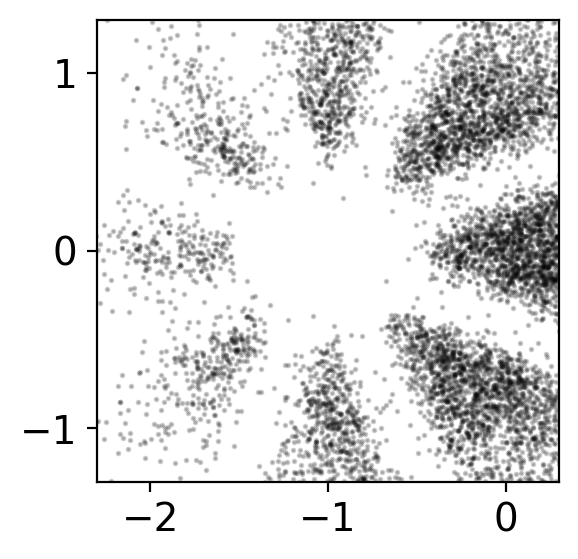}
\end{subfigure}
\begin{subfigure}[t]{.06\textwidth}
\centering
\begin{tikzpicture}[thick,scale=0.5, every node/.style={scale=0.5}]
    \draw [-stealth](0,0) -- node [text width=0.8cm,midway,above]{CNF\\[-.5em] \phantom{.}} (1,0);
\end{tikzpicture}
\end{subfigure}
\begin{subfigure}[c]{.18\textwidth}
    \centering
    {\tiny After Step 2}
    \includegraphics[width=\textwidth]{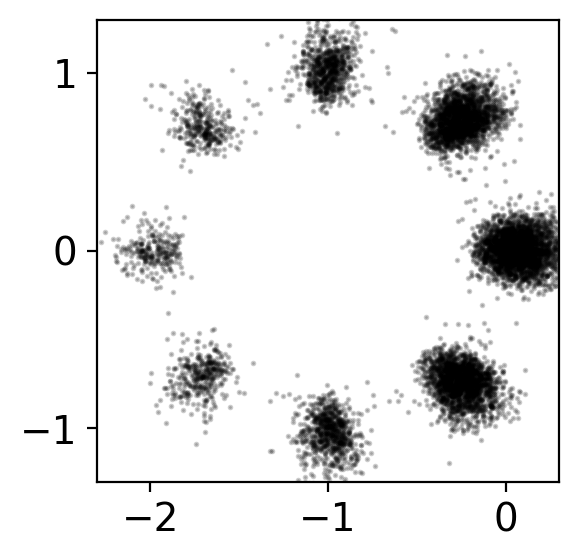}
\end{subfigure}
\begin{subfigure}[t]{.06\textwidth}
\centering
\begin{tikzpicture}[thick,scale=0.5, every node/.style={scale=0.5}]
    \draw [-stealth](0,0) -- node [text width=0.8cm,midway,above]{CNF\\[-.5em] \phantom{.}} (1,0);
\end{tikzpicture}
\end{subfigure}
\begin{subfigure}[c]{.18\textwidth}
    \centering
    {\tiny After Step 3}
    \includegraphics[width=\textwidth]{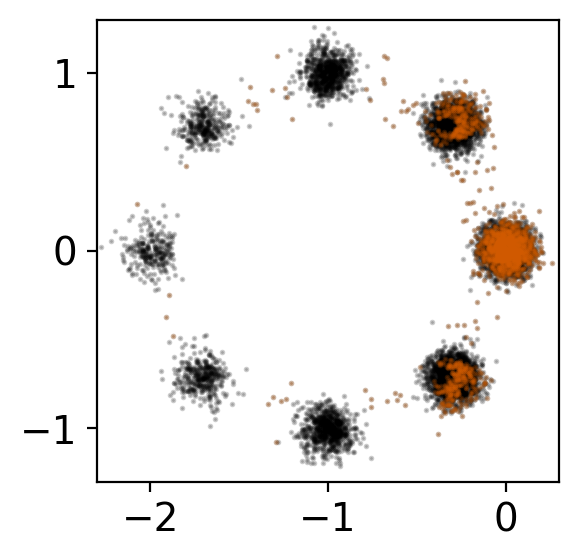}
\end{subfigure}
\begin{subfigure}[t]{.06\textwidth}
\centering
\begin{tikzpicture}[thick,scale=0.5, every node/.style={scale=0.5}]
    \draw [-stealth](0,0) -- node [text width=1.4cm,midway,above]{\centering \textcolor[RGB]{213, 94, 0}{rejection} \\ $+$ \\ \textcolor[RGB]{70, 154, 208}{resampling}\\[-.5em] \phantom{.}} (1,0);
\end{tikzpicture}
\end{subfigure}

\begin{subfigure}[c]{.18\textwidth}
    \centering
    {\tiny After Step 4}
    \includegraphics[width=\textwidth]{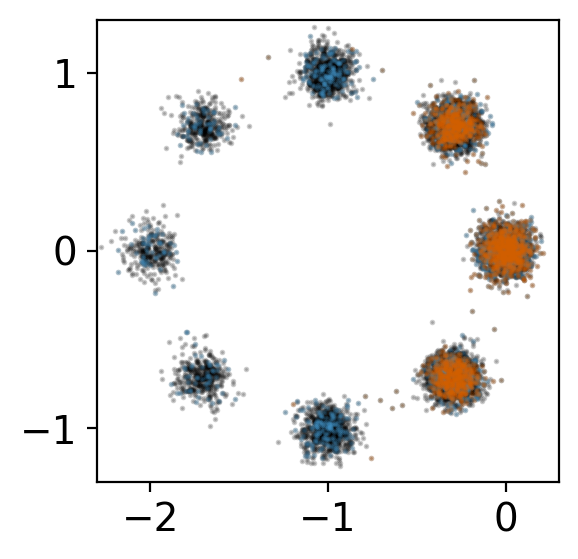}
\end{subfigure}
\begin{subfigure}[t]{.06\textwidth}
\centering
\begin{tikzpicture}[thick,scale=0.5, every node/.style={scale=0.5}]
    \draw [-stealth](0,0) -- node [text width=1.2cm,midway,above]{\centering \textcolor[RGB]{213, 94, 0}{rejection} \\ $+$ \\ \textcolor[RGB]{70, 154, 208}{resampling}\\[-.5em] \phantom{.}} (1,0);
\end{tikzpicture}
\end{subfigure}
\begin{subfigure}[c]{.18\textwidth}
    \centering
    {\tiny After Step 5}
    \includegraphics[width=\textwidth]{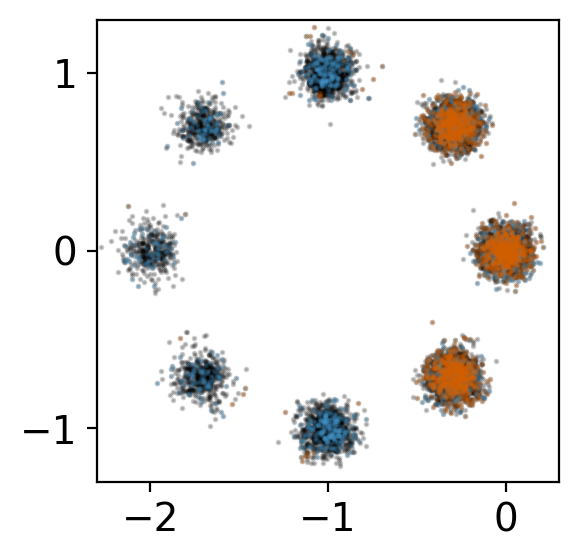}
\end{subfigure}
\begin{subfigure}[t]{.06\textwidth}
\centering
\begin{tikzpicture}[thick,scale=0.5, every node/.style={scale=0.5}]
    \draw [-stealth](0,0) -- node [text width=1.2cm,midway,above]{\centering \textcolor[RGB]{213, 94, 0}{rejection} \\ $+$ \\ \textcolor[RGB]{70, 154, 208}{resampling}\\[-.5em] \phantom{.}} (1,0);
\end{tikzpicture}
\end{subfigure}
\begin{subfigure}[c]{.18\textwidth}
    \centering
    {\tiny After Step 6}
    \includegraphics[width=\textwidth]{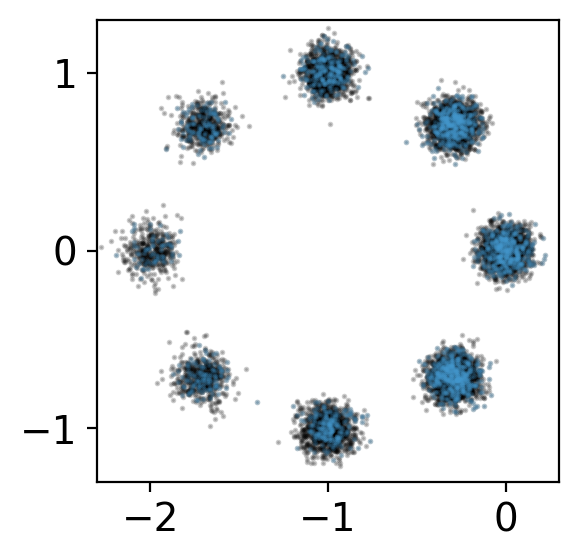}
\end{subfigure}
\begin{subfigure}[t]{.06\textwidth}
\centering
\begin{tikzpicture}[thick,scale=0.5, every node/.style={scale=0.5}]
    \draw [-stealth](0,0) -- node [text width=.6cm,midway,above]{$\cdots$ \\ \phantom{.}} (1,0);
\end{tikzpicture}
\end{subfigure}
\begin{subfigure}[c]{.18\textwidth}
    \centering
    {\tiny Final Approximation}
    \includegraphics[width=\textwidth]{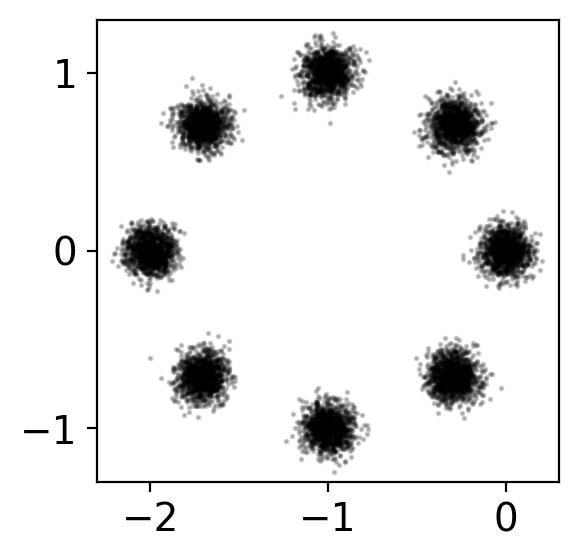}
\end{subfigure}
\begin{subfigure}[t]{.06\textwidth}
\phantom{\includegraphics[width=\textwidth]{figs_final/intro/intro_0.png}}
\end{subfigure}
\caption{Iterative application of neural JKO steps and rejection steps for a shifted mixture target distribution. The orange samples are rejected in the next following rejection step and the blue samples are the resampled points. The latter approach enables for the correction of wrong mode weights introduced by the underlying WGF. See Figure~\ref{fig:8modes_2d} for more steps.
}
\label{fig:death-birth}
\end{figure*}

\noindent
\textbf{Contributions}\,\,\,
In this paper, we propose a sampling method which combines neural JKO steps based on CNFs with importance based rejection steps.
While the CNFs adjust the position of the generated samples \textit{locally}, the rejection steps readjust the inferred distribution \textit{non-locally} based on the quotient of generated and target distribution. Then, in each rejection step we resample the rejected points based on the current constructed generative model.
We illustrate this procedure in Figure~\ref{fig:death-birth}.

Our method generates independent samples and allows to evaluate the density of the generated distribution. 
In our numerical examples, we apply our method to common test distributions up to the dimension $d=1600$ which are partially highly multimodal. We show that our \textit{importance corrected neural JKO} sampling (neural JKO IC) achieves significantly better results than the comparisons\footnote{The code is available at \url{https://github.com/johertrich/neural_JKO_ic}}.

From a theoretical side, we prove that the velocity fields from a sequence of neural JKO steps strongly converge to the velocity field of the corresponding Wasserstein gradient flow and that the reverse KL loss function decreases throughout the importance-based rejection steps.

\noindent
\textbf{Outline}\,\,\,
In Section~\ref{sec:prelim}, we recall the fundamental concepts which will be required. Afterwards, we consider neural JKO schemes more detail in Section~\ref{sec:neural_JKO}.
We introduce our importance-based rejection steps in Section~\ref{sec:rejection_steps}. Finally, we evaluate our model numerically and compare it to existing methods in Section~\ref{sec:numerics}. Conclusions are drawn in Section~\ref{sec:conclusions}. Additionally, proofs, further numerical and technical details are presented in Appendix~\ref{app:backgrounds}-~\ref{app:detailed_experiments}.

\noindent
\textbf{Related Work}\,\,\,
Common methods for sampling of unnormalized densities are often based on Markov Chain Monte Carlo (MCMC) methods, see e.g. \citep{GRS1995}. In particular first order based variants, such as the Hamiltonian Monte Carlo (HMC) 
\citep{B2017,HG2014}
and the Metropolis Adjusted Langevin Algorithm (MALA \citealp{GC2011,RDF1978,RT1996}) are heavily used in practice, see \citep{ADDJ2003} for an overview. 
The viewpoint of these samplers 
as sample space description of gradient flows defined in a metricized  probability space then allows for extensions such as interacting particle systems  \citep{CHHRS2023, EGS2024,GNR2020,WL2022}.
However, since these algorithms are based on local transformations of the samples, they are unable to distribute the mass correctly among different modes, which can partially be corrected by importance sampling \citep{N2001} and sequential Monte Carlo samplers (SMC) \citep{DDJ2006} as described above. 
In contrast to our model, SMC approximates the density of the approximation by assigning ``inverse Markov kernels'' to certain MCMC kernels, which might lead to propagating errors. Furthermore, the generation of additional samples requires to rerun the whole procedure which can be very costly. Other approaches approximate the target density by combination of transport maps and low-rank models such as tensor trains \cite{gruhlke2022low} yielding efficient access to posterior statistics. 

In the last years, generative models became very popular, including VAEs \citep{KW2013}, normalizing flows \citep{RM2015}, diffusion models \citep{HJA2020} or flow-matching \citep{LCBNL2022} which is also known as rectified flow \citep{LGL2022}. In contrast to our setting, they initially consider the \emph{modeling} task, i.e., they assume that they are given samples from the target measure instead of an unnormalized density. However, there are several papers, which adapt these algorithms for the \textit{sampling} task. For normalizing flows, this mostly amounts to changing the loss function \citep{HSDL2019,MMPS2016,QW2024}. 
Very recently, there appeared also a flow-matching variant for the sampling task \citep{WA2024} based on \cite{ARJM2024}.
For diffusion (and stochastic control) models this was done based on variational approaches \citep{BJEVN2024,PDHD2024,VGD2023,VN2023,ZC2021} or by computing the score by solving a PDE \citep{RB2023,GS2024}. These methods usually provide much faster sampling times than MCMC methods and are often used in combination with some conditioning parameter for inverse problems, where a (generative) prior is combined with a known likelihood term \citep{AKWR2019,AH2023,AFHHSS2021,10Autoren}.
Combinations of generative models with stochastic sampling steps were considered in the literature for generative modeling under the name stochastic normalizing flows \citep{HHS2023,HHS2022,NOKW2019,WKN2020} and for sampling under the name annealed flow transport Monte Carlo \citep{AMD2021,MARD2022}. \cite{GRV2022} use normalizing flows to learn proposal distributions in an Metropolis-Hastings algorithm.
These generative models can be adapted to follow a Wasserstein gradient flow, by mimicking a JKO scheme with generative models or directly following the velocity field of a kernel-based functional. Such approaches were proposed for generative modeling \citep{FZTC2022,HHABCS2024,HWAH2024,LSMDS2019,VWTON2023,XCX2024}, sampling \citep{FZTC2022,LCBBR2022,L2016,MKLGS2021} or other tasks \citep{AHS2023,AKSG2019,ASM2022}.
Gradient flows of the reverse KL divergence with respect to different metric were considered in \cite{LW2016} under the name Stein variational gradient descent. Moreover, \citet{LLN2019} study gradient flows in the Wasserstein-Fisher-Rao metric which can be implemented via birth-death processes.

After the first version of our paper appeared, several related preprints were released, including \cite{AV2024,CRBBNA2025,HDVZ2025,WX2024}.

\section{Preliminaries}\label{sec:prelim}

In this section, we provide a rough overview of the required concepts for this paper. To this end, we first revisit the basic definitions of Wasserstein gradient flows, e.g., based on \cite{AGS2005}. Afterwards we recall continuous normalizing flows with OT-regularizations.

\subsection{Curves in Wasserstein Spaces}

\noindent
\textbf{Wasserstein Distance}\,\,\,
Let $\P(\R^d)$ be the space of probability measures on $\R^d$ and denote by $\P_2(\R^d)\coloneqq \{\mu\in\P(\R^d):\int_{\R^d}\|x\|^2\d \mu(x)<\infty\}$ the subspace of probability measures with finite second moment. Let $\P_2^{\mathrm{ac}}(\R^d)$ be the subspace of absolutely continuous measures from $\P_2(\R^d)$.
Moreover, we denote for $\mu,\nu\in\P_2(\R^d)$ by $\Gamma(\mu,\nu)\coloneqq \{\zb \pi\in\P_2(\R^d\times\R^d):{P_1}_\#\zb \pi=\mu, {P_2}_\#\zb \pi=\nu\}$ the set of all transport plans with marginals $\mu$ and $\nu$, where $P_i\colon\R^d\times\R^d\to\R^d$ defined by $P_i(x_1,x_2)=x_i$ is the projection onto the $i$-th component for $i=1,2$.
Then, we equip $\P_2(\R^d)$ with the Wasserstein-2 metric defined by
$$
W_2^2(\mu,\nu)=\inf_{\zb \pi\in\Gamma(\mu,\nu)}\int_{\R^d\times\R^d} \|x-y\|^2\d \zb \pi(x,y).
$$
If $\mu \in \P_2^\mathrm{ac}(\R^d)$, the above problem has always a unique minimizer.

\noindent
\textbf{Absolutely Continuous Curves}\,\,\,
A curve $\gamma\colon I\to\P_2(\R^d)$ 
on the interval $I\subseteq \R$ is called \emph{absolutely continuous} 
if there exists a Borel velocity field $v\colon\R^d\times I\to\R^d$ with 
$\int_I \|v(\cdot,t)\|_{L_2(\gamma(t),\mathbb{R}^d)} \d t<\infty$ 
such that the continuity equation 
\begin{equation}\label{eq:continuity_equation}
\partial_t\gamma(t)+\nabla\cdot(v(\cdot,t)\gamma(t))=0
\end{equation}
is fulfilled on $I\times\R^d$ in a weak sense.
Then, any velocity field $v$ solving the continuity equation~\eqref{eq:continuity_equation} for fixed $\gamma$ characterizes $\gamma$ as
$
\gamma(t)=z(\cdot,t)_\#\gamma(t_0),
$
where $z$ is the solution of the ODE $\dot z(x,t)=v(z(x,t),t)$ with $z(x,t_0)=x$ and $t_0\in I$.
It can be shown that for an absolutely continuous curve there exists a unique solution of minimal norm which is equivalently characterized by the so-called regular tangent space $\T_{\gamma (t)}\P_2(\R^d)$, see Appendix~\ref{app:backgrounds} for details.
An absolutely continuous curve is a geodesic if there exists some $c>0$ such that $W_2(\gamma(s),\gamma(t))=c|s-t|$.

The following theorem formulates a dynamic version of the Wasserstein distance based minimal energy curves in the Wasserstein space.
\begin{theorem}[\citealp{BB2000}]\label{thm:benamou-brenier}
Assume that $\mu,\nu\in\P_2^\mathrm{ac}(\R^d)$. Then, $W_2^2(\mu,\nu)$ is equal to
\begin{align}
\inf_{\substack{v\colon\R^d\times[0,1]\to\R^d,\\\dot z(x,t)=v(z(x,t),t),\\ z(x,0)=x,\, z(\cdot,1)_\#\mu=\nu}}\int_0^1 \int_{\R^d} \|v(z(x,t),t)\|^2\d \mu(x)\d t.
\end{align}
Moreover, there exists a unique minimizing velocity field $v$ and the curve defined by $\gamma(t)=z(\cdot,t)_\#\mu$ with $t\in[0,1]$ and $\dot z(x,t)=v(z(x,t),t)$, $z(x,0)=x$ is a geodesic which fulfills the continuity equation
$
\partial_t\gamma(t)+\nabla\cdot(v(\cdot,t)\gamma(t))=0.
$
\end{theorem}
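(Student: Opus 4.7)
The plan is to prove the two inequalities separately and then identify the minimizer via Brenier's theorem. For the direction $W_2^2(\mu,\nu) \ge \text{inf}$, I would start from the optimal transport map $T\colon \R^d\to\R^d$ given by Brenier's theorem (already stated in the excerpt), and use displacement interpolation: define $z(x,t) = (1-t)x + tT(x)$ and set $v(z(x,t),t) = T(x) - x$ (which is well-defined because the $z(\cdot,t)$ are invertible for $t\in[0,1)$ as gradients of convex functions). This $v$ is admissible since $z(\cdot,1)_\#\mu = T_\#\mu = \nu$, and a direct computation gives
\[
\int_0^1 \int_{\R^d} \|v(z(x,t),t)\|^2 \d\mu(x)\d t = \int_{\R^d}\|T(x)-x\|^2 \d\mu(x) = W_2^2(\mu,\nu),
\]
so the infimum is at most $W_2^2(\mu,\nu)$.

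For the reverse inequality, I would take any admissible $v$. Then $(\mathrm{Id}, z(\cdot,1))_\#\mu$ is a transport plan in $\Gamma(\mu,\nu)$, so
\[
W_2^2(\mu,\nu) \le \int_{\R^d} \|z(x,1)-x\|^2 \d\mu(x) = \int_{\R^d} \Bigl\|\int_0^1 v(z(x,t),t) \d t\Bigr\|^2 \d\mu(x).
\]
A Jensen/Cauchy-Schwarz inequality in $t$ pulls the square inside the time integral, giving the bound by $\int_0^1\int \|v(z(x,t),t)\|^2 \d\mu(x)\d t$. Combined with the previous step, this establishes the claimed identity and shows that the displacement interpolation achieves the infimum.

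For the second part (existence, uniqueness and geodesic property), I would observe that the $v$ constructed from Brenier's $T$ achieves equality in both the Jensen step (so $v(z(x,t),t)$ is constant in $t$ for $\mu$-a.e.\ $x$) and in the Wasserstein bound (so $(\mathrm{Id}, z(\cdot,1))_\#\mu$ is optimal). Uniqueness of the velocity field then follows from uniqueness of the optimal transport map in Brenier's theorem, together with the fact that along displacement interpolation the map $x\mapsto z(x,t)$ is a bijection for $t\in[0,1)$ (as gradient of a strictly convex function), which fixes $v$ pointwise. The geodesic property is a direct calculation: for $0\le s\le t\le 1$, $((1-s)\mathrm{Id}+sT, (1-t)\mathrm{Id}+tT)_\#\mu$ is a transport plan between $\gamma(s)$ and $\gamma(t)$ with cost $(t-s)^2 W_2^2(\mu,\nu)$, and this is optimal by the same Jensen argument applied to the restricted time interval. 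Finally, the continuity equation holds by the characterization of absolutely continuous curves recalled before the theorem, applied to the flow map $z$.

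The main obstacle, and the only genuinely non-trivial ingredient, is to justify that $v$ constructed along displacement interpolation is indeed an admissible Borel velocity field in the sense of the continuity equation, i.e., that for $t\in[0,1)$ the map $x\mapsto z(x,t)$ is invertible $\mu$-a.e.\ so that $v(\cdot,t)$ is a well-defined vector field on the support of $\gamma(t)$. This uses convexity of the Brenier potential $\phi$ (where $T = \nabla\phi$) to show $(1-t)\mathrm{Id} + t\nabla\phi = \nabla((1-t)\|\cdot\|^2/2 + t\phi)$ is the gradient of a strictly convex function on $(0,1)$, hence injective, so the pushforward formula defines $v$ unambiguously. Once this point is settled, the rest of the argument is straightforward.
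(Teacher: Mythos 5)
The paper states this as a classical result cited from \cite{BB2000} and does not provide its own proof, so there is no in-paper argument to compare against. Your proposal is essentially the standard direct proof via Brenier's theorem and displacement interpolation, and the logic is sound: the displacement velocity field achieves cost $W_2^2(\mu,\nu)$, the Jensen step gives the reverse bound for any admissible $v$, and equality in Jensen together with uniqueness of the Brenier map pins down the minimizer. You also correctly identify the one genuine technical point, namely that $(1-t)\,\mathrm{Id}+tT$ is injective for $t\in[0,1)$ because it is the gradient of a strictly convex function, which is what makes $v(\cdot,t)$ well-defined on $\mathrm{supp}\,\gamma(t)$. Two minor imprecisions worth fixing: the resulting $v(\cdot,t)$ is only determined $\gamma(t)$-a.e., not pointwise on $\R^d$ (outside the support of $\gamma(t)$ it is arbitrary, and ``uniqueness'' must be read in $L_2(\gamma(t))$), and the geodesic property is most cleanly concluded by combining your upper bound $W_2(\gamma(s),\gamma(t))\le(t-s)W_2(\mu,\nu)$ with the triangle inequality along $0\le s\le t\le 1$ rather than re-running the Jensen argument on a subinterval.
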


Let $\tau>0$. Then, by substitution of $t$ by $t/\tau$ and rescaling $v$ in the time variable, this is equal to
\begin{equation}
\label{eq:rescaled_BB}
\inf_{\substack{v\colon\R^d\times[0,\tau]\to\R^d,\\\dot z(x,t)=v(z(x,t),t), \\
z(x,0)=x, \, z(\cdot,
\tau)_\#\mu=\nu}}\tau\int_0^\tau \int_{\R^d} \|v(z(x,t),t)\|^2\d \mu(x)\d t.
\end{equation}

\noindent
\textbf{Wasserstein Gradient Flows}\,\,\,
An absolutely continuous curve $\gamma\colon(0,\infty)\to\P_2(\R^d)$ 
with velocity field $v_t\in \T_{\gamma (t)}\P_2(\R^d)$ is a \emph{Wasserstein gradient flow with respect to} $\F\colon\P_2(\R^d)\to(-\infty,\infty]$ 
if $v_t\in -\partial \F(\gamma(t))$, for a.e.~$t>0$,
where $\partial \F(\mu)$ denotes the reduced Fr\'echet subdiffential at $\mu$, see Appendix~\ref{app:backgrounds} for a definition.

To compute Wasserstein gradient flows numerically, we can use the generalized minimizing movements or Jordan-Kinderlehrer-Otto (JKO) scheme \citep{JKO1998}. To this end, we consider the Wasserstein proximal mapping defined as
$$
\prox_{\tau \F}(\hat \mu)=\argmin_{\mu\in\P_2(\R^d)}\left\{\tfrac12W_2^2(\mu,\hat\mu)+\tau\F(\mu)\right\}.
$$
Then, define as $\mu_\tau^k$ for $k\in\mathbb{N}$ the steps of the minimizing movements scheme, i.e.,
\begin{equation}\label{eq:JKO_discrete}
\mu_\tau^0=\mu^0,\qquad \mu_\tau^{k+1}=\prox_{\tau\F}(\mu_\tau^k).
\end{equation}
We denote the piecewise constant interpolations
$
\tilde \gamma_\tau\colon[0,\infty)\to\P_2(\R^d)
$
of the minimizing movement scheme by
\begin{equation}\label{otto_curve}
\tilde \gamma_\tau(k\tau+t\tau)=\mu_\tau^k,\quad t\in[0,1).
\end{equation}
Then, the following convergence result holds true. We recall the necessary definitions of coercivity and $\lambda$-convexity in Wasserstein spaces in Appendix~\ref{app:backgrounds}.

\begin{theorem}[{\citealp[Thm~11.2.1]{AGS2005}}]   \label{thm:existence_gflows_ggd}
    Let $\F\colon \P_2(\R^d) \to (-\infty,+\infty]$ be 
    proper, 
    lsc, 
    coercive,
    and $\lambda$-convex along generalized geodesics,
    and let $\mu^0 \in \overline{\dom \F}$.
    Then the curves $\tilde \gamma_\tau$ defined via the minimizing movement scheme \eqref{otto_curve} 
    converge for $\tau \to 0$ locally uniformly to 
    a locally Lipschitz curve $\gamma \colon (0,+\infty) \to \P_2(\R^d)$
    which is the unique Wasserstein gradient flow of $\F$ with $\gamma(0+) = \mu^0$.  
\end{theorem}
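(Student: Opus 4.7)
The plan is to follow the standard De~Giorgi minimizing movements strategy. First, I would establish that each proximal step in \eqref{eq:JKO_discrete} is well-posed: for fixed $\hat\mu\in\dom\F$ the functional $\mu\mapsto \tfrac12 W_2^2(\mu,\hat\mu)+\tau\F(\mu)$ is proper, lsc, and coercive with respect to narrow convergence on $\P_2(\R^d)$ (using the coercivity and lsc assumptions on $\F$, together with the narrow lsc of $W_2^2$). Hence a minimizer $\mu_\tau^{k+1}$ exists, and for $\tau$ small enough the $\lambda$-convexity along generalized geodesics makes this minimizer unique, yielding a well-defined sequence $(\mu_\tau^k)_k$.

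Next, I would extract the standard a~priori bounds. Comparing $\F(\mu_\tau^{k+1})+\tfrac{1}{2\tau}W_2^2(\mu_\tau^{k+1},\mu_\tau^k)\le \F(\mu_\tau^k)$ and telescoping gives both $\sup_k \F(\mu_\tau^k)<\infty$ and the discrete energy-dissipation estimate
\begin{equation}
\sum_{k=0}^{N}\frac{W_2^2(\mu_\tau^{k+1},\mu_\tau^k)}{2\tau}\le \F(\mu^0)-\inf\F.
\end{equation}
From this Cauchy--Schwarz yields a Hölder-$\tfrac12$ type bound $W_2(\tilde\gamma_\tau(s),\tilde\gamma_\tau(t))\le C\sqrt{|t-s|+\tau}$, so a refined Arzelà--Ascoli argument (for curves into the complete metric space $(\P_2(\R^d),W_2)$) produces a subsequence $\tau_n\to 0$ with $\tilde\gamma_{\tau_n}\to\gamma$ locally uniformly to some continuous limit curve $\gamma$ with $\gamma(0+)=\mu^0$.

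To identify $\gamma$ as the Wasserstein gradient flow, I would exploit $\lambda$-convexity along generalized geodesics to turn the first-order optimality condition of each proximal step into a discrete Evolution Variational Inequality: for every $\sigma\in\dom\F$,
\begin{equation}
\frac{W_2^2(\mu_\tau^{k+1},\sigma)-W_2^2(\mu_\tau^k,\sigma)}{2\tau}+\frac{\lambda}{2}W_2^2(\mu_\tau^{k+1},\sigma)\le \F(\sigma)-\F(\mu_\tau^{k+1}).
\end{equation}
Passing to the limit $\tau_n\to 0$ using narrow/$W_2$-lower semicontinuity of the relevant quantities gives the continuous EVI, which by a classical argument (the EVI implies $\lambda$-contraction, uniqueness, and $v_t\in -\partial\F(\gamma(t))$ a.e.) characterizes $\gamma$ uniquely. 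Because the limit is unique, the whole family $\tilde\gamma_\tau$ converges, not just a subsequence. Local Lipschitz continuity on $(0,\infty)$ then follows from the EVI-contraction combined with the regularizing estimate $|\partial\F|(\gamma(t))\le |\partial\F|(\mu^0)$ propagated along the flow (or the standard $1/t$-type bound when $\mu^0$ is only in $\overline{\dom\F}$).

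The main obstacle is the limit passage in the discrete EVI: one has to handle simultaneously the piecewise-constant interpolation (which does not satisfy the continuity equation), the time-discretization error in the $W_2^2$ difference quotient, and the fact that $\F$ is only lsc, not continuous, along $W_2$-convergence. This is precisely the step where the hypothesis of $\lambda$-convexity along \emph{generalized} geodesics (rather than only along geodesics) is essential, as it allows one to compare $\mu_\tau^{k+1}$ with arbitrary reference measures $\sigma$ along a common geodesic emanating from $\mu_\tau^k$, so that the discrete EVI above can actually be derived from the minimality of $\mu_\tau^{k+1}$.
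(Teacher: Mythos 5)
The paper does not prove this theorem; it is stated as a direct citation of \cite[Thm~11.2.1]{AGS2005}. Your sketch correctly reproduces the strategy of that reference proof: well-posedness of the proximal step, the discrete energy-dissipation and Hölder-$\tfrac12$ compactness estimate, the derivation of a discrete Evolution Variational Inequality from $\lambda$-convexity along generalized geodesics, passage to the continuous EVI in the limit, and the resulting uniqueness and contraction that upgrade subsequential to full convergence. One small technical point worth flagging: your telescoping bound is written with $\inf\F$, which need not be finite under the stated hypotheses (coercivity as defined in the paper only controls $\F$ on bounded second-moment sets); the AGS argument first establishes local-in-time control of the second moments via coercivity and the same telescoping estimate and only then extracts a finite lower bound for $\F(\mu_\tau^k)$ on compact time intervals. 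In the setting actually used in the paper this is harmless because Assumption~\ref{ass:F} additionally requires $\F$ to be bounded from below. Otherwise the argument is sound and essentially the same as the cited proof.
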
 

\subsection{Continuous Normalizing Flows and OT-Flows}\label{sec:CNF}

The concept of normalizing flows first appeared in \cite{RM2015}.
It follows the basic idea to approximate a probability distribution $\nu$ by considering a simple latent distribution $\mu_0$ (usually a standard Gaussian) and to construct a diffeomorphism $\mathcal T_\theta\colon\R^d\to\R^d$ depending on some parameters $\theta$ such that $\nu\approx{\mathcal T_\theta}_\#\mu_0$.
In practice, the diffeomorphism can be approximated by coupling-based neural networks \citep{DSB2016,KD2018}, residual architectures \citep{BGCD2019,CBDJ2029,H2022} or autoregressive flows \citep{CTA2019,DBMP2019,HKLC2018,PPM2017}. 
In this paper, we mainly focus on \emph{continuous normalizing flows} proposed by \cite{CRBD2018,GCBSD2018}, see also \cite{RH2021} for an overview. Here the diffeomorphism $\mathcal T_\theta$ is parameterized as neural ODE.
To this end let $v_\theta\colon\R^d\times\R\to\R^d$ be a neural network with parameters $\theta$ and let 
$z_\theta\colon\R^d\times[0,\tau]\to\R^d$ for fixed $\tau>0$ be the solution of
$\dot z_\theta=v_\theta$, with initial condition $z_\theta(x,0)=x$.
Then, we define $\mathcal T_\theta$ via the solution $z_\theta$ as $\mathcal T_\theta(x)=z_\theta(x,\tau)$.
The density $p_\theta$ of ${\mathcal T_\theta}_\#\mu_0$ can be described by the change-of-variables formula $p_{\theta}(x)=p_0(x)/|\mathrm{det}(\nabla \mathcal T_\theta(x))|$,
where $p_0$ is the density of the latent distribution $\mu_0$. In the case of continuous normalizing flows, it can be shown that the denominator can be computed as
$\log(|\mathrm{det}(\nabla \mathcal T_\theta(x))|)=\ell_\theta(x,\tau)$ with $\partial_t\ell_\theta(x,t)=\mathrm{trace}(\nabla v_\theta(z_\theta(x,t),t))$ and $ \ell_{\theta}(\cdot,0)=0$.
In order to train a normalizing flow, one usually uses the Kullback-Leibler divergence. If $\nu$ is given by a density $q(x)=g(x)/Z_g$, where $Z_g$ is an unknown normalizing constant, then this amounts to the reverse KL loss function
\begin{align}
\mathcal L(\theta)&=\mathrm{KL}({\mathcal T_\theta}_\#\mu_0 , \nu)\\
&=\E_{x\sim\mu_0}[-\log(q(\mathcal T_\theta(x)))+\log(p_\theta(x))]\\
&=\E_{x\sim\mu_0}[-\log(g(\mathcal T_\theta(x)))-\ell_\theta(x,\tau)] + C, 
\end{align}
for some constant $C$ independent of $\theta$.

In order to stabilize and accelerate the training, \cite{OFLR2021} propose to regularize the velocity field $v_\theta$ by its expected squared norm. More precisely, they propose to add the regularizer
$
\mathcal R(\theta)=\tau\int_0^\tau \|v_\theta(z_\theta(x,t),t)\|^2\d t
$
to the loss function. 
This leads to straight trajectories in the ODE such that adaptive solvers only require very few steps to solve them.
Following Theorem~\ref{thm:benamou-brenier}, the authors of \cite{OFLR2021} note that for $\beta>0$ the functional $\mathcal L(\theta)+\beta\mathcal R(\theta)$ has the same minimizer as the functional $\mathcal{L}(\theta)+\beta W_2^2(\mu_0,{\mathcal T_\theta}_\#\mu_0)$, which relates to the JKO scheme as pointed out by \cite{VWTON2023}. 

\section{Neural JKO Scheme}\label{sec:neural_JKO}

In the following, we learn the steps \eqref{eq:JKO_discrete} of the JKO scheme by neural ODEs. While similar schemes were already suggested in several papers \citep{AHS2023,ASM2022,FZTC2022,LCBBR2022,MKLGS2021,VWTON2023,XCX2024}, we are particularly interested in the convergence properties of the corresponding velocity fields.
In Subsection~\ref{sec:geodesic_interpolations}, we introduce the general scheme and derive its properties. Afterwards, in Subsection~\ref{sec:neural_nets}, we describe the corresponding neural network approximation.
Throughout this section, we consider the following assumptions on the objective functional $\F$ and our initialization $\mu_0$. 

\begin{assumption}\label{ass:F}
Let $\F\colon\P_2(\R^d)\to\R\cup\{\infty\}$ be proper, lower semi-continuous with respect to narrow convergence, coercive, $\lambda$-convex along generalized geodesics and bounded from below. Moreover, assume that $\dom(|\partial\F|)\subseteq \P_2^\mathrm{ac}(\R^d)$ and that $\F$ has finite metric derivative $|\partial\F|(\mu_0)<\infty$ at the initialization $\mu_0\in\P_2^\mathrm{ac}(\R^d)$.
\end{assumption}

This assumption is fulfilled for many important divergences and loss functions $\F$. We list some examples in Appendix~\ref{ex:divergences}.
We will later pay particular attention to the reverse Kullback-Leibler divergence
$
\F(\mu)=\mathrm{KL}(\mu,\nu)=\int p(x)\log\left(\frac{p(x)}{q(x)}\right)\d x
$,
where $\nu$ is a fixed target measure and $p$ and $q$ are the densities of $\mu$ and $\nu$ respectively. This functional fulfills Assumption~\ref{ass:F} if $-\log(q)$ is $\lambda$-convex.

We stress the fact that the $\lambda$-convexity property is assumed for some $\lambda\in\mathbb{R}$, which explicitly includes negative values. Consequently, the theoretical results are also applicable for target densities which are not log-concave. 

Additionally, \citep[Lem 9.2.7]{AGS2005} states that the functional $\mathcal G(\mu)=\frac{1}{2\tau}W_2^2(\mu,\mu_\tau^k)+\F(\mu)$ is $(\lambda+\frac1\tau)$-convex along geodesics. In particular, for $\tau<\frac1\lambda$, the functional $\mathcal G$ is strongly convex such that we expect that optimizing it with a generative model is much easier than optimizing $\F$, see also Appendix~\ref{app:nf_collapse} for a discussion how this can prevent mode collapse.

\subsection{Piecewise Geodesic Interpolation}\label{sec:geodesic_interpolations}

In order to represent the JKO scheme by neural ODEs, we first reformulate it based on Theorem~\ref{thm:benamou-brenier}. 
To this end, we insert the dynamic formulation of the Wasserstein distance in the Wasserstein proximal mapping defining the steps in \eqref{eq:JKO_discrete}.
This leads to 
$
\mu_\tau^{k+1}=z_\tau^k(\cdot,\tau)_\#\mu_\tau^k,
$
where
\begin{equation}\label{eq:dynamic_JKO}
(v_{\tau,k},z_{\tau,k})\in\argmin_{\substack{v\colon\R^d\times[0,\tau]\to\R^d\\\dot z(x,t)=v(z(x,t),t),\,z(x,0)=x}} \mathcal{E}_{\mu_\tau^k}(z,v),
\end{equation}
where $\mathcal{E}_{\mu_\tau^k}(z,v)$ is given by the dynamic formulation \eqref{eq:rescaled_BB} as
\begin{equation}
     \frac1{2}\int_{0}^{\tau}\int_{\R^d}\|v(z(x,t),t)\|^2\d\mu_\tau^k(x)\d t\\
+\F(z(\cdot,\tau)_\#\mu_\tau^k).
\end{equation}

Finally, we concatenate the velocity fields of all steps by $v_\tau|_{(k\tau,(k+1)\tau]}=v_{\tau,k}$ and obtain the ODE
$$
\dot z_\tau(x,t)=v_\tau(z_\tau(x,t),t),\qquad z_\tau(x,0)=x.
$$
As a straightforward observation, we obtain that the curve defined by the velocity field $v_\tau$ is the geodesic interpolation between the points from JKO scheme \eqref{eq:JKO_discrete}.
We state a proof in Appendix~\ref{proof:geodesic}.
\begin{corollary}\label{cor:geodesic}
Under Ass.~\ref{ass:F} the following holds true.
\vspace{-.2cm}
\begin{enumerate}
\item[(i)] It holds that 
{\small
$$
W_2^2(\mu_\tau^k,\mu_\tau^{k+1})=\tau\!\int_0^\tau \!\!\int_{\R^d} \|v_{\tau,k}(z_{\tau,k}(x,t),t)\|^2\d \mu_\tau^k(x)\d t,
$$}
i.e., $v_{\tau,k}$ is the optimal velocity field from Theorem~\ref{thm:benamou-brenier}.
\item[(ii)] The curve $\gamma_\tau(t)\coloneqq z(\cdot,t)_\#\mu_0$ fulfills
$
\gamma_\tau(k\tau+t\tau)=((1-t)I+t\mathcal{T}_\tau^k)_\#\mu_\tau^k$ for $t\in[0,1],
$
where $\mathcal{T}_\tau^k$ is the optimal transport map between $\mu_\tau^k$ and $\mu_\tau^{k+1}$.
\item[(iii)] $v_\tau$ and $\gamma_\tau$ solve the continuity equation
$$
\partial_t\gamma_\tau(t)+\nabla\cdot(v_\tau(\cdot,t)\gamma(t))=0.
$$
\end{enumerate}
\end{corollary}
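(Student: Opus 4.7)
The plan is to deduce (i) from the equivalence between the static and dynamic formulations of the Wasserstein proximal step, and then to obtain (ii) and (iii) as direct consequences of Theorem~\ref{thm:benamou-brenier}.

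For (i), note that by construction $\mu_\tau^{k+1}=z_{\tau,k}(\cdot,\tau)_\#\mu_\tau^k$. The identity displayed just above \eqref{eq:dynamic_JKO} asserts that, for any fixed endpoint $\mu$, Benamou--Brenier reduces the inner minimization over $v$ subject to $z(\cdot,\tau)_\#\mu_\tau^k=\mu$ to $\tfrac{1}{2\tau}W_2^2(\mu_\tau^k,\mu)$. Since $(v_{\tau,k},z_{\tau,k})$ minimizes the joint problem \eqref{eq:dynamic_JKO} over both $v$ and the endpoint, $v_{\tau,k}$ in particular must be optimal for the inner problem with endpoint $\mu_\tau^{k+1}$, giving
$$
\tfrac{1}{2}\int_0^\tau\int_{\R^d}\|v_{\tau,k}(z_{\tau,k}(x,t),t)\|^2\,\d\mu_\tau^k(x)\,\d t=\tfrac{1}{2\tau}W_2^2(\mu_\tau^k,\mu_\tau^{k+1}),
$$
which after multiplication by $2\tau$ is (i). Assumption~\ref{ass:F} enters here through the inclusion $\dom(|\partial\F|)\subseteq\P_2^{\mathrm{ac}}(\R^d)$, which via an inductive argument yields $\mu_\tau^k\in\P_2^{\mathrm{ac}}(\R^d)$ for every $k$, so that Brenier's theorem and the associated displacement interpolation are available.

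Granted (i), $(v_{\tau,k},z_{\tau,k})$ is the unique optimal Benamou--Brenier velocity field on $[0,\tau]$ connecting $\mu_\tau^k$ to $\mu_\tau^{k+1}$, and the curve $s\mapsto z_{\tau,k}(\cdot,s)_\#\mu_\tau^k$ is the corresponding constant-speed geodesic. Displacement interpolation identifies this geodesic with $((1-t)I+t\mathcal{T}_\tau^k)_\#\mu_\tau^k$, $t\in[0,1]$, after the time reparametrization $s=t\tau$, which is (ii). Part (iii) then follows by concatenation: the continuity equation holds on each subinterval $(k\tau,(k+1)\tau]$ by Theorem~\ref{thm:benamou-brenier}, and since the endpoints agree at the junction times $k\tau$ by construction, the piecewise relations glue into a weak solution on $(0,\infty)$.

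The main obstacle I anticipate is the inductive verification that every $\mu_\tau^k$ lies in $\P_2^{\mathrm{ac}}(\R^d)$, where Assumption~\ref{ass:F} enters in an essential way; the remainder of the argument amounts to bookkeeping around the Benamou--Brenier identity and carries no further analytic content.
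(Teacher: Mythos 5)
Your proposal is correct and takes essentially the same approach as the paper's proof. The paper argues by taking an arbitrary competitor velocity field $v$ transporting $\mu_\tau^k$ to $\mu_\tau^{k+1}$, noting that the $\F$-term is then identical for $v$ and $v_{\tau,k}$, and concluding that $v_{\tau,k}$ minimizes the kinetic energy; you phrase the same idea as "a joint minimizer of~\eqref{eq:dynamic_JKO} must minimize the inner Benamou--Brenier problem at its own endpoint." Both reduce~(i) to the time-rescaled Benamou--Brenier identity, and both then obtain~(ii) and~(iii) by displacement interpolation and concatenation, so there is no substantive difference.
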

Analogously to Theorem~\ref{thm:existence_gflows_ggd} one can show that also the curves $\gamma_\tau$ are converging locally uniformly to the unique Wasserstein gradient flow (see, e.g., the proof of \citealp[Thm 11.1.6]{AGS2005}). 
In the next subsection, we will approximate the velocity fields $v_{\tau,k}$ by neural networks.
In order to retain the stability of the resulting scheme, the next theorem states that also the velocity fields $v_\tau$ converge strongly towards the velocity field from the Wasserstein gradient flow. The proof is given in Appendix~\ref{proof:convergence}.

\begin{theorem}\label{thm:convergence}
Suppose that Assumption~\ref{ass:F} is fulfilled and
let $(\tau_l)_l\subseteq(0,\infty)$ with $\tau_l\to 0$. Then, $(v_{\tau_l})_l$ converges strongly to the velocity field $\hat v\in L_2(\gamma,\R^d\times[0,T])$ of the Wasserstein gradient flow $\gamma\colon(0,\infty)\to\P_2(\R^d)$ of $\F$ starting in $\mu^0$.
\end{theorem}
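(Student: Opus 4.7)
My plan is to show strong convergence in $L_2(\gamma, \R^d\times[0,T])$ by combining (i) a uniform kinetic-energy bound on the $v_\tau$, (ii) weak convergence along a subsequence of the associated momentum fields to a limit identified as $\hat v$, and (iii) convergence of the kinetic energies, which together with the weak convergence upgrades via a Radon--Riesz type argument to strong convergence; uniqueness of the limit then extends the convergence from the subsequence to the whole sequence.

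For (i), combining Corollary~\ref{cor:geodesic}(i) with the JKO one-step inequality $\F(\mu_\tau^{k+1}) + \tfrac{1}{2\tau}W_2^2(\mu_\tau^k,\mu_\tau^{k+1}) \le \F(\mu_\tau^k)$ and summing over $k$ yields
\[
\int_0^T \|v_\tau\|_{L_2(\gamma_\tau(t))}^2\,dt \;=\; \sum_k \tfrac{W_2^2(\mu_\tau^k,\mu_\tau^{k+1})}{\tau} \;\le\; 2\bigl(\F(\mu_0) - \inf\F\bigr),
\]
uniformly in $\tau$. Together with the narrow and locally uniform convergence $\gamma_{\tau_l} \to \gamma$ (Theorem~\ref{thm:existence_gflows_ggd}, extended from the piecewise-constant to the piecewise-geodesic interpolation as in \cite[Thm.~11.1.6]{AGS2005}), the momentum fields $m_{\tau_l} \coloneqq v_{\tau_l}\gamma_{\tau_l}$ are uniformly bounded in total variation on $[0,T]\times\R^d$. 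Applying the standard lower semicontinuity of the kinetic action under narrow convergence of varying base measures (\cite[Thm.~5.4.4]{AGS2005}) yields a subsequence with $m_{\tau_l} \rightharpoonup v^\star \gamma$ for some $v^\star \in L_2(\gamma, \R^d\times[0,T])$ satisfying
\[
\int_0^T \|v^\star\|_{L_2(\gamma(t))}^2\,dt \;\le\; \liminf_l \int_0^T \|v_{\tau_l}\|_{L_2(\gamma_{\tau_l}(t))}^2\,dt.
\]
Passing to the limit in the continuity equation of Corollary~\ref{cor:geodesic}(iii) shows that $v^\star$ drives $\gamma$. Since each $v_{\tau,k}$ is Benamou--Brenier optimal on its segment, hence lies in the regular tangent space, this property survives in the weak limit, and by uniqueness of the WGF velocity field $v^\star = \hat v$.

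The matching upper bound $\limsup_l \int_0^T \|v_{\tau_l}\|^2\,dt \le \int_0^T \|\hat v\|^2\,dt$ is the delicate part, because the estimate in (i) is loose by a factor of two compared with the WGF energy identity $\F(\mu_0) - \F(\gamma(T)) = \int_0^T \|\hat v\|^2_{L_2(\gamma(t))}\,dt$. To close the gap I would invoke the De~Giorgi variational interpolants $\tilde\mu_\tau(k\tau+s) = \argmin\{\tfrac{1}{2s}W_2^2(\mu,\mu_\tau^k) + \F(\mu)\}$ for $s\in(0,\tau]$, which satisfy the sharp identity
\[
\F(\mu_\tau^k) - \F(\mu_\tau^{k+1}) = \tfrac{1}{2\tau}W_2^2(\mu_\tau^k,\mu_\tau^{k+1}) + \tfrac{1}{2}\int_{k\tau}^{(k+1)\tau}|\partial\F|^2(\tilde\mu_\tau(s))\,ds.
\]
Summing in $k$ and passing to the limit using narrow convergence $\mu_{\tau_l}^{N_l}\to\gamma(T)$, lower semicontinuity of $\F$ and of $|\partial\F|$, and the identity $\|\hat v\|_{L_2(\gamma)} = |\partial\F|(\gamma)$ along the WGF yields the desired $\limsup$ bound. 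Combined with the liminf from the previous step and $v^\star = \hat v$, the kinetic energies converge; together with the weak convergence of $m_{\tau_l}$ the Radon--Riesz argument then upgrades to strong convergence in $L_2(\gamma,\R^d\times[0,T])$.

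The main obstacles are (a) handling convergence of vector fields defined on the varying base measures $\gamma_{\tau_l}$, which forces one to work with momentum fields and the tangent-space identification in order to obtain both weak convergence and lower semicontinuity of the kinetic action; and (b) deriving the sharp upper bound on the kinetic energy via De~Giorgi variational interpolants, since the naive one-step JKO inequality alone is loose by a factor of two and would only yield $\limsup \le 2\int_0^T\|\hat v\|^2\,dt$.
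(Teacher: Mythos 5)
Your overall architecture matches the paper's: first establish weak convergence of $(v_{\tau_l})$ along subsequences to a field that solves the continuity equation for the limit curve $\gamma$, then upgrade to strong convergence via a matching $\limsup$ bound on the kinetic energy, and finally deduce convergence of the whole sequence by sub-subsequence uniqueness. You also correctly locate the crux: the naive summed JKO inequality $\sum_k \tau^{-1}W_2^2(\mu_\tau^k,\mu_\tau^{k+1})\le 2(\F(\mu^0)-\inf\F)$ is loose by exactly a factor of two relative to the WGF energy identity, so a sharper estimate is needed. Where you genuinely diverge from the paper is in how you obtain that sharp $\limsup$ bound: you propose the De~Giorgi variational-interpolant dissipation estimate combined with the energy identity $\F(\mu^0)-\F(\gamma(T))=\int_0^T\|\hat v\|^2_{L_2(\gamma(t))}\d t$ and lower semicontinuity of $\F$ and $|\partial\F|$; the paper instead uses the a~priori error bound $W_2(\tilde\gamma_\tau(t),\gamma(t))\le\tau\,C(\tau,t)$ from \cite[Lem.~9.2.7, Thm.~4.0.9]{AGS2005}, triangle inequality, Jensen, and Benamou--Brenier on the limit curve. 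Both routes are sound; yours requires extra inputs (sharp De~Giorgi inequality for $\lambda$-convex $\F$, narrow convergence of the De~Giorgi interpolants, l.s.c.\ of the metric slope), whereas the paper's is a shorter, more direct computation with the references already in hand.

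One claim you make is not justified and is in general false as stated: that optimality of each $v_{\tau,k}$ ``lies in the regular tangent space, this property survives in the weak limit.'' Tangent-space membership with respect to varying base measures does not automatically pass to weak limits in the $L_2(\gamma_{\tau_l})\to L_2(\gamma)$ sense. Fortunately the claim is superfluous: once you have the weak limit $v^\star$ solving the continuity equation and the $\limsup$ kinetic-energy bound by $\int_0^T\|\hat v\|^2$, the minimality of $\hat v$ among all continuity-equation solutions (\cite[Thm.~8.3.1, Prop.~8.4.5]{AGS2005}) together with lower semicontinuity under weak convergence pins down $\int_0^T\|v^\star\|^2=\int_0^T\|\hat v\|^2$ and hence $v^\star=\hat v$. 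This is exactly the identification the paper uses in Step~2, and it also makes your invocation of a separate Radon--Riesz upgrade unnecessary: in this framework strong convergence is weak convergence plus the $\limsup$ norm bound, and \cite[Thm.~5.4.4(iii)]{AGS2005} then upgrades the bound to an equality automatically. A second small point: the paper does not work with momentum fields directly; instead it proves (Lemma~\ref{lem:weak_limits_coincide}) that the geodesic-interpolation velocity fields $v_\tau$ and the piecewise-constant ones $\tilde v_\tau$ have the same weak limits, so that Theorem~\ref{thm:relaxed_GF} from \cite{AGS2005} (phrased for $\tilde v_\tau$) can be invoked. Your momentum-field formulation is a viable alternative for the compactness step, but you would still need an analogue of that lemma to pass from the staircase interpolation covered by the cited AGS theorem to the geodesic one actually appearing in Theorem~\ref{thm:convergence}.
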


In some cases the limit velocity field $v$ can be stated explicitly, even though its direct computation is intractable. For details, we refer to Appendix~\ref{ex:divergences}.

\subsection{Neural JKO Sampling}\label{sec:neural_nets}
In the following, we learn the velocity fields $v_{\tau,k}$ as neural ODEs in order to sample from a target measure $\nu$ given by the density $q(x)=\frac1{Z_g} g(x)$ with unknown normalization constant $Z_g=\int_{\R^d}g(x)\d x$. 
To this end, we consider the Wasserstein gradient flow with respect to the reverse KL loss function $\F(\mu)=\mathrm{KL}(\mu,\nu)$ which has the unique minimizer $\mu=\nu$. Note that similar derivations were done in \cite{VWTON2023,XCX2024} for the application of generative modeling instead of sampling.

Then, due to~\eqref{eq:dynamic_JKO} the loss function $\mathcal{L}(\theta)$  from the JKO steps for the training of the velocity field $v_\theta$ reads as
$$
\frac1{2}\int_{0}^{\tau}\int_{\R^d}\|v_\theta(z_\theta(x,t),t)\|^2\d\mu_\tau^k(x)\d t+\F(z_\theta(\cdot,\tau)_\#\mu_\tau^k),
$$
where $z_\theta$ is the solution of $\dot z_\theta(x,t)=v_\theta(z_\theta(x,t),t)$ with $z_\theta(x,0)=x$.
Now, following the derivations of continuous normalizing flows, cf.~Section~\ref{sec:CNF}, the second term of $\mathcal L$ can be rewritten (up to an additive constant) as
$$
\F(z_\theta(\cdot,\tau)_\#\mu_\tau^k)\propto\E_{x\sim\mu_\tau^k}[-\log(g(z_\theta(x,\tau)))-\ell_\theta(x,\tau)],
$$
where $\ell_\theta$ solves \smash{$\dot \ell_\theta(x,t)=\mathrm{trace}(\nabla v_\theta(z_\theta(x,t),t))$} with $\ell_\theta(\cdot,0)=0$.
Moreover, we can rewrite the first term of $\mathcal L$ based on 
\begin{align*}
\int_{0}^{\tau}\int_{\R^d}\|v_\theta(z_\theta(x,t),t)\|^2\d\mu_\tau^k(x)\d t=
\E_{x\sim\mu_\tau^k}[\omega_\theta(x,\tau)],
\end{align*}
where $\omega_\theta$ is the solution of $\dot \omega_\theta(x,t)=\|v_\theta(z_\theta(x,t),t)\|^2$.
Hence, we can represent $\mathcal L$ up to an additive constant as
\begin{equation}\label{eq:loss_neural_JKO}
\mathcal L(\theta)=\E_{x\sim\mu_\tau^k}[-\log(g(z_\theta(x,\tau)))-\ell_\theta(x,\tau)+\omega_\theta(x,\tau)],
\end{equation}
where $(z_\theta,\ell_\theta,\omega_\theta)$ solves the ODE system
\begin{equation}\label{eq:neuralJKO_ODE}
\left(\begin{array}{c}\dot z_\theta(x,t)\\\dot \ell_\theta(x,t)\\\dot\omega_\theta(x,t)\end{array}\right)=\left(\begin{array}{c}v_\theta(z_\theta(x,t),t)\\\mathrm{trace}(\nabla v_\theta(z_\theta(x,t),t))\\\|v_\theta(z_\theta(x,t),t)\|^2\end{array}\right),
\end{equation}
with initial conditions $z_\theta(x,0)=x$, $\ell_\theta(x,0)=0$ and $\omega_\theta(x,0)=0$.
In particular, the loss function $\mathcal L$ can be evaluated and differentiated based on samples from $\mu_\tau^k$.
Once the parameters $\theta$ are optimized, we can evaluate the JKO steps in the same way as standard continuous normalizing flows. We summarize training and evaluation of the JKO steps in Algorithm~\ref{alg:training_CNF} and \ref{alg:density_probagation_CNF} in Appendix~\ref{app:algorithms_training_eval}. In practice, the density values are computed and stored in log-space for numerical stability.
Additionally, note that the continuous normalizing flows can be replaced by other normalizing flow architectures, see Appendix~\ref{app:cnf_details} and Remark~\ref{rem:other_architectures} for details.

\section{Importance-Based Rejection Steps}\label{sec:rejection_steps}

While a large number of existing sampling methods rely on Wasserstein gradient flows with respect to some divergences, it is well known that these loss functions are non-convex. This leads to very slow convergence or only convergence to suboptimal local minima.
In particular, if the target distribution is multimodal the modes often do not have the right mass assigned.
In this section, we derive a novel rejection step that corrects such imbalanced mass assignments. Remarkably, we will see in Theorem~\ref{thm:density_propagation_rejection}, that it is possible to access the density after these rejection steps. 
This observation is crucial, since it allows for the iterative combination of several rejection and neural JKO steps, leading to the proposed \textit{importance corrected neural JKO sampling}.

\noindent
\textbf{Importance Sampling}\,\,\,
As a remedy, many sampling algorithms from the literature are based on importance weights, see, e.g., sequential Monte Carlo samplers \citep{DDJ2006} or annealed importance sampling \cite{N2001}. That is, we assign to each generated sample $x_i$ a weight $w_i=\frac{q(x_i)}{p(x_i)}$, where $p$ is some proposal density and $q$ is the density of the target distribution $\nu$. Then, for any $\nu$-integrable function $f\colon\mathbb{R}^d\to\mathbb{R}$ it holds that $\sum_{i=1}^N w_i f(x_i)$ is an unbiased estimator of $\int_{\mathbb{R}^d} f(x) d\nu(x)$.  Note that importance sampling is very sensitive with respect to the proposal $p$ which needs to be designed carefully and problem adapted.

\noindent
\textbf{Rejection Steps}\,\,\,
Inspired by importance sampling, we propose to use importance-based rejection steps.
More precisely, let $\mu$ be a proposal distribution where we can sample from with density $p(x)=f(x)/Z_f$ and denote by $\nu$ the target distribution with density $q(x)=g(x)/Z_g$. In the following, we assume that we have access to the unnormalized densities $f$ and $g$, but not to the normalization constants $Z_f$ and $Z_g$
Then, for a random variable $X\sim \mu$, we now generate a new random variable $\tilde X$ by the following procedure: First, we compute the importance based acceptance probability $\alpha(X)=\min\left\{1,\frac{q(X)}{\tilde cp(X)}\right\}=\min\left\{1,\frac{g(X)}{cf(X)}\right\}$, where $c>0$ is a positive hyperparameter and $\tilde c=c Z_f/Z_g$. Then, we set $\tilde X=X$ with probability $\alpha(X)$ and choose $\tilde X=X'$ otherwise, where $X'\sim \mu$ and $X$ are independent. 

\begin{remark}
This is a one-step approximation of the classical rejection sampling scheme \citep{V1951}, see also \citep{ADDJ2003} for an overview.
More precisely, we arrive at the classical rejection sampling scheme by choosing $\tilde c>\sup_x q(x)/p(x)$ and redo the procedure when $X$ is rejected instead of choosing $\tilde X=X'$.
\end{remark}

Similarly to importance sampling, the rejection sampling algorithm is highly sensitive towards the proposal distribution $p$. 
In particular, it suffers from the curse of dimensionality, in case of a non-tailored proposal $p$.
We will tackle this problem later in the section by choosing $p$ already close to the target density $q$.

The following theorem describes the density of the distribution $\tilde \mu$ of $\tilde X$. Moreover, it ensures that the KL divergence to the target distribution decreases. We include the proof in Appendix~\ref{app:density_propagation_rejection}

\begin{theorem}
\label{thm:density_propagation_rejection}
Let $\tilde \mu$ be the distribution of $\tilde X$. Then, the following holds true.
\begin{enumerate}
\item[(i)] $\tilde \mu$ admits the density $\tilde p$ given by
$
\tilde p(x)=p(x)(\alpha(x)+1-\E[\alpha(X)])$.
In particular, we have $\tilde p(x)=\tilde f(x)/Z_{\tilde f}$ with $\tilde f(x)=f(x)(\alpha(x)+1-\E[\alpha(X)])$.
\item[(ii)] It holds that $\mathrm{KL}(\tilde\mu,\nu)\leq \mathrm{KL}(\mu,\nu)$. 
\end{enumerate}
\end{theorem}

The application of a single \textit{importance-based rejection step} is summarized in
Algorithm~\ref{alg:density_probagation_rejection} and illustrated in Figure~\ref{fig:flowchart_rejection} in the appendix.
Note that the value $\E[\alpha(X)]$ can easily be estimated based on samples during the training phase. Indeed, given $N$ iid copies $X_1,...,X_N$ of $X$, we obtain that $\E[\alpha(X)]\approx \frac1N\sum_{i=1}^N \alpha(X_N)$ is an unbiased estimator fulfilling the error estimate from the following corollary. The proof is a direct consequence of Hoeffding's inequality and given in Appendix~\ref{proof:concentration}.
\begin{corollary}\label{cor:concentration}
Let $X_1,...,X_N$ be iid copies of $X$. Then, it holds
$$
\E\left[\left|\E[\alpha(X)]-\frac1N\sum_{i=1}^N \alpha(X_N)\right|\right]\leq \frac{\sqrt{2\pi}}{\sqrt{N}} \in O\left(\frac{1}{\sqrt{N}}\right).
$$
\end{corollary}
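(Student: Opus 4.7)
The plan is to reduce the statement to Hoeffding's concentration inequality via the standard layer-cake identity $\E[|Z|] = \int_0^\infty P(|Z|\geq t)\,\d t$, applied to the centered empirical mean $Z_N \coloneqq \frac{1}{N}\sum_{i=1}^N \alpha(X_i) - \E[\alpha(X)]$.

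First I would record the key observation that makes Hoeffding applicable: from its very definition $\alpha(x) = \min\{1, g(x)/(cf(x))\}$ one has $\alpha(X_i) \in [0,1]$ almost surely, so each summand is bounded in an interval of length one. Since the $X_i$ are iid, Hoeffding's inequality therefore yields the tail bound
\begin{equation}
P\bigl(|Z_N| \geq t\bigr) \leq 2\exp(-2Nt^2) \qquad \text{for all } t \geq 0.
\end{equation}

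Next, I would apply the layer-cake representation of the first absolute moment to write
\begin{equation}
\E[|Z_N|] = \int_0^\infty P(|Z_N| \geq t)\,\d t \leq \int_0^\infty 2\exp(-2Nt^2)\,\d t,
\end{equation}
and evaluate the Gaussian integral by the substitution $s = t\sqrt{2N}$, obtaining $\int_0^\infty 2\exp(-2Nt^2)\,\d t = \sqrt{\pi/(2N)}$, which is bounded above by $\sqrt{2\pi/N} = \sqrt{2\pi}/\sqrt{N}$. The $O(1/\sqrt{N})$ statement follows immediately.

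I do not expect any real obstacle here: the entire argument is a routine application of Hoeffding combined with the layer-cake formula. The only thing worth being careful about is flagging that $\alpha(X)$ is automatically in $[0,1]$ (so no extra boundedness assumption on $g/f$ is required), and that the constant $\sqrt{2\pi}$ stated in the corollary is a comfortable upper bound on the sharper constant $\sqrt{\pi/2}$ produced by the Gaussian integral.
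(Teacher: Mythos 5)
Your proof is correct and follows the same route as the paper's: the layer-cake identity $\E[|Z_N|]=\int_0^\infty P(|Z_N|\geq t)\,\d t$, Hoeffding's inequality for $[0,1]$-valued summands, and a Gaussian integral. The only (inconsequential) difference is that you invoke the standard sharp form $P(|Z_N|\geq t)\leq 2\exp(-2Nt^2)$, yielding the tighter constant $\sqrt{\pi/2}$, whereas the paper applies the looser tail bound $2\exp(-Nt^2/2)$ that produces exactly the stated $\sqrt{2\pi}$; as you correctly note, your sharper bound is contained in the claimed one.
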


\begin{algorithm}
\begin{algorithmic}
\STATE \textbf{Input:} $\left\{ \begin{array}{l}\text{- samples $x_1^k,...,x_N^k$ of $\mu^k$ with density $p^k$}, \\
        \text{- hyperparameter $c$ (see Remark~\ref{rem:choice_c})}.
        \end{array}
        \right.$
\STATE
\STATE \textbf{Assume:}
$\left\{ \begin{array}{l}\text{- can draw samples from $\mu^k$},\\
\text{- can evaluate the unnormalized density $g$.}
\end{array}\right.$
\STATE
\FOR{$i=1,...,N$}
\STATE 1. Compute $\alpha_k(x_i^k)=\min\left\{1,\frac{g(x_i^k)}{cp^k(x_i^k)}\right\}$.
\STATE 2. Draw $u$ uniformly from $[0,1]$ and $x'$ from $\mu_\tau^k$.
\STATE 3. Set $x_i^{k+1}=
\begin{cases}x_i^k, &\text{if }u\leq\alpha(x_i^k),\\
                  x'&\text{if }u>\alpha(x_i^k).\end{cases}$
\STATE 4. Compute $\alpha_k(x_i^{k+1})=\min\left\{1,\frac{g(x_i^{k+1})}{cp^k(x_i^{k+1})}\right\}$.
\STATE 5. Define the density value
\small{
$$p^{k+1}(x_i^{k+1})=p^k(x_i^{k+1})(\alpha_k(x_i^{k+1})+1-\E[\alpha_k(X_k)]).$$
}
\ENDFOR
\STATE
\STATE \textbf{Output:}
$\left\{ \begin{array}{l}\text{- Samples $x_1^{k+1},...,x_N^{k+1}$ of $\mu_\tau^{k+1}$.}\\
\text{- Density values  $\left\{p_\tau^{k+1}(x_i^{k+1})\right\}_{i=1}^N$.}
\end{array}\right.$
\end{algorithmic}
\caption{Sampling and density propagation for importance-based rejection steps}
\label{alg:density_probagation_rejection}
\end{algorithm}

\begin{remark}[Choice of $c$]\label{rem:choice_c}
We choose the hyperparameter $c$ such that a constant ratio $r>0$ of the samples will be resampled, i.e., that $\E[\alpha(X)]\approx 1-r$. To this end, we assume that we are given samples $x_1,...,x_N$ from $X$ and approximate
$$
\E[\alpha(X)]\approx\frac1N\sum_{i=1}^N\alpha(x_i)=\frac1N\sum_{i=1}^N\min\left\{1,\frac{g(x_i)}{cp(x_i)}\right\}.
$$
Note that the right side of this formula depends monotonically on $c$ such that we can find $c>0$ such that $\E[\alpha(X)]=1-r$ by a bisection search. In our numerical experiments, we set $r=0.2$.
We summarize the choice of $c$ in Algorithm~\ref{alg:training_rejection} in Appendix~\ref{app:algorithms_training_eval}.
\end{remark}

\begin{algorithm}
	\caption{
Importance corrected neural JKO sampling
}
\begin{algorithmic}
 \STATE \textbf{Input:}
 {
    $\left\{\begin{array}{ll}
         \text{\tabitem unnormalized target density $g$},   \\[0.2em] 
         \text{\tabitem initial measure $\mu^0$ with density $p_0$,}  \\[0.2em]
         \text{\tabitem Number $N\in\mathbb{N}$ of samples,} \\[0.2em]
         \text{\tabitem Number $K\in\mathbb{N}$ of total steps.}
    \end{array}
    \right. $
 }
  \vspace{0.1cm}
 \STATE \textbf{Output:} Sample generator $\{x^{i}\}_{i=1}^N \sim \hat{\nu}\approx \nu=g\cdot\lambda$.
 \vspace{0.1cm}
 \STATE Let $x_1^0,\ldots, x_N^0\sim\mu^0$.
 \FOR{$k=1,\ldots,K$}
    \STATE Define $\mu^k$ with density $p^k$ and draw samples $x_1^{k},\ldots,x_N^{k} \sim \mu^k$ either by
       \STATE \tabitem 
       \parbox[t]{200pt}{%
        \textit{neural JKO step} by solving the ODE \eqref{eq:neuralJKO_ODE} or \strut}
       \STATE \tabitem 
       \parbox[t]{200pt}{
 \textit{importance based rejection} by Alg.~\ref{alg:density_probagation_rejection}.
       }
    \STATE (details in Alg.~\ref{alg:training_CNF}, \ref{alg:density_probagation_CNF} resp. Alg.~\ref{alg:density_probagation_rejection}, \ref{alg:training_rejection})
 \ENDFOR
 \STATE Set $\hat{\nu} = \mu^K$ with density $p^K$.
    \label{alg:importance_corrected_JKO}
\end{algorithmic}
\end{algorithm}

\noindent
\textbf{Neural JKO Sampling with Importance Correction}\,\,\, Finally, we combine the neural JKO scheme from the previous section with our rejection steps to obtain a sampling algorithm.
More precisely, we start with a simple latent distribution $\mu_0$ with known density and which we can sample from.
In our numerical experiments this will be a standard Gaussian distribution.
Now, we iteratively generate distributions $\mu_k$, $k=1,...,K$ by applying either neural JKO steps as described in Algorithm~\ref{alg:training_CNF} and \ref{alg:density_probagation_CNF} or importance-based rejection steps as described in Algorithm~\ref{alg:training_rejection} and \ref{alg:density_probagation_rejection}.
We call the resulting model an importance corrected neural JKO model, which we summarize in Algorithm~\ref{alg:importance_corrected_JKO}.
During the sampling process we can maintain the density values $p^k(x)$ of the density $p^k$ of $\mu_k$ for the generated samples by the Algorithms~\ref{alg:density_probagation_CNF} and \ref{alg:density_probagation_rejection}.
Moreover, we can also use Theorem~\ref{thm:density_propagation_rejection} to evaluate the density $p^k$ at some arbitrary point $x\in\R^d$. We outline this density evaluation process in Appendix~\ref{app:density_evaluation}.

\begin{table*}
\caption{Energy distance. We run each method $5$ times and state the average value and corresponding standard deviations. The rightmost column shows the reference sampling error, i.e., the lower bound magnitude of the average energy distance between two sets of different samples drawn from the ground truth. A smaller energy distance indicates a better result.}
\centering
\scalebox{.55}{
\begin{tabular}{ccrrrrrrcc}
\toprule
  & \phantom{.}   &  \multicolumn{6}{c}{Sampler}  &  \phantom{.} & \\
  \cmidrule{3-8}  
Distribution& &\multicolumn{1}{c}{MALA}&\multicolumn{1}{c}{HMC}&\multicolumn{1}{c}{DDS}&\multicolumn{1}{c}{CRAFT}&\multicolumn{1}{c}{Neural JKO}&\multicolumn{1}{c}{Neural JKO IC (\textbf{ours})}& &Sampling Error\\
\midrule
Mustache&&$\num{4.6e-2}\pm\num{1.6e-3}$&$\num{1.7e-2}\pm\num{4.3e-4}$&$\num{6.9e-2}\pm\num{1.8e-3}$&$\num{9.2e-2}\pm\num{9.9e-3}$&$\num{1.8e-2}\pm \num{2.0e-3}$&$ \best  \num{2.9e-3}\pm \num{4.4e-4}$&&$\num{8.6e-5}$\\
shifted 8 Modes&&$\num{5.3e-3}\pm\num{4.9e-4}$&$\num{4.1e-5}\pm\num{3.3e-5}$&$\num{1.2e-2}\pm\num{4.1e-3}$&$\num{5.2e-2}\pm\num{1.1e-2}$&$\num{1.3e-1}\pm\num{3.8e-3}$&$\best  \num{1.2e-5} \pm \num{5.1e-6} $&&$\num{2.6e-5}$\\
shifted 8 Peaky&&$\num{1.3e-1}\pm\num{3.2e-3}$&$\num{1.2e-1}\pm\num{2.4e-3}$&$\num{1.1e-2}\pm\num{3.4e-3}$&$\num{5.2e-2}\pm\num{2.2e-2}$&$\num{1.3e-1}\pm\num{2.2e-3}$&$\best  \num{3.4e-5} \pm \num{8.2e-6} $&&$\num{2.4e-5}$\\
Funnel&&$\num{1.2e-1}\pm\num{3.1e-3}$&$\best  \num{3.1e-3} \pm \num{3.2e-4} $&$\num{2.6e-1}\pm\num{2.6e-2}$&$\num{7.4e-2}\pm\num{2.8e-3}$&$\num{4.6e-2}\pm\num{1.6e-3}$&$\num{1.4e-2}\pm\num{8.2e-4}$&&$\num{3.4e-4}$\\
GMM-10&&$\num{1.2e-2}\pm\num{5.5e-3}$&$\num{1.2e-2}\pm\num{5.2e-3}$&$\num{3.7e-3}\pm\num{1.6e-3}$&$\num{1.8e-1}\pm\num{6.6e-2}$&$\num{1.1e-2}\pm\num{5.6e-3}$&$\best  \num{5.3e-5} \pm \num{1.7e-5} $&&$\num{4.6e-5}$\\
GMM-20&&$\num{9.1e-3}\pm\num{2.8e-3}$&$\num{9.1e-3}\pm\num{2.7e-3}$&$\num{5.0e-3}\pm\num{1.5e-3}$&$\num{5.4e-1}\pm\num{1.4e-1}$&$\num{1.0e-2}\pm\num{2.8e-3}$&$\best  \num{1.1e-4} \pm \num{3.4e-5} $&&$\num{6.4e-5}$\\
GMM-50&&$\num{2.4e-2}\pm\num{7.5e-3}$&$\num{2.4e-2}\pm\num{7.5e-3}$&$\num{2.3e-2}\pm\num{1.1e-2}$&$\num[retain-zero-exponent]{1.8e+0}\pm\num{1.7e-1}$&$\num{2.7e-2}\pm\num{7.8e-3}$&$\best  \num{1.0e-4} \pm \num{4.6e-5} $&&$\num{1.1e-4}$\\
GMM-100&&$\num{3.6e-2}\pm\num{1.6e-2}$&$\num{3.7e-2}\pm\num{1.7e-2}$&$\num{3.9e-2}\pm\num{2.1e-2}$&$\num{2.8e+1}\pm \num{1.0e-1}$&$\num{4.7e-2}\pm\num{2.2e-2}$&$\best  \num{6.0e-4} \pm \num{3.4e-4} $&&$\num{1.5e-4}$\\
GMM-200&&$\num{6.4e-2}\pm\num{2.1e-2}$&$\num{6.6e-2}\pm\num{1.9e-2}$&$\num{9.8e-2}\pm\num{3.1e-2}$&$\num[retain-zero-exponent]{3.9e+0}\pm\num{1.6e-1}$&$\num{8.9e-2}\pm\num{2.7e-2}$&$\best  \num{3.3e-3} \pm \num{1.9e-3} $&&$\num{2.0e-4}$\\
\bottomrule
\end{tabular}
}
\label{tab:energy_distance}
\end{table*}

\begin{table*}
\centering
\caption{Estimated $\log(Z)$. We run each method $5$ times and state the average value and corresponding standard deviations. The DDS values for LGCP are from \cite{VGD2023}. Higher values of $\log(Z)$ estimates correspond to better results.
}
\scalebox{.55}{
\begin{tabular}{ccrrrrcc}
\toprule
  & \phantom{.}   &  \multicolumn{4}{c}{Sampler}  &  \phantom{.} & \\
  \cmidrule{3-6}  
Distribution& &\multicolumn{1}{c}{DDS}&\multicolumn{1}{c}{CRAFT}&\multicolumn{1}{c}{Neural JKO}&\multicolumn{1}{c}{Neural JKO IC (\textbf{ours})}& &Ground Truth\\
\midrule
Mustache&&$\num{-1.5e-1}\pm\num{2.7e-2}$&$\num{-6.5e-2}\pm\num{5.5e-2}$&$\num{-3.0e-2}\pm\num{2.6e-3}$&$\best \num{-7.3e-3 }\pm  \num{8.2e-4 }$&&$0$\\
shifted 8 Modes&&$-\num{5.7e-2}\pm\num{2.0e-2}$&$-\num{1.2e-2}\pm\num{1.4e-3}$&$-\num{3.4e-1}\pm\num{3.1e-3}$&$\best +\num{5.1e-6 }\pm \num{2.4e-3 }$&&$0$\\
shifted 8 Peaky&&$-\num{1.2e-1}\pm\num{2.2e-2}$&$-\num{1.8e-3}\pm\num{2.6e-3}$&$-\num{3.5e-1}\pm\num{3.1e-3}$&$\best -\num{2.1e-3 }\pm \num{3.2e-3 }$&&$0$\\
Funnel&&$-\num{1.8e-1}\pm\num{6.8e-2}$&$-\num{1.2e-1}\pm\num{7.9e-3}$&$-\num{1.4e-1}\pm\num{1.6e-3}$&$\best- \num{7.1e-3 }\pm \num{1.9e-3 }$&&$0$\\
GMM-10&&$\num{-2.3e-1}\pm\num{1.0e-1}$&$\num{-8.5e-1}\pm\num{1.7e-1}$&$\num{-4.3e-1}\pm\num{5.1e-2}$&$\best + \num{3.5e-3 }\pm \num{2.0e-3 }$&&$0$\\
GMM-20&&$\num{-5.1e-1}\pm\num{6.0e-2}$&$-\num{1.5e+0}\pm\num{1.7e-1}$&$-\num{6.3e-1}\pm\num{2.7e-2}$&$\best + \num{6.4e-3 }\pm \num{3.8e-3 }$&&$0$\\
GMM-50&&$-\num{1.3e+0}\pm\num{3.3e-1}$&$-\num{2.3e+0}\pm\num{1.5e-3}$&$-\num{9.3e-1}\pm\num{4.6e-2}$&$\best +\num{1.1e-2 }\pm \num{3.9e-3 }$&&$0$\\
GMM-100&&$-\num{3.0e+0}\pm\num{7.3e-1}$&$-\num{2.3e+0}\pm\num{8.6e-2}$&$-\num{1.8e+0}\pm\num{9.6e-2}$&$\best -\num{3.9e-2 }\pm \num{7.8e-3 }$&&$0$\\
GMM-200&&$-\num{9.4e+0}\pm\num{7.2e-1}$&$-\num{6.3e+0}\pm\num{1.5e-1}$&$-\num{5.2e+0}\pm\num{2.5e-1}$&$\best -\num{5.6e-2 }\pm \num{1.3e-2 }$&&$0$\\
LGCP&&$503.0\pm\num{7.7e-1}$&$507.6\pm\num{3.2e-1}$&$499.9\pm\num{1.7e-1}$&$\best 508.2\pm \num{1.0e-1 }$&&not available\\
\bottomrule
\end{tabular}
}
\label{tab:logZ}
\end{table*}

\begin{remark}[Runtime Limitations]\label{rem:time_constraint}
The sampling time of our importance corrected neural JKO sampling depends exponentially on the number of rejection steps since in each rejection step we resample a constant fraction of the samples. However, due to the moderate exponential base of $1+r$ we will see in the numerical part that we are able to perform a significant number of rejection steps in a tractable time.
\end{remark}

In our numerics, we build our importance corrected neural JKO model by first applying $n_1\in\mathbb{N}$ neural JKO steps followed by $n_2\in\mathbb{N}$ blocks consisting out of one neural JKO step and three importance-based rejection steps, where $n_1$ and $n_2$ are hyperparameters given for each model separately in Table~\ref{tab:hyper}. For the neural JKO steps, we choose an initial step size $\tau_0>0$ as a hyper-parameter and then increase the step size exponentially by $\tau_{k+1}=4\tau_k$. Note that one could alternatively use adaptive step sizes similar to \cite{XCX2024}. However, for our setting, we found that the simple step size rule is sufficient.

\section{Numerical Results}\label{sec:numerics}

We compare our method with classical Monte Carlo samplers like a Metropolis adjusted Langevin sampling (MALA) and Hamiltonian Monte Carlo (HMC), see e.g., \cite{B2017,RT1996}.
Additionally, we compare with two recent deep-learning based sampling algorithms, namely denoising diffusion samplers (DDS, \citealp{VGD2023}) and continual repeated annealed flow transport Monte Carlo (CRAFT, \citealp{MARD2022}). We evaluate all methods on a set of common test distributions which is described in detail in Appendix~\ref{sec:test_distributions}.  Moreover, we report the error measures for our \textit{importance corrected neural JKO} sampler (neural JKO IC), see Appendix~\ref{app:implementation_details} for implementation details. Additionally, we emphasize the importance of the rejection steps by reporting values for the same a neural JKO scheme without rejection steps (neural JKO).

For evaluating the quality of our results, we use two different metrics. First, we evaluate the energy distance \citep{S2002}. It is a kernel metric which can be evaluated purely based on two sets of samples from the model and the ground truth. Moreover it encodes the geometry of the space such that a slight perturbation of the samples only leads to a slight change in the energy distance.
Second, we estimate the log-normalizing constant which is equivalent to approximating the reverse KL loss of the model. A higher estimate of the log-normalizing constant corresponds to a smaller KL divergence between generated and target distribution and therefore to a higher similarity of the two measures. Since this requires the density of the model, this approach is not applicable for MALA and HMC.

The results are given in Table~\ref{tab:energy_distance} and \ref{tab:logZ}. Our importance corrected neural JKO sampling significantly outperforms the comparison for all test distributions. In particular, we observe that for shifted 8 modes, shifted 8 peaky and GMM-$d$ the energy distance between neural JKO IC and ground truth samples is in the same order of magnitude as the energy distance between to different sets of ground truth samples. This implies that the distribution generated by neural JKO IC is indistinguishable from the target distribution in the energy distance. For these examples, the $\log(Z)$ esitmate is sometimes slightly larger than the ground truth, which can be explained by numerical effects, see Remark~\ref{rem:biasa} for a detailed discussion.
We point out to a precise description of the metrics, the implementation details and additional experiments and figures in Appendix~\ref{app:detailed_experiments}.

\section{Conclusions}\label{sec:conclusions}
\noindent
\textbf{Methodology}\,\,\,
We proposed a novel and expressive generative method that enables the efficient and accurate sampling from a prescribed unnormalized target density which is empirically confirmed in numerical examples.
To this end, we combine \textit{local sampling steps}, relying on piecewise geodesic interpolations of the JKO scheme realized by CNFs, and \textit{non-local rejection and resampling steps} based on importance weights. 
Since the proposal of the rejection step is generated by the model itself, 
they do not suffer from the curse of dimensionality as opposed to classical variants of rejection sampling.
The proposed approach provides the  advantage that we can draw \textit{independent} samples while correcting imbalanced mode weights, iteratively refine the current approximation and evaluate the density of the generated distribution.
 This is a consequence of the \textit{density value propagation through CNFs} and Theorem~\ref{thm:density_propagation_rejection} and enables possible further post-processing steps that require density evaluations of the approximated sample process.
 
\noindent
\textbf{Outlook}\,\,\,
  Our method allows for the pointwise access to the approximated target density and the log normalization constant. These quantities can be used for the error monitoring during the training and hence provide guidelines for the adaptive design of the emulator in terms of CNF -or rejection/resampling steps and  provide a straightforward stopping criterion. 
  The importance-based rejection steps can also be used in other domains like fine-tuning of score-based diffusion models for downstream tasks, see \citet{DPVH2025}.

\noindent
\textbf{Limitations}\,\,\,
In the situation, when the emulator is realized through a stack of underlying rejection/resampling steps, the sample generation process time is negatively affected, see Remark~\ref{rem:time_constraint}. In order to resolve the drawback we plan to utilize diffusion models for the sample generation. This is part of ongoing and future work by the authors.
Finally, the use of continuous normalizing flows comes with computational challenges, which we discuss in detail in Appendix~\ref{app:cnf_details}.

\section*{Impact Statement}

This paper studies a general problem from the field of statistics and machine learning. There are many potential societal consequences of our work, none which we feel must be specifically highlighted here.

\section*{Acknowledgements}
JH acknowledges funding by the German Research Foundation (DFG) within the Walter-Benjamin Programme with project number 530824055 and by the EPSRC programme grant \textit{The Mathematics of Deep Learning} with reference EP/V026259/1.
RG acknowledges support by the DFG MATH+ project AA5-5 (was EF1-25) -
\textit{Wasserstein Gradient Flows for Generalised Transport in Bayesian Inversion}.

\bibliography{references}
\bibliographystyle{icml2025}

\newpage
\appendix
\onecolumn
\section{Background on Wasserstein Spaces}\label{app:backgrounds}

We give some theoretical background on Wasserstein gradient flows extending Section~\ref{sec:prelim}. In what follows we refer to $L_2(\mu):=L_2(\mu,\mathbb{R}^d)$ as the set of square integrable measurable functions on $\mathbb{R}^d$ with respect to a given measure $\mu\in\mathcal{P}(\mathbb{R}^d)$.

For an absolutely continuous curve $\gamma$, there exists a unique minimal norm solution $v_t$ of the continuity equation \eqref{eq:continuity_equation} in the sense that any solution $\tilde v$ of \eqref{eq:continuity_equation} fulfills $\|\tilde v(\cdot,t)\|_{L_2(\gamma(t))}\geq \|v(\cdot,t)\|_{L_2(\gamma(t))}$ for almost every $t$. This is the unique solution of \eqref{eq:continuity_equation} such that $v(\cdot,t)$ is contained in the regular tangent space
\begin{align} \label{tan_reg}
    {\T}_{\mu}\mathcal P_2(\R^d)
    &\coloneqq 
      \overline{
      \left\{ \lambda (T- \text{Id}): (\text{Id} ,T)_{\#} \mu \in \Gamma^{\text{opt}} (\mu , T_{\#} \mu), \; \lambda >0
      \right\} }^{L_2(\mu)}.
\end{align}
An absolutely continuous curve $\gamma\colon(0,\infty)\to\P_2(\R^d)$ 
with velocity field $v_t\in \T_{\gamma (t)}\P_2(\R^d)$ is a \emph{Wasserstein gradient flow with respect to} $\F\colon\P_2(\R^d)\to(-\infty,\infty]$ 
if 
\begin{equation}
v_t\in -\partial \F(\gamma(t)),\quad \text{for a.e. } t>0,
\end{equation}
where $\partial \F(\mu)$ denotes the reduced Fr\'echet subdiffential at $\mu$ defined as
{\small
\begin{equation}\label{eq:subdiff}
 \partial \F(\mu) \coloneqq \left\{ \xi \in L_2(\mu):   \F(\nu) - \F(\mu)
    \ge 
    \inf_{ \pi \in \Gamma^{\opt}(\mu,\nu)}
    \int\limits_{\R^{d}\times\mathbb{R}^d}
    \langle \xi(x), y - x \rangle
    \, \d \pi (x, y)
    + o(W_2(\mu,\nu)) \; \forall \nu \in \P_2(\R^d) \right\}.
\end{equation}
}
The norm $\|v_t\|_{L_2(\gamma(t))}$ of the velocity field of a Wasserstein gradient flow coincides for almost every $t$ with the metric derivative
$$
|\partial\F|(\mu)=\inf_{\nu\to\mu}\frac{\F(\mu)-\F(\nu)}{W_2(\mu,\nu)}.
$$

For the convergence result from Theorem~\ref{thm:existence_gflows_ggd}, we need two more definitions.
First, we need some convexity assumption.
For $\lambda \in \mathbb R$, $\F\colon\P_2(\R^d)\to\R\cup\{+\infty\}$ is called 
\emph{$\lambda$-convex along geodesics} if, for every 
$\mu, \nu \in \dom \F \coloneqq \{\mu \in  \P_2(\R^d): \F(\mu) < \infty\}$,
there exists at least one geodesics $\gamma \colon [0, 1] \to \P_2(\R^d)$ 
between $\mu$ and $\nu$ such that
\begin{equation*}
    \F(\gamma(t)) 
    \le
    (1-t) \, \F(\mu) + t \, \F(\nu) 
    - \tfrac{\lambda}{2} \, t (1-t) \,  W_2^2(\mu, \nu), 
    \qquad  t \in [0,1].
\end{equation*}
To ensure uniqueness and convergence of the JKO scheme, a slightly stronger condition, namely being
\emph{$\lambda$-convex along generalized geodesics} 
will be in general needed.
Based on the set of three-plans with base 
$\sigma \in \P_2(\R^d)$ given by
\begin{equation*}
    \Gamma_\sigma(\mu,\nu)
    \coloneqq
    \bigl\{ 
    \zb{\alpha} \in \P_2(\R^d \times \R^d \times \R^d)
    :
    (\pi_1)_\# \zb \alpha = \sigma,
    (\pi_2)_\# \zb \alpha = \mu,
    (\pi_3)_\# \zb \alpha = \nu
    \bigr\},
\end{equation*}
the so-called
\emph{generalized geodesics} $\gamma \colon [0, \epsilon] \to \P_2(\R^d)$
joining $\mu$ and $\nu$ (with base $\sigma$) is defined as
\begin{equation} \label{ggd}
  \gamma(t) 
  \coloneqq 
  \bigl(
  (1-\tfrac t \epsilon) \pi_2 
  + \tfrac t \epsilon \pi_3
  \bigr)_\# \zb \alpha,
  \qquad
  t \in [0, \epsilon],
\end{equation}
where $\zb \alpha \in \Gamma_\sigma (\mu,\nu)$
with
$(\pi_{1,2})_\# \zb{\alpha} \in \Gamma^{\opt}(\sigma,\mu)$
and
$(\pi_{1,3})_\# \zb{\alpha} \in \Gamma^{\opt}(\sigma,\nu)$, see Definition~9.2.2 in \cite{AGS2005}.
The plan $\zb{\alpha}$ may be interpreted as transport from $\mu$ to $\nu$ via $\sigma$.
Then
a function $\F\colon \mathcal P_2(\R^d) \to (-\infty,\infty]$ is called
\emph{$\lambda$-convex along generalized geodesics} (see \citealp[Definition~9.2.4]{AGS2005}), 
if for every $\sigma,\mu,\nu \in \dom \F$,
there exists at least one generalized geodesics $\gamma \colon [0,1] \to \P_2(\R^d)$ 
related to some $\zb{\alpha}$ in \eqref{ggd} such that
\begin{equation} \label{eq:gg}
  \F(\gamma(t))
  \le 
  (1-t) \, \F(\mu) 
  + t \, \F(\nu) 
  - \tfrac\lambda2 \, t(1-t) \, W_{\zb \alpha}^2(\mu,\nu), 
  \qquad t \in [0,1],
\end{equation}
where 
\begin{equation*}
W_{\zb{\alpha}}^2 (\mu, \nu)
\coloneqq \int_{\R^d \times \R^d \times \R^d} \|y - z\|_2^2 \, \d \zb{\alpha}(x,y,z).
\end{equation*}
Every function being $\lambda$-convex along generalized geodesics 
is also $\lambda$-convex along geodesics
since generalized geodesics with base $\sigma = \mu$ are actual geodesics.
Second, a $\lambda$-convex functional $\F\colon\P_2(\R^d)\to\R\cup\{+\infty\}$ is called coercive, if there exists some $r>0$ such that
$$
\inf\{\F(\mu):\mu\in\P_2(\R^d), \int_{\R^d}\|x\|^2\d\mu(x)\leq r\} > -\infty,
$$
see \citep[eq. (11.2.1b)]{AGS2005}.
In particular, any functional which is bounded from below is coercive.

If $\F$ is proper, lower semicontinuous, coercive and $\lambda$-convex along generalized geodesics, one can show that the $\prox_{\tau \F}(\mu)$ is non-empty and unique for $\tau$ small enough (see \citealp[page~295]{AGS2005}).

\section{Proofs and Examples from Section~\ref{sec:neural_JKO}}

\subsection{Examples fulfilling Assumption~\ref{ass:F}}\label{ex:divergences}
Assumption~\ref{ass:F} is fulfilled for many important divergences and loss functions $\F$. We list some examples below. While it is straightforward to check that they are proper, lower semicontinuous, coercive and bounded from below, the convexity is non-trivial. However, the conditions under which these functionals are $\lambda$-convex along generalized geodesics are well investigated in \citep[Section 9.3]{AGS2005}.
In the following we denote the Lebesgue measure as $\mu_{\mathrm{Leb}}$. For a measure $\mu\in\mathcal{P}^{\mathrm{ac}}(\mathbb{R}^d)$, we denote by $\mathrm{d}\mu/\mathrm{d}\mu_{\mathrm{Leb}}$ the Lebesgue density of $\mu$ if it exists.

\begin{itemize}
\item[-] Let $\nu\in\P_2^\mathrm{ac}(\R^d)$ with Lebesgue density $q$ and define the \emph{forward Kullback-Leibler (KL) loss function}
$$
\F(\mu)=\mathrm{KL}(\nu, \mu)\coloneqq
\begin{cases}
\int_{\R^d} q(x)\log\left(\frac{q(x)}{p(x)}\right)\d x,&\text{if }\exists \mathrm{d}\mu/\mathrm{d}\mu_{\mathrm{Leb}} =p \text{ and } \mathrm{d}\nu/\mathrm{d}\mu_{\mathrm{Leb}} = q,\\
+\infty&\text{otherwise.}
\end{cases}
$$
By \citep[Proposition 9.3.9]{AGS2005}, we obtain that $\F$ fulfills Assumption~\ref{ass:F}. 
\item[-] We can also derive a functional, be reversing the arguments in the KL divergence. Then, we arrive at the \emph{reverse KL loss function} given by
$$
\F(\mu)=\mathrm{KL}(\mu , \nu)\coloneqq
\begin{cases}
\int_{\R^d} p(x)\log\left(\frac{p(x)}{q(x)}\right)\d x,&\text{if }\exists \mathrm{d}\mu/\mathrm{d}\mu_{\mathrm{Leb}} =p \text{ and } \mathrm{d}\nu/\mathrm{d}\mu_{\mathrm{Leb}} = q,\\
+\infty&\text{otherwise.}
\end{cases}
$$
Given that $-\log(q)$ is $\lambda$-convex, we obtain that $\F$ fulfills Assumption~\ref{ass:F}, see \citep[Proposition 9.3.2]{AGS2005}.
\item[-] Finally, we can define $\F$ based on the Jensen-Shannon divergence. This results into the function
$$
\F(\mu)=\mathrm{JS}(\mu,\nu)\coloneqq \frac12 \left[\mathrm{KL}\left(\mu,\tfrac12(\mu+\nu)\right)+\mathrm{KL}\left(\nu,\tfrac12(\mu+\nu)\right) \right].
$$
Assume $\mu$ and $\nu$ admit Lebesgue densities $p$ and $q$ respectively. Then, combining the two previous statements, this fulfills Assumption~\ref{ass:F} whenever $-\log(p)$ and $-\log(q)$ are $\lambda$-convex.
\end{itemize}

All of these functionals are integrals of a smooth Lagrangian functional, i.e., there exists some  smooth $F\colon\R^d\times\R\times\R^d\to\R$ such that
\begin{equation}\label{eq:lagrangian}
\F(\mu)=
\begin{cases}
\int_{\R^d}F(x,p(x),\nabla p(x))\d x,&\text{if }\exists \mathrm{d}\mu/\mathrm{d}\mu_{\mathrm{Leb}} =p,\\
\infty&\text{otherwise.}
\end{cases}
\end{equation}
In this case, the limit velocity field from Theorem~\ref{thm:existence_gflows_ggd} (which appears as a limit in Theorem~\ref{thm:convergence}) can be expressed analytically as the gradient of the so-called variational derivative of $\F$, which is given by
$$
\frac{\delta \F}{\delta\gamma(t)}(x)=-\nabla\cdot \partial_3 F(x,p(x),\nabla p(x))+\partial_2 F(x,p(x),\nabla p(x))
$$
where $\partial_i F$ is the derivative of $F$ with respect to the $i$-th argument and $\gamma$ is the Wasserstein gradient flow, see \citep[Example 11.1.2]{AGS2005}.
For the above divergence functionals, computing these terms lead to a (weighted) difference of the Stein scores of the input measure $\gamma(t)$ and the target measure $\nu$ which is a nice link to score-based methods. More precisely, denoting the density of $\gamma(t)$ by $p_t$, we obtain the following limiting velocity fields.
\begin{itemize}
    \item[-] For the forward KL loss function we have that $F(x,y,z)=q(x)\log\left(\frac{q(x)}{y}\right)$.
Thus, we have that $\frac{\delta \F}{\delta\mu}(x)=-\frac{q(x)}{p(x)}$.
Hence, the velocity field $v(\cdot,t)=\nabla \frac{\delta \F}{\delta\gamma(t)}$ is given by
$$
v(x,t)=\frac{q(x)}{p_t(x)}\frac{\nabla p_t(x)}{p_t(x)}-\frac{q(x)}{p_t(x)}\frac{\nabla q(x)}{q(x)}=\frac{q(x)}{p_t(x)}\left(\nabla\log (p_t(x))-\nabla \log (q(x))\right).
$$
\item[-] For the reverse KL loss function the Lagrangian is given by $F(x,y,z)=y\log\left(\frac{y}{q(x)}\right)$.
Thus, we have that
$
\frac{\delta \F}{\delta\mu}(x)=\log\left(\frac{p(x)}{q(x)}\right)+1=\log(p(x))-\log(q(x))+1
$.
Hence, the velocity field $v(\cdot,t)=\nabla \frac{\delta \F}{\delta\gamma(t)}$ is given by
$$
v(x,t)=\nabla(\log ( p_t)(x))-\nabla(\log ( q)(x)).
$$
\item[-] For the Jensen-Shannon divergence the Lagrangian is given by $F(x,y,z)=\frac12\left(y\log\left(\frac{2y}{y+q(x)}\right)+q(x)\left(\frac{2q(x)}{y+q(x)}\right)\right)$.
Thus, we have that
$
\frac{\delta\F}{\delta\mu}(x)=\frac12\log\left(\frac{p(x)}{p(x)+q(x)}\right).
$
Hence, the velocity field $v(\cdot,t)=\nabla \frac{\delta \F}{\delta\gamma(t)}$ is given by
\begin{align}
v(x,t)&=\frac12\frac{p_t(x)+q(x)}{p_t(x)}\frac{\nabla p_t(x) (p_t(x)+q(x))-p_t(x)(\nabla p_t(x)+\nabla q(x))}{(p_t(x)+q(x))^2}\\
&=\frac12\frac{\nabla p_t(x)q(x)-p_t(x)\nabla q(x)}{(p_t(x)+q(x))p_t(x)} \\
&=\frac{q(x)}{2(p_t(x)+q(x))}\left[\frac{\nabla p_t(x)}{p_t(x)}-\frac{\nabla q(x)}{q(x)}\right]\\
&=\frac{q(x)}{2(p_t(x)+q(x))}\left[\nabla(\log ( p_t)(x))-\nabla (\log ( q)(x))\right].
\end{align}
\end{itemize}
Note that computing the score $\nabla \log (p_t)$ of the current approximation is usually intractable, such that these limits cannot be inserted into a neural ODE directly.

\subsection{Proof of Corollary~\ref{cor:geodesic}}\label{proof:geodesic}
Using similar arguments as in \cite{AHS2023,MKLGS2021,OFLR2021,XCX2024}, let $v\colon\R^d\times[0,\tau]\to\R^d$ such that the solution of $$\dot z(x,t)=v(z(x,t),t),\quad  z(x,0)=x,$$ fulfills $z(\cdot,\tau)_\#\mu_\tau^k=\mu_\tau^{k+1}$.
Since $(v_{\tau,k},z_{\tau,k})$ is a minimizer of \eqref{eq:dynamic_JKO}, we obtain that
\begin{align}
\phantom{.}&\phantom{=}\,\,\,\frac12\int_0^\tau \int_{\R^d} \|v_{\tau,k}(z_{\tau,k}(x,t),t)\|^2\d \mu_\tau^k(x)\d t+\F(z_{\tau,k}(\cdot,\tau)_\#\mu_\tau^k)\\
&\leq\frac12\int_0^\tau \int_{\R^d} \|v(z(x,t),t)\|^2\d \mu_\tau^k(x)\d t+\F(z(\cdot,\tau)_\#\mu_\tau^k).
\end{align}
Observing that $\F(z_{\tau,k}(\cdot,\tau)_\#\mu_\tau^k)=\F(z(\cdot,\tau)_\#\mu_\tau^k)=\F(\mu_{\tau}^{k+1})$, we obtain that
$$
\tau\int_0^\tau \int_{\R^d} \|v_{\tau,k}(z_{\tau,k}(x,t),t)\|^2\d \mu_\tau^k(x)\leq\tau\int_0^\tau \int_{\R^d} \|v(z(x,t),t)\|^2\d \mu_\tau^k(x)\d t.
$$
Since $v$ was chose arbitrary, we obtain that $v_{\tau,k}$ is the optimal velocity field from the theorem of Benamou-Brenier, which now directly implies part (ii) and (iii).\hfill$\Box$

\subsection{Proof of Theorem~\ref{thm:convergence}}\label{proof:convergence}
In order to prove convergence of the velocity fields $v_\tau$, we first introduce some notations. To this end, let $\mathcal{T}_\tau^k$ be the optimal transport maps between $\mu_\tau^k$ and $\mu_\tau^{k+1}$ and define by $v_\tau^k=(\mathcal{T}_\tau^k-I)/\tau$ the corresponding discrete velocity fields. 
Then, the velocity fields $v_\tau$ can be expressed as
\begin{equation}\label{eq:interpolated_velocity_field}
v_\tau(x,k\tau+t\tau)=v_\tau^k(((1-t)I+t\mathcal{T}_\tau^k)^{-1}(x)).
\end{equation}
Further, we denote the piece-wise constant concatenation of the discrete velocity fields by
$$
\tilde v_\tau(x,k\tau+t\tau)=v_\tau^k,\quad t\in(0,1).
$$

Note that for any $\tau$ it holds that $v_\tau\in L_2(\gamma_\tau,\R^d\times[0,T])$ and $\tilde v_\tau\in L_2(\tilde \gamma_\tau,\R^d\times[0,T])$.
To derive limits of these velocity fields, we recall the notion of convergence from \citep[Definition 5.4.3]{AGS2005} allowing that the iterates are not defined on the same space. In this paper, we stick to square integrable measurable functions defined on finite dimensional domains, which slightly simplifies the definition.
\begin{definition}\label{def:conv}
Let $\Omega\subseteq\R^d$ be a measurable domain, assume that $\mu_k\in\P_2(\Omega)$ converges weakly to $\mu\in\P_2(\Omega)$ and let $f_k\in L_2(\mu_k,\Omega)$ and $f\in L_2(\mu,\Omega)$. Then, we say that $f_k$ converges weakly to $f$, if
$$
\int_\Omega \langle \phi(x), f_k(x)\rangle \d\mu_k(x)\to \int_\Omega \langle \phi(x), f(x)\rangle \d\mu(x),\quad\text{as }k\to\infty
$$
for all test functions $\phi\in C_c^\infty(\Omega)$.
We say that $f_k$ converges strongly to $f$ if
\begin{equation}\label{eq:norm_inequality}
\limsup\limits_{k\to\infty}\|f_k\|_{L_2(\mu_k,\Omega)}\leq \|f\|_{L_2(\mu,\Omega)}.
\end{equation}
\end{definition}
Note that \citep[Theorem 5.4.4 (iii)]{AGS2005} implies that formula \eqref{eq:norm_inequality} is fulfilled with equality for any strongly convergent sequence $f_k$ to $f$.
Moreover it is known from the literature that subsequences of the piece-wise constant velocity admit weak limits.

\begin{theorem}[{\citealp[Theorem~11.1.6]{AGS2005}}]\label{thm:relaxed_GF}
Suppose that Assumption~\ref{ass:F} is fulfilled for $\mathcal{F}$. Then, for any $\mu^0\in \dom(\F)$ and any sequence $(\tau_l)_l\subset (0,\infty)$, there exists a subsequence (again denoted by $(\tau_l)_l$) such that
\begin{itemize}
\item[-] The piece-wise constant curve $\tilde \gamma_{\tau_l}(t)$ narrowly converges to some limit curve $\hat \gamma(t)$ for all $t\in[0,\infty)$.
\item[-] The velocity field $\tilde v_{\tau_l}\in L_2(\tilde \gamma_{\tau_l}(t),\R^d\times[0,T])$ weakly converges to some limit $\hat v\in L_2(\hat\gamma(t),\R^d\times[0,T])$ according to Definition~\ref{def:conv} for any $T>0$.
\item[-] The limit $\hat v$ fulfills the continuity equation with respect to $\hat \gamma$, i.e.,
$$
\partial_t\hat\gamma(t)+\nabla\cdot(\hat v_{\tau_l}(\cdot,t)\hat \gamma(t))=0.
$$
\end{itemize}
\end{theorem}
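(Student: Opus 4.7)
The plan is to deduce strong convergence in the sense of Definition~\ref{def:conv} from three ingredients: narrow convergence $\gamma_{\tau_l}\to\gamma$, weak convergence $v_{\tau_l}\rightharpoonup\hat v$, and the norm bound $\limsup_l\|v_{\tau_l}\|_{L_2(\gamma_{\tau_l})}\le\|\hat v\|_{L_2(\gamma)}$. Theorem~\ref{thm:relaxed_GF} already delivers the first two for the piecewise constant approximants $(\tilde\gamma_\tau,\tilde v_\tau)$ along a subsequence, and under Assumption~\ref{ass:F} Theorem~\ref{thm:existence_gflows_ggd} identifies the narrow limit with the unique Wasserstein gradient flow $\gamma$ and $\hat v$ as its velocity field. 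The remaining work is to transfer these convergences to $(\gamma_\tau, v_\tau)$ and to upgrade weak to strong by matching norms via Corollary~\ref{cor:geodesic}(i) and the Wasserstein gradient flow energy dissipation identity.

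\textbf{Transfer to the geodesic interpolation.} Within each interval $[k\tau,(k+1)\tau)$ one has $W_2(\gamma_\tau(t),\tilde\gamma_\tau(t))\le W_2(\mu_\tau^k,\mu_\tau^{k+1})$, whose maximum over $k$ vanishes as $\tau\to 0$ because $\sum_k W_2^2(\mu_\tau^k,\mu_\tau^{k+1})/\tau$ is uniformly bounded (see the next paragraph). Hence $\gamma_{\tau_l}\to\gamma$ narrowly along the same subsequence. For the velocities, the representation \eqref{eq:interpolated_velocity_field} together with the change of variables $y=(1-s/\tau)x+(s/\tau)\mathcal T_\tau^k(x)$ rewrites the duality of $v_\tau$ against a test function $\phi\in C_c^\infty(\R^d\times[0,T])$ with Lipschitz constant $L$ in terms of the piecewise constant $v_\tau^k=(\mathcal T_\tau^k-\Id)/\tau$, and yields
\begin{equation}
\left|\int_0^T\!\!\int\phi\cdot v_\tau\,\d\gamma_\tau\,\d t-\int_0^T\!\!\int\phi\cdot\tilde v_\tau\,\d\tilde\gamma_\tau\,\d t\right|\le L\sum_{k}\int_0^\tau\tfrac{s}{\tau}\!\int|\mathcal T_\tau^k(x)-x||v_\tau^k(x)|\,\d\mu_\tau^k(x)\,\d s=\tfrac{L\tau}{2}\|v_\tau\|^2_{L_2(\gamma_\tau)}.
\end{equation}
Thanks to the uniform norm bound proved next, the right-hand side is $O(\tau)$, and so $v_{\tau_l}\rightharpoonup\hat v$ weakly in the sense of Definition~\ref{def:conv}.

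\textbf{Norm matching via Euler-Lagrange.} Corollary~\ref{cor:geodesic}(i) gives $\|v_\tau\|^2_{L_2(\gamma_\tau,\R^d\times[0,T])}=\sum_{k=0}^{K-1}W_2^2(\mu_\tau^k,\mu_\tau^{k+1})/\tau$ with $K=\lfloor T/\tau\rfloor$. The Euler-Lagrange equation of the $k$-th proximal step places $\xi_k\coloneqq (S_k-\Id)/\tau$ inside $\partial\F(\mu_\tau^{k+1})$, where $S_k$ is the optimal transport map from $\mu_\tau^{k+1}$ back to $\mu_\tau^k$. Feeding $\xi_k$ into the $\lambda$-convex-along-generalized-geodesics enhancement of \eqref{eq:subdiff} (which holds under Assumption~\ref{ass:F}) yields the sharp discrete dissipation
\begin{equation}
\left(\tfrac1\tau+\tfrac{\lambda}{2}\right)W_2^2(\mu_\tau^k,\mu_\tau^{k+1})\le\F(\mu_\tau^k)-\F(\mu_\tau^{k+1}),
\end{equation}
which telescopes to $(1+\tau\lambda/2)\|v_\tau\|^2_{L_2(\gamma_\tau)}\le\F(\mu^0)-\F(\mu_\tau^K)$. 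Combining the narrow lower semicontinuity of $\F$, the convergence $\mu_{\tau_l}^{K_l}\to\gamma(T)$ narrowly, and the Wasserstein gradient flow energy dissipation identity $\|\hat v\|^2_{L_2(\gamma)}=\F(\mu^0)-\F(\gamma(T))$ from \cite{AGS2005} gives $\limsup_l\|v_{\tau_l}\|^2_{L_2(\gamma_{\tau_l})}\le\|\hat v\|^2_{L_2(\gamma)}$.

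\textbf{Conclusion and main obstacle.} Narrow convergence of $\gamma_{\tau_l}$, weak convergence of $v_{\tau_l}$, and the $\limsup$-matching of norms together yield strong convergence along the chosen subsequence by Definition~\ref{def:conv}. Uniqueness of the Wasserstein gradient flow forces every subsequence of $(v_{\tau_l})_l$ to admit a further subsequence converging strongly to the same $\hat v$, so the full sequence converges strongly. The main obstacle is the sharp dissipation above: the naive comparison $\mu=\mu_\tau^k$ inside the proximal minimization gives only $W_2^2/(2\tau)\le\F(\mu_\tau^k)-\F(\mu_\tau^{k+1})$ and a spurious factor of two in the limit, so the Euler-Lagrange relation plus $\lambda$-convexity along generalized geodesics is essential to close the gap and explains why Assumption~\ref{ass:F} is used in its full strength.
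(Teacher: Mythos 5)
The statement you were asked to prove is Theorem~\ref{thm:relaxed_GF}, which is a verbatim restatement of \cite[Theorem~11.1.6]{AGS2005}: a compactness result asserting narrow convergence of the piecewise-constant interpolants $\tilde\gamma_{\tau_l}$, weak convergence of their discrete velocity fields $\tilde v_{\tau_l}$ along a subsequence, and that the limit pair satisfies the continuity equation. The paper does not re-prove this (it is imported from AGS), and your proposal does not prove it either: you invoke Theorem~\ref{thm:relaxed_GF} as a black box in your very first paragraph, so whatever you establish downstream cannot be Theorem~\ref{thm:relaxed_GF} itself. What you have actually argued is Theorem~\ref{thm:convergence}, namely the strong $L_2$-convergence of the geodesic-interpolation velocity fields $v_{\tau_l}$ to the gradient-flow velocity $\hat v$. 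That is a different (and stronger, sequence-wide) statement.

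Taking your write-up as an unsolicited alternative proof of Theorem~\ref{thm:convergence}: the transfer of weak convergence from $\tilde v_\tau$ to $v_\tau$ via the Lipschitz change-of-variables estimate is essentially Lemma~\ref{lem:weak_limits_coincide}, and your bound $\tfrac{L\tau}{2}\|v_\tau\|^2_{L_2(\gamma_\tau)}$ matches the paper's $O(\tau)$ rate. Where you genuinely diverge is in the norm-matching step: the paper (Step~1 of Appendix~\ref{proof:convergence}) controls $W_2(\tilde\gamma_\tau(t),\gamma(t))\le\tau C$ quantitatively, inserts the triangle inequality, and compares Benamou--Brenier costs along $\gamma$; you instead use the Euler--Lagrange inclusion $(S_k-\Id)/\tau\in\partial\F(\mu_\tau^{k+1})$ together with the $\lambda$-convex subdifferential inequality to get the sharp per-step dissipation $(\tfrac1\tau+\tfrac\lambda2)W_2^2(\mu_\tau^k,\mu_\tau^{k+1})\le\F(\mu_\tau^k)-\F(\mu_\tau^{k+1})$, telescope, and match against the energy dissipation equality $\int_0^T\|\hat v(\cdot,t)\|^2\,\d t=\F(\mu^0)-\F(\gamma(T))$. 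This is a clean variational route, and you correctly observe that the naive comparison $\sigma=\mu_\tau^k$ in the proximal definition only yields $1/(2\tau)$ and would lose a factor of two in the limit.

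However, there is a gap in your conclusion. Definition~\ref{def:conv} requires, for strong convergence to $\hat v$, both weak convergence to $\hat v$ and $\limsup_l\|v_{\tau_l}\|\le\|\hat v\|$. Your Theorem~\ref{thm:relaxed_GF} only provides weak convergence to some $\tilde v$ satisfying the continuity equation; Theorem~\ref{thm:existence_gflows_ggd} identifies the limit curve but says nothing about which representative velocity field is obtained. You must separately argue $\tilde v=\hat v$. The argument that closes this (and is precisely Step~2 of the paper's proof) is the minimality of the tangent velocity: since $\tilde v$ solves the continuity equation along $\gamma$, $\|\tilde v\|_{L_2(\gamma)}\ge\|\hat v\|_{L_2(\gamma)}$; combined with weak lower semicontinuity, $\liminf_l\|v_{\tau_l}\|\ge\|\tilde v\|\ge\|\hat v\|\ge\limsup_l\|v_{\tau_l}\|$, forcing equality throughout and hence $\tilde v=\hat v$. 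Without this you have matched norms against $\hat v$ while only having weak convergence to an a priori different $\tilde v$, which does not yield strong convergence in the sense of Definition~\ref{def:conv}.
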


In order to show the desired result, there remain the following questions, which we answer in the rest of this section:
\begin{itemize}
\item[-] Does Theorem~\ref{thm:relaxed_GF} also hold for the velocity fields $v_{\tau_l}$ defined in \eqref{eq:interpolated_velocity_field} belonging to the geodesic interpolations?
\item[-] Can we show strong convergence for the whole sequence $v_{\tau_l}$?
\item[-] Does the limit $\hat v$ have the norm-minimizing property that $\|\hat v(\cdot,t)\|_{L_2(\hat \gamma(t))}=|\partial\F|(\hat \gamma(t))$?
\end{itemize}

To address the first of these questions, we show that weak limits of $\tilde v_\tau$ and $v_\tau$ coincide. The proof is a straightforward computation. A similar statement in a slightly different setting was proven in \citep[Section 8.3]{S2015}.

\begin{lemma}\label{lem:weak_limits_coincide}
Suppose that Assumption~\ref{ass:F} is fulfilled and let $(\tau_l)_l\subseteq (0,\infty)$ be a sequence with $\tau_l\to 0$ as $l\to\infty$ such that $\tilde v_{\tau_l}\in L_2(\tilde \gamma_{\tau_l},\R^d\times[0,T])$ converges weakly to some $\hat v\in L_2(\hat \gamma,\R^d\times[0,T])$. Then, also $v_{\tau_l}\in L_2(\gamma_{\tau_l},\R^d\times[0,T])$ converges weakly to $\hat v$.
\end{lemma}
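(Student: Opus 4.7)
The plan is to reduce the lemma to a direct calculation exploiting the explicit relation between the piece-wise constant velocity $\tilde v_\tau$ and the geodesic-interpolated velocity $v_\tau$. Since weak convergence in the sense of Definition~\ref{def:conv} requires testing against $\phi\in C_c^\infty(\R^d\times[0,T])$, I will estimate the difference
\begin{equation}
D_\tau(\phi)\coloneqq\int_0^T\!\!\int_{\R^d}\langle\phi(x,s),v_\tau(x,s)\rangle\,\d\gamma_\tau(s)(x)\,\d s-\int_0^T\!\!\int_{\R^d}\langle\phi(x,s),\tilde v_\tau(x,s)\rangle\,\d\tilde\gamma_\tau(s)(x)\,\d s,
\end{equation}
and show $D_{\tau_l}(\phi)\to 0$. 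Together with the hypothesis that $\tilde v_{\tau_l}\to \hat v$ weakly, this yields the claim.

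On each slab $s=k\tau+t\tau$ with $t\in[0,1)$, Corollary~\ref{cor:geodesic}(ii) gives $\gamma_\tau(s)=((1-t)I+t\mathcal{T}_\tau^k)_\#\mu_\tau^k$, while by \eqref{eq:interpolated_velocity_field} $v_\tau(\cdot,s)\circ((1-t)I+t\mathcal{T}_\tau^k)=v_\tau^k$ on $\supp(\mu_\tau^k)$. Pushing forward and subtracting the piece-wise constant part gives, for each $s$,
\begin{equation}
\int_{\R^d}\langle\phi(\cdot,s),v_\tau\rangle\d\gamma_\tau(s)-\int_{\R^d}\langle\phi(\cdot,s),\tilde v_\tau\rangle\d\tilde\gamma_\tau(s)=\int_{\R^d}\bigl\langle\phi(y+t(\mathcal{T}_\tau^k(y)-y),s)-\phi(y,s),v_\tau^k(y)\bigr\rangle\d\mu_\tau^k(y).
\end{equation}
Using $\phi\in C_c^\infty$, the Lipschitz bound $|\phi(y+t(\mathcal{T}_\tau^k(y)-y),s)-\phi(y,s)|\le t\|\nabla_x\phi\|_\infty|\mathcal{T}_\tau^k(y)-y|$ and the identity $\mathcal{T}_\tau^k-I=\tau v_\tau^k$ yield a pointwise bound by $\tau t\|\nabla_x\phi\|_\infty\,|v_\tau^k(y)|^2$.

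Integrating in $s$ over $[k\tau,(k+1)\tau]$ produces a factor $\tau\int_0^1 t\,\d t=\tau/2$, so
\begin{equation}
|D_\tau(\phi)|\le\tfrac{\tau}{2}\|\nabla_x\phi\|_\infty\sum_{k}\tau\int_{\R^d}|v_\tau^k(y)|^2\d\mu_\tau^k(y).
\end{equation}
By Corollary~\ref{cor:geodesic}(i), the inner sum equals $\sum_k W_2^2(\mu_\tau^k,\mu_\tau^{k+1})/\tau=\|\tilde v_\tau\|_{L_2(\tilde\gamma_\tau,\R^d\times[0,T])}^2$, which is uniformly bounded in $\tau$ by a standard a priori estimate for minimizing movements under Assumption~\ref{ass:F} (see \citealp[Thm.~11.1.6]{AGS2005}, which underlies Theorem~\ref{thm:relaxed_GF}). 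Hence $|D_{\tau_l}(\phi)|=O(\tau_l)\to 0$, completing the argument.

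The only nontrivial step is the a priori energy bound on $\sum_k\tau\int|v_\tau^k|^2\d\mu_\tau^k$, but this is precisely the discrete energy estimate guaranteeing that the piece-wise constant velocities $\tilde v_\tau$ belong to a bounded set of $L_2(\tilde\gamma_\tau,\R^d\times[0,T])$; since Theorem~\ref{thm:relaxed_GF} already presupposes weak $L_2$-convergence of $\tilde v_{\tau_l}$, this bound is automatic along the subsequence at hand. Everything else is an elementary change of variables plus a Taylor estimate on the test function.
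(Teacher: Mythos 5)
Your proof is correct and follows essentially the same route as the paper's: test against $\phi\in C_c^\infty$, change variables slab-by-slab via the geodesic interpolation, estimate the integrand by the Lipschitz constant of $\phi$ together with $\mathcal T_\tau^k-I=\tau v_\tau^k$, and bound the resulting sum $\sum_k W_2^2(\mu_\tau^k,\mu_\tau^{k+1})$ by the telescoping minimizing-movements inequality $\tfrac1{2\tau}W_2^2(\mu_\tau^k,\mu_\tau^{k+1})\le\F(\mu_\tau^k)-\F(\mu_\tau^{k+1})$. The only cosmetic differences are that you apply Cauchy--Schwarz pointwise rather than as an $L_2$-Hölder inequality, and that you cite the a priori energy bound rather than telescoping it explicitly; note that your secondary remark "since Theorem~\ref{thm:relaxed_GF} presupposes weak convergence, the bound is automatic" is unnecessary (and not obviously valid in the varying-base-measure setting of Definition~\ref{def:conv}), so it is cleaner to rely solely on the explicit energy estimate as the paper does.
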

\begin{proof}
Let $\phi\in C_c^\infty(\R^d\times[0,T])$. In particular, $\phi$ is Lipschitz continuous with some Lipschitz constant $L<\infty$.
Then, it holds that
\begin{align*}
\phantom{.}&\phantom{\leq}\hspace{0.38em}\left|\int_{\R^d\times[0,T]} \langle\phi,v_{\tau_l}\rangle \d\gamma_{\tau_l}-\int_{\R^d\times[0,T]} \langle\phi,v\rangle \d\hat\gamma\right|\\
&\leq \left|\int_{\R^d\times[0,T]} \langle\phi,v_{\tau_l}\rangle \d\gamma_{\tau_l}-\int_{\R^d\times[0,T]} \langle\phi,\tilde v_{\tau_l}\rangle \d\tilde\gamma_{\tau_l}\right|+\left|\int_{\R^d\times[0,T]} \langle\phi,\tilde v_{\tau_l}\rangle \d\tilde \gamma_{\tau_l}-\int_{\R^d\times[0,T]} \langle\phi,v\rangle \d\hat\gamma\right|.
\end{align*}
Since $\tilde v_{\tau_l}$ converges weakly to $\hat v$, the second term converges to zero as $l\to\infty$. Thus in order to show that also $v_{\tau_l}$ converges weakly to $\hat v$, it remains to show that also the first term converges to zero.
By using the notations $T_l=\min\{k\tau_l:k\tau_l\geq T,k\in\Z_{\geq0}\}$ and $K_l=\lceil\tfrac{T}{\tau_l}\rceil=\tfrac{T_l}{\tau_l}$, we can estimate
\begin{align*}
&\phantom{=} \hspace{0.38em}\left|\int_{\R^d\times[0,T]} \langle\phi,v_{\tau_l}\rangle \d\gamma_{\tau_l}-\int_{\R^d\times[0,T]} \langle\phi,\tilde v_{\tau_l}\rangle \d\tilde\gamma_{\tau_l}\right|\\
&=\left|\int_0^T \left(\int_{\R^d} \langle\phi(x,t),v_{\tau_l}(x,t)\rangle \d\gamma_{\tau_l}(t)(x)-\int_{\R^d} \langle\phi(x,t),\tilde v_{\tau_l}(x,t)\rangle \d\tilde\gamma_{\tau_l}(t)(x)\right)\d t\right|\\
&
\leq \int_0^{T_l}\left|\int_{\R^d} \langle\phi(x,t),v_{\tau_l}(x,t)\rangle \d\gamma_{\tau_l}(t)(x)-\int_{\R^d} \langle\phi(x,t),\tilde v_{\tau_l}(x,t)\rangle \d\tilde\gamma_{\tau_l}(t)(x)\right|\d t 
\\
&\leq \sum_{k=0}^{K_l-1}\int_{k\tau_l}^{(k+1)\tau_l}
\left|
\int_{\R^d} \langle\phi(x,t),v_{\tau_l}(x,t)\rangle \d\gamma_{\tau_l}(t)(x)-\int_{\R^d} \langle\phi(x,t),\tilde v_{\tau_l}(x,t)\rangle \d\tilde\gamma_{\tau_l}(t)(x)\right|\d t\\
&=\tau_l\sum_{k=0}^{K_l-1}\int_{0}^{1}
\left|\int_{\R^d} \langle\phi(x,k\tau_l+t\tau_l),v_{\tau_l}(x,k\tau_l+t\tau_l)\rangle \d\gamma_{\tau_l}(k\tau_l+t\tau_l)(x)\right.\\
&\qquad \qquad \quad \left. -\int_{\R^d} \langle\phi(x,k\tau_l+t\tau_l),\tilde v_{\tau_l}(x,k\tau_l+t\tau_l)\rangle \d\tilde\gamma_{\tau_l}(k\tau_l+t\tau_l)(x)\right|\d t.
\end{align*}
By denoting with $\mathcal{T}_{\tau_l}^k$ the optimal transport map from $\mu_{\tau_l}^k$ to $\mu_{\tau_l}^{k+1}$, we have for $t\in(0,1)$ that 
$$\gamma_{\tau_l}(k\tau_l+t\tau_l)=((1-t)I+t\mathcal{T}_{\tau_l}^k)_\#\mu_{\tau_l}^k,\quad  \tilde \gamma_{\tau_l}(k\tau_l+t\tau_l)=\mu_{\tau_l}^k,$$
and the velocity fields satisfy 
$$v_{\tau_l}(x,k\tau_l+t\tau_l)=v_{\tau_l}^k(((1-t)I+t\mathcal{T}_{\tau_l}^k)^{-1}(x)), \quad \tilde v_{\tau_l}(x,k\tau_l+t\tau_l)=v_{\tau_l}^k(x).$$ 
Then, the above term becomes
\begin{align*}
&\quad\tau_l\sum_{k=0}^{K_l-1}\int_{0}^{1}\left|\int_{\R^d} \langle\phi(x,k\tau_l+t\tau_l),v_{\tau_l}^k(((1-t)I+t\mathcal{T}_{\tau_l}^k)^{-1}(x))\rangle \d((1-t)I+t\mathcal{T}_{\tau_l}^k)_\#\mu_{\tau_l}^k(x)\right.\\
&\qquad \qquad \quad \left.-\int_{\R^d} \langle\phi(x,k\tau_l+t\tau_l),v_{\tau_l}^k(x)\rangle \d\mu_{\tau_l}^k(x)\right|\d t\\
&=\tau_l\sum_{k=0}^{K_l-1}\int_{0}^{1}\left|\int_{\R^d} \langle\phi((1-t)x+t\mathcal{T}_{\tau_l}^k(x)),k\tau_l+t\tau_l),v_{\tau_l}^k(x)\rangle \d\mu_{\tau_l}^k(x)\right.\\
&\qquad \qquad \quad \left.-\int_{\R^d} \langle\phi(x,k\tau_l+t\tau_l),v_{\tau_l}^k(x)\rangle \d\mu_{\tau_l}^k(x)\right|\d t\\
&\leq\tau_l\sum_{k=0}^{K_l-1}\int_{0}^{1}\int_{\R^d} \left|\langle\phi((1-t)x+t\mathcal{T}_{\tau_l}^k(x)),k\tau_l+t\tau_l)-\phi(x,k\tau_l+t\tau_l),v_{\tau_l}^k(x)\rangle\right| \d\mu_{\tau_l}^k(x)\d t.
\end{align*}
Using Hölders' inequality and  we obtain that this is smaller or equal than
{\scriptsize
\begin{align*}
\tau_l\sum_{k=0}^{K_l-1}\int_{0}^{1}\left(\int_{\R^d} \|\phi((1-t)x+t\mathcal{T}_{\tau_l}^k(x)),k\tau_l+t\tau_l)-\phi(x,k\tau_l+t\tau_l)\|^2\d \mu_{\tau_l}^k(x) \int_{\R^d}\|v_{\tau_l}^k(x)\|^2 \d\mu_{\tau_l}^k(x)\right)^{1/2}\d t
\end{align*}}
By the Lipschitz continuity of $\phi$ and inserting the definition of $v_{\tau_l}^k=\frac{\mathcal{T}_{\tau_l}^k-I}{\tau_l}$ this is smaller or equal than
\begin{align}
\phantom{.}&\phantom{=}\hspace{0.38em}\tau_l\sum_{k=0}^{K_l-1}\int_{0}^1\left(t^2\frac{L^2}{\tau_l^2}\int_{\R^d} \|\mathcal{T}_{\tau_l}^k(x)-x\|^2\d \mu_{\tau_l}^k(x) \int_{\R^d}\|\mathcal{T}_{\tau_l}^k(x)-x\|^2 \d\mu_{\tau_l}^k(x)\right)^{1/2}\d t\\
&=L\sum_{k=0}^{K_l-1}\left(\int_0^1t\d t \right)\left(\int_{\R^d}\|\mathcal{T}_{\tau_l}^k(x)-x\|^2 \d\mu_{\tau_l}^k(x)\right)\\
&=\frac{L}{2}\sum_{k=0}^{K_l-1}W_2^2(\mu_{\tau_l}^k,\mu_{\tau_l}^{k+1}) \\
&\leq \frac{L}{2}\sum_{k=0}^{\infty}W_2^2(\mu_{\tau_l}^k,\mu_{\tau_l}^{k+1})\label{eq:estimate}
\end{align}
Finally, we have by the definition of the minimizing movements scheme that
$$
\frac{1}{2\tau_l}W_2^2(\mu_{\tau_l}^k,\mu_{\tau_l}^{k+1})\leq \F(\mu_{\tau_l}^k)-\F(\mu_{\tau_l}^{k+1}).
$$
Summing up for $k=0,1,...$ we finally arrive at the bound
$$
\left|\int_{\R^d\times[0,T]} \langle\phi,v_{\tau_l}\rangle \d\gamma_{\tau_l}-\int_{\R^d\times[0,T]} \langle\phi,v\rangle \d\hat\gamma\right| \leq 
\tau_lL\left(\F(\mu^0)-\inf_{\mu\in\P_2(\R^d)}\F(\mu)\right).
$$
Since $\F$ is bounded from below the upper bound converges to zero as $l \to \infty$.
This concludes the proof.
\end{proof}

Finally, we employ the previous results to show Theorem~\ref{thm:convergence} from the main part of the paper. That is, we show that for any $(\tau_l)_l\subseteq(0,\infty)$ with $\tau_l\to 0$ the whole  $v_{\tau_l}$ converges strongly to $v$.

\begin{theorem}[Theorem~\ref{thm:convergence}]
Suppose that Assumption~\ref{ass:F} is fulfilled and
let $(\tau_l)_l\subseteq(0,\infty)$ with $\tau_l\to 0$. Then, $(v_{\tau_l})_l$ converges strongly to the velocity field $\hat v\in L_2(\gamma,\R^d\times[0,T])$ of the Wasserstein gradient flow $\gamma\colon(0,\infty)\to\P_2(\R^d)$ of $\F$ starting in $\mu^0$.
\end{theorem}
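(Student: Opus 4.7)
The plan is to combine the subsequential weak convergence provided by Theorem~\ref{thm:relaxed_GF} with a sharp energy dissipation bound in order to upgrade weak to strong convergence, and then to use uniqueness of the Wasserstein gradient flow to pass from subsequences to the full sequence.

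First, I fix $T>0$. Given any subsequence of $(\tau_l)$, Theorem~\ref{thm:relaxed_GF} extracts a further subsequence (still denoted $(\tau_l)$) along which $\tilde\gamma_{\tau_l}$ converges narrowly and $\tilde v_{\tau_l}$ converges weakly (in the sense of Definition~\ref{def:conv}) to some $\hat v\in L_2(\hat\gamma,\R^d\times[0,T])$. Under Assumption~\ref{ass:F}, Theorem~\ref{thm:existence_gflows_ggd} identifies the limit curve $\hat\gamma$ with the unique Wasserstein gradient flow $\gamma$ of $\F$ starting at $\mu^0$, and standard gradient-flow theory identifies $\hat v$ with its minimal-norm velocity field, which satisfies $\|\hat v(\cdot,t)\|_{L_2(\gamma(t))}=|\partial\F|(\gamma(t))$ for a.e.\ $t$. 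Lemma~\ref{lem:weak_limits_coincide} then transfers this weak convergence from the piecewise constant fields $\tilde v_{\tau_l}$ to the geodesic interpolations $v_{\tau_l}$.

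Next, I would compute the $L_2$-norm of $v_{\tau_l}$ explicitly. Using the change of variables from Corollary~\ref{cor:geodesic}(ii) on each geodesic sub-arc $[k\tau_l,(k+1)\tau_l]$ together with $v_{\tau_l}^k=(\mathcal T_{\tau_l}^k-I)/\tau_l$ gives
\begin{align}
\|v_{\tau_l}\|_{L_2(\gamma_{\tau_l},\R^d\times[0,T])}^2
=\sum_{k=0}^{K_l-1}\frac{W_2^2(\mu_{\tau_l}^k,\mu_{\tau_l}^{k+1})}{\tau_l}
\leq 2\bigl(\F(\mu^0)-\F(\mu_{\tau_l}^{K_l})\bigr),
\end{align}
where $K_l\tau_l\simeq T$ and the inequality is the telescoping JKO dissipation bound $\tfrac{1}{2\tau_l}W_2^2(\mu_{\tau_l}^k,\mu_{\tau_l}^{k+1})\leq\F(\mu_{\tau_l}^k)-\F(\mu_{\tau_l}^{k+1})$ summed over $k$. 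Meanwhile, the gradient flow $\gamma$ obeys the energy dissipation identity $\F(\gamma(0))-\F(\gamma(T))=\int_0^T\|\hat v(\cdot,t)\|_{L_2(\gamma(t))}^2\,\d t=\|\hat v\|_{L_2(\gamma,\R^d\times[0,T])}^2$. Combining these with the narrow lower semicontinuity of $\F$ along $\mu_{\tau_l}^{K_l}\to\gamma(T)$ yields
\begin{align}
\limsup_{l\to\infty}\|v_{\tau_l}\|_{L_2(\gamma_{\tau_l})}^2
\leq 2\bigl(\F(\mu^0)-\F(\gamma(T))\bigr)
=\|\hat v\|_{L_2(\gamma)}^2,
\end{align}
which is precisely condition~\eqref{eq:norm_inequality} and hence upgrades weak to strong convergence along this subsequence. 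Because the Wasserstein gradient flow is unique under Assumption~\ref{ass:F}, the limit $\hat v$ is independent of the extracted subsequence, so by the usual subsequence argument the entire sequence $(v_{\tau_l})$ converges strongly to $\hat v$.

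The main obstacle is the upper-bound step: the JKO scheme only produces an inequality on $\sum_k W_2^2(\mu_{\tau_l}^k,\mu_{\tau_l}^{k+1})/\tau_l$, so matching its limit to $\|\hat v\|^2$ hinges on the exact gradient-flow energy identity together with the identity $\|\hat v(\cdot,t)\|_{L_2(\gamma(t))}=|\partial\F|(\gamma(t))$, both of which rely on the $\lambda$-convexity along generalized geodesics assumed in Assumption~\ref{ass:F}. Without this, one would at best recover weak subsequential limits with a possibly strict inequality in~\eqref{eq:norm_inequality}.
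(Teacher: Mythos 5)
Your overall strategy (extract a weakly convergent subsequence via Theorem~\ref{thm:relaxed_GF} and Lemma~\ref{lem:weak_limits_coincide}, identify the limit curve with the unique gradient flow, upgrade weak to strong convergence by bounding $\limsup\|v_{\tau_l}\|^2$, and close with a sub-subsequence argument) matches the paper's framework. However, your proof of the crucial inequality $\limsup_{l}\|v_{\tau_l}\|_{L_2(\gamma_{\tau_l})}^2\leq\|\hat v\|_{L_2(\gamma)}^2$ contains a factor-of-two error that you cannot repair with the tools you invoke.

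Concretely, your telescoped JKO dissipation bound is one-sided: $\tfrac{1}{2\tau_l}W_2^2(\mu_{\tau_l}^k,\mu_{\tau_l}^{k+1})\leq\F(\mu_{\tau_l}^k)-\F(\mu_{\tau_l}^{k+1})$ gives
$\|v_{\tau_l}\|_{L_2(\gamma_{\tau_l})}^2=\sum_k W_2^2(\mu_{\tau_l}^k,\mu_{\tau_l}^{k+1})/\tau_l\leq 2\bigl(\F(\mu^0)-\F(\mu_{\tau_l}^{K_l})\bigr)$,
so lower semicontinuity gives $\limsup_l\|v_{\tau_l}\|^2\leq 2\bigl(\F(\mu^0)-\F(\gamma(T))\bigr)$. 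But the energy dissipation identity for the gradient flow reads $\F(\gamma(0))-\F(\gamma(T))=\tfrac12\int_0^T\|\hat v(\cdot,t)\|^2\,\d t+\tfrac12\int_0^T|\partial\F|^2(\gamma(t))\,\d t=\int_0^T\|\hat v(\cdot,t)\|^2\,\d t$, using $\|\hat v(\cdot,t)\|_{L_2(\gamma(t))}=|\partial\F|(\gamma(t))$ a.e. Therefore your last equality should be $2\bigl(\F(\mu^0)-\F(\gamma(T))\bigr)=2\|\hat v\|_{L_2(\gamma)}^2$, not $\|\hat v\|_{L_2(\gamma)}^2$. The bound you actually derive is twice what condition~\eqref{eq:norm_inequality} requires, so it does not establish strong convergence.

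The issue is that the crude minimizing-movement inequality discards half the dissipation. To make this route work you would need the sharper discrete energy identity (de Giorgi variational interpolation, \cite[Theorem~3.1.6]{AGS2005}), which reads roughly $\F(\mu_{\tau}^k)-\F(\mu_{\tau}^{k+1})\geq\tfrac{1}{2\tau}W_2^2(\mu_{\tau}^k,\mu_{\tau}^{k+1})+\tfrac12\int_0^\tau|\partial\F|^2(\tilde\gamma(t))\,\d t$, and then also show that the slope term converges to $\tfrac12\int_0^T\|\hat v\|^2$; only after cancelling that second contribution do you recover the bound with the right constant. The paper instead sidesteps the discrete dissipation entirely: in Step~1 of its proof it compares each $\mu_{\tau_l}^k$ directly with $\gamma(k\tau_l)$ using the quantitative error estimate $W_2(\tilde\gamma_{\tau}(t),\gamma(t))\leq\tau C(\tau,t)$ from \cite[Lemma~9.2.7, Theorem~4.0.9]{AGS2005}, expands $W_2^2(\mu_{\tau_l}^k,\mu_{\tau_l}^{k+1})$ by the triangle inequality, and controls the resulting terms by Benamou--Brenier applied to $\gamma$. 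Your proof also does not explicitly establish that the weak limit is the minimal-norm velocity field (you assert it through ``standard gradient-flow theory''), whereas the paper argues this in its Step~2 via \cite[Theorem~8.3.1, Proposition~8.4.5]{AGS2005}; that identification is what lets one invoke \cite[Theorem~5.4.4~(iii)]{AGS2005} to upgrade the $\limsup$ inequality to an equality and hence conclude strong convergence.
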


\begin{proof}
We show that any subsequence of $\tau_l$ admits a subsequence converging strongly to $\hat v$. Using the sub-subsequence criterion this yields the claim.

By Theorem~\ref{thm:relaxed_GF} and Lemma~\ref{lem:weak_limits_coincide} we know that any subsequence of $\tau_l$ admits a weakly convergent subsequence. In an abuse of notations, we denote it again by $\tau_l$ and its limit by $\tilde v$. Then, we prove that the convergence is indeed strong and that $\tilde v=\hat v$.

\noindent
\textbf{Step 1: Bounding $\limsup\limits_{l\to\infty}\int_0^T\|v_{\tau_l}(\cdot,t)\|^2_{L_2(\gamma_{\tau_l}(t),\R^d)}\d t$ from above.}
Since $\lambda$-convexity with $\lambda\geq 0$ implies $\lambda$-convexity with $\lambda=-1$, we can assume without loss of generality that $\lambda<0$.
Then, by \citep[Lemma 9.2.7, Theorem 4.0.9]{AGS2005}, we know that for any $\tau>0$ it holds
$$
W_2(\tilde \gamma_{\tau}(t),\gamma(t))\leq \tau C(\tau,t),\qquad C(\tau,t)=\frac{(1+2|\lambda|t_{\tau})|\partial\F|(\mu_0)}{\sqrt{2}(1+\lambda\tau)}\exp\left(-\frac{\log(1+\lambda\tau)}{\tau}t\right),
$$
where $t_\tau=\min\{k\tau: k\tau\geq t, k\in\mathbb{N}_0\}$.
For simplicity, we use the notations $\tau_{\max}=\max\{\tau_l:l\in\mathbb{N}_0\}$, $K_l=\lceil\frac{T}{\tau_l}\rceil$ and $T_{\max}=T+\tau_{\max}$.
Since $\lambda<0$, we have that $C(\tau,t)\leq C(\tau,T_{\max})\eqqcolon C(\tau)$ for all $t\in[0,T_{\max}]$. Moreover, we have that $C(\tau)\to \frac{(1+2|\lambda|T_{\mathrm{max}}|\partial\F|(\mu_0)}{\sqrt{2}}\exp(-\lambda T_{\max})$ as $\tau\to0$, such that the sequence $(C(\tau_l))_l$ is bounded. In particular, there exists a $C>0$ such that
$$
W_2(\tilde \gamma_{\tau_l}(t),\gamma(t))\leq \tau_l C,\quad\text{for all}\quad t\in[0,T_{\max}].
$$
Inserting $t=k\tau_l$ for $k=0,...,K_l$ gives
\begin{equation}\label{eq:W2_estimate}
W_2(\mu_{\tau_l}^k,\gamma(k\tau_l))\leq \tau_l C,\quad\text{for all}\quad k=0,...,K_l.
\end{equation}
Now, we can conclude by the theorem of Benamou-Brenier that
\begin{align}
\int_0^T\|v_{\tau_l}(\cdot,t)\|^2_{L_2(\gamma_{\tau_l}(t),\R^d)}\d t &\leq \int_0^{K_l\tau_l}\|v_{\tau_l}(\cdot,t)\|^2_{L_2(\gamma_{\tau_l}(t),\R^d)}\d t\\
&=\sum_{k=0}^{K_l-1}\int_{k\tau_l}^{(k+1)\tau_l}\|v_{\tau_l}(\cdot,t)\|^2_{L_2(\gamma_{\tau_l}(t),\R^d)}\d t \\
&=\sum_{k=1}^{K_l-1} W_2^2(\mu_{\tau_l}^k,\mu_{\tau_l}^{k+1}).
\end{align}
Now applying the triangular inequality for any $k=1,\ldots, K_l-1$ 
$$
W_2^2(\mu_{\tau_l}^k,\mu_{\tau_l}^{k+1}) \leq \left(W_2(\mu_{\tau_l}^k,\gamma(\tau_lk))+W_2(\gamma(\tau_lk),\gamma(\tau_l(k+1))+W_2(\gamma(\tau_l(k+1)),\mu_{\tau_l}^{k+1})\right)^2
$$
and the estimate from \eqref{eq:W2_estimate} yields 
\begin{align*}
    \int_0^T\|v_{\tau_l}(\cdot,t)\|^2_{L_2(\gamma_{\tau_l}(t),\R^d)}\d t &\leq 
    \sum_{k=0}^{K_l-1}\left(2\tau C +W_2(\gamma(\tau_lk),\gamma(\tau_l(k+1))\right)^2
\end{align*}
By Jensens' inequality the right hand side can be bounded from above by
$$
4K_l\tau^2C^2+2\tau K_l C \left(\sum_{k=0}^{K_l-1}\frac{1}{K_l}W_2^2(\gamma(\tau_lk),\gamma(\tau_l(k+1)))\right)^{1/2}+\sum_{k=0}^{K_l-1}W_2^2(\gamma(\tau_lk),\gamma(\tau_l(k+1)))
$$
Finally, again Benamou-Brenier gives that $W_2^2(\gamma(\tau_lk),\gamma(\tau_l(k+1)))\leq \int_{k\tau_l}^{(k+1)\tau_l}\|\hat v(\cdot,t)\|^2_{L_2(\gamma(t))}\d t$ such that the above formula is smaller or equal than
$$
4K_l\tau_l^2C^2+2\tau_l \sqrt{K_l} C \left(\int_{0}^{K_l\tau_l}\|\hat v(\cdot,t)\|^2_{L_2(\gamma(t))}\d t\right)^{1/2}+\int_{0}^{K_l\tau_l}\|\hat v(\cdot,t)\|^2_{L_2(\gamma(t))}\d t.
$$

Since $K_l\tau_l\to T$ and $\tau_l\to 0$ as $l\to\infty$ this converges to $\int_{0}^{T}\|\hat v(\cdot,t)\|^2_{L_2(\gamma(t))}\d t$ such that we can conclude
$$
\limsup\limits_{l\to\infty} \int_0^T\|v_{\tau_l}(\cdot,t)\|^2_{L_2(\gamma_{\tau_l}(t),\R^d)}\d t\leq \int_{0}^{T}\|\hat v(\cdot,t)\|^2_{L_2(\gamma(t))}\d t.
$$

\noindent
\textbf{Step 2: Strong convergence.}
By \citep[Theorem 8.3.1, Proposition 8.4.5]{AGS2005} we know that for any $v$ fulfilling the continuity equation 
$$
\partial_t\gamma(t)+\nabla\cdot(v(\cdot,t)\gamma(t))=0,
$$
it holds that $\|v(\cdot,t)\|_{L_2(\gamma(t),\R^d}\geq \|\hat v(\cdot,t)\|_{L_2(\gamma(t),\R^d}$.
Since $\tilde v$ fulfills the continuity equation by Theorem~\ref{thm:relaxed_GF}, this implies that
$$
\int_0^T\|\tilde v(\cdot,t)\|^2_{L_2(\gamma(t),\R^d}\d t \geq \int_0^T\|\hat v(\cdot,t)\|^2_{L_2(\gamma(t),\R^d}\d t=\limsup\limits_{l\to\infty} \int_0^T\|v_{\tau_l}(\cdot,t)\|^2_{L_2(\gamma_{\tau_l}(t),\R^d)}\d t.
$$
In particular, $v_{\tau_l}\in L_2(\gamma_{\tau_l},\R^d\times[0,T])$ converges strongly to $\tilde v$ such that by \citep[Theorem 5.4.4 (iii)]{AGS2005} it holds equality in the above equation, i.e.,
$$
\int_0^T\|\tilde v(\cdot,t)\|^2_{L_2(\gamma(t),\R^d}\d t = \int_0^T\|\hat v(\cdot,t)\|^2_{L_2(\gamma(t),\R^d}\d t
$$
Using again \citep[Theorem 8.3.1, Proposition 8.4.5]{AGS2005} this implies that $\tilde v=\hat v$.
\end{proof}

\section{Proofs from Section~\ref{sec:rejection_steps}}

\subsection{Proof of Theorem~\ref{thm:density_propagation_rejection}}\label{app:density_propagation_rejection}
\begin{enumerate}
\item[i)] Let $A \subset \mathbb{R}^d$ Lebesgue-measurable set. We denote by accept and reject the measurable set of accepted and rejected draws of $X$ respectively.
By the law of total probability, we have
$$
\mathbb{P}(\tilde X \in A) = \mathbb{P}(\tilde{X}\in A \text{ and } X \text{ was accepted}) + \mathbb{P}(\tilde{X} \in A \text{ and } X \text{ was rejected}).
$$
Since it holds $\tilde X=X$ if $X$ is accepted and $\tilde X=X'$ if $X$ is rejected this can be reformulated as
$$
\mathbb{P}(\tilde{X} \in A) = \mathbb{P}(X \in A, \text{accept}) + \mathbb{P}(X' \in A, \text{reject}) 
$$
Now it holds 
\begin{align*}
\mathbb{P}(X \in A, \text{accept}) &= \int_A \alpha(x) p(x) \, dx \\
\mathbb{P}(X' \in A,\text{reject}) &= \mathbb{P}(X' \in A) \cdot \mathbb{P}(\text{reject}) 
= \left(\int_A p(x) \, dx\right) \left(1 - \mathbb{E}[\alpha(X)]\right),
\end{align*}
where we used the fact that $X,X'\sim\mu$ are independent. As a consequence, we obtain

\begin{align*}
\mathbb{P}(\tilde X \in A)&= \int_A \alpha(x) p(x) dx + \left( \int_A p(x) dx \right) \left( 1 - \mathbb{E}[\alpha(X)] \right) \\
&= \int_A p(x) \left( \alpha(x) + 1 - \mathbb{E}[\alpha(X)] \right) dx.
\end{align*}
Thus, $\tilde{p}(x) = p(x)(\alpha(x) + 1 - \mathbb{E}[\alpha(X)])$ is the density of $\tilde{X}$.

\item[(ii)]
Note, since $X\sim p$, we have 
$$
\int_{\mathbb{R}^d} \frac{p(x)\alpha(x)}{\E[\alpha(X)]} \mathrm{d}x = \int_{\mathbb{R}^d} \frac{p(x)\alpha(x)}{\int_{\R^d} p(y)\alpha(y)\d y} \mathrm{d}x  = 1, 
$$
in particular $\frac{p(x)\alpha(x)}{\E[\alpha(X)]}$ defines a density. Now let $\eta\in \P_2(\R^d)$ be the corresponding probability measure.
Then, it holds by part (i) that $\tilde \mu=\E[\alpha(X)]\eta+(1-\E[\alpha(X)])\mu$.
We will show that $\mathrm{KL}(\eta,\nu)\leq \mathrm{KL}(\mu,\nu)$.
Due to the convexity of the KL divergence in the linear space of measures this implies the claim.

We denote $Z=\E[\alpha(X)]=\int_{\mathbb{R}^d} \min(p(x),\tfrac{q(x)}{\tilde c})\d x$. Then it holds
\begin{align*}
\mathrm{KL}(\eta,\nu)&=\int_{\mathbb{R}^d}\frac{p(x)\alpha(x)}{Z}\log\left(\frac{p(x)\alpha(x)}{Zq(x)}\right)\d x\\
&=\int_{\mathbb{R}^d}\frac{\min(p(x),\frac{q(x)}{\tilde c})}{Z}\log\left(\frac{\tilde c p(x)\alpha(x)}{q(x)}\right)\d x-\log(Z\tilde c)\int_{\mathbb{R}^d}\frac{\min(p(x),\frac{q(x)}{\tilde c})}{Z}\d x\\
&=\int_{\mathbb{R}^d}\frac{\min(p(x),\frac{q(x)}{\tilde c})}{Z}\log\left(\min\left(\frac{\tilde c p(x)}{q(x)},1\right)\right)\d x-\log(Z)-\log(\tilde c)\\
&=\int_{\mathbb{R}^d}\min\left(\frac{\min(p(x),\frac{q(x)}{\tilde c})}{Z}\log\left(\frac{\tilde c p(x)}{q(x)}\right),0\right)\d x-\log(Z)-\log(\tilde c).\\
\end{align*}
Since $\log\left(\frac{\tilde c p(x)}{q(x)}\right)\leq 0$ if and only if $p(x)\leq q(x)/\tilde c$ this can be reformulated as
\begin{align}
&\quad\int_{\mathbb{R}^d} \min\left(\frac{p(x)}{Z}\log\left(\frac{\tilde cp(x)}{q(x)}\right),0\right)\d x-\log(Z)-\log(\tilde c)\\
&\leq\int_{\mathbb{R}^d} \min\left(p(x)\log\left(\frac{\tilde cp(x)}{q(x)}\right),0\right)\d x-\log(Z)-\log(\tilde c),\label{eq:asdf}
\end{align}
where the inequality comes from the fact that $Z=\E[\alpha(X)]\in[0,1]$.
Moreover, it holds by Jensen's inequality that
\begin{align}
-\log(Z)=-\log(\E[\alpha(X)])&=-\log\left(\int_{\mathbb{R}^d}  p(x)\min\left(\frac{q(x)}{\tilde c p(x)},1\right)\d x\right)\\
&\leq-\int_{\mathbb{R}^d}  p(x)\log\left(\min\left(\frac{q(x)}{\tilde c p(x)},1\right)\right)\d x \\
&=-\int_{\mathbb{R}^d}  p(x)\min\left(\log\left(\frac{q(x)}{\tilde c p(x)}\right),0\right)\d x\\
&=\phantom{-}\int_{\mathbb{R}^d}  p(x)\max\left(\log\left(\frac{\tilde c p(x)}{q(x)}\right),0\right)\d x \\
&=\phantom{-}\int_{\mathbb{R}^d}  \max\left(p(x)\log\left(\frac{\tilde c p(x)}{q(x)}\right),0\right)\d x.
\end{align}
Thus, we obtain that \eqref{eq:asdf} can be bounded from above by
$$
\int_{\mathbb{R}^d} \min\left(p(x)\log\left(\frac{\tilde c p(x)}{q(x)}\right),0\right)\d x+\int_{\mathbb{R}^d}  \max\left( p(x)\log\left(\frac{\tilde c p(x)}{q(x)}\right),0\right)\d x-\log(\tilde c)
$$
which equals
$$\int_{\mathbb{R}^d}  p(x)\log\left(\frac{\tilde cp(x)}{q(x)}\right)\d x-\log(\tilde c)
=\int_{\mathbb{R}^d} p(x)\log\left(\frac{p(x)}{q(x)}\right)\d x=\mathrm{KL}(\mu,\nu).
$$

In summary, we have $\mathrm{KL}(\eta,\nu)\leq\mathrm{KL}(\mu,\nu)$, which implies the assertion.
\end{enumerate}
\phantom{.}\hfill$\Box$

\subsection{Proof of Corollary~\ref{cor:concentration}}\label{proof:concentration}
Let $Y=\Big|\E[\alpha(X)]-\frac1N\sum_{i=1}^N \alpha(X_N)\Big|$ denote the random variable representing the error.
Since $\alpha(X_N)\in[0,1]$, Hoeffding's inequality \cite{H1963} yields that
$
P(Y>t)\leq2\exp(-\frac{Nt^2}{2})
$.
Consequently, we have that
$$
\E[Y]=\int_0^\infty P(Y>t)\d t\leq 2\int_0^\infty \exp\left(-\frac{Nt^2}{2}\right)\d t=\frac{\sqrt{2\pi}}{\sqrt{N}}.
$$
\phantom{.}\hfill$\Box$

\section{Algorithms}

\begin{figure}
\begin{center}
\begin{tikzpicture}[
    node distance=1.5cm,
    block/.style={rectangle, draw, text width=8em, text centered, rounded corners, minimum height=4em},
    decision/.style={diamond, draw, text width=5em, text centered, aspect=2, inner sep=0pt},
    line/.style={draw, -latex'},
    cloud/.style={ellipse, draw, text width=3em, text centered, minimum height=2em}
]

\node [cloud] (input) {$x_i^{k}$};
\node [decision, right=1cm of input] (decision) {accepted?};

\node [cloud, below right=1cm and 1cm of decision] (resample) {resample {\scriptsize{$x_i^k\sim\mu^k$}}};

\node [cloud, right=2cm of decision] (output) {$x_i^{k+1}$};

\node [rectangle, left=0.5cm of input] (prevlayer) {\textbf{previous layer}};
\node [below=0.15cm of prevlayer] {\color{gray!50!white}CNF/ rejection};
\node [rectangle, right=1cm of output] (nextlayer) {\textbf{next layer}};
\node [below= 0.15cm of nextlayer] {\color{gray!50!white}CNF/ rejection};

\path [line] (input) -- (decision);
\path [line] (decision) -- node[above] {\color{green!50!black}yes} (output);
\path [line] (decision) -- node[left] {\color{red!75!black}no} (resample);
\path [line] (resample) -- node[right] {\color{green!50!black}yields} (output);
\path [line] (prevlayer) -- (input);
\path [line] (output) -- (nextlayer);

\node [above=2cm of resample] {\textbf{rejection} at $k$-th layer};


\end{tikzpicture}
    \caption{Illustration of a single sample within a rejection layer. The independent sample $x_i^k$ is either accepted directly or else resampled once, and then used to define $x_i^{k+1}$ as input for the next layer.}
    \label{fig:flowchart_rejection}
\end{center}
\end{figure}

\subsection{Training and Evaluation Algorithms}\label{app:algorithms_training_eval}

We summarize the training and evaluation procedures for the neural JKO steps in Algorithm~\ref{alg:training_CNF} and \ref{alg:density_probagation_CNF} and the parameter selection for the importance-based rejection steps in Algorithm~\ref{alg:training_rejection}. The evaluation of the importance-based rejection steps is summarized in Algorithm~\ref{alg:density_probagation_rejection} in the main text.

\begin{figure}
\begin{algorithm}[H]
\begin{algorithmic}
\STATE \textbf{Input:} Samples $x_1^k,...,x_N^k$ of $\mu^k$.
\STATE Minimize the loss function $\theta\mapsto\mathcal{L}(\theta)$ from~ \eqref{eq:loss_neural_JKO} using the Adam optimizer.
\STATE \textbf{Output:} Parameters $\theta$.
\end{algorithmic}
\caption{Training of neural JKO steps}
\label{alg:training_CNF}
\end{algorithm}

\begin{algorithm}[H]
\begin{algorithmic}
\STATE \textbf{Input:} $\left\{ \begin{array}{l}
        \text{- Samples } x_1^k,...,x_N^k \text{ of } \mu^k, \\
        \text{- Density values }p^k(x_1^k),...,p^k(x_N^k).
        \end{array}
        \right.$
\STATE

\FOR{$i=1,...,N$}
\STATE 1. Solve the ODE \eqref{eq:neuralJKO_ODE} for $x=x_i^k$.
\STATE 2. Set $x_i^{k+1}=z_\theta(x_i^k,\tau)$.
\STATE 3. Set $p^{k+1}(x_i^{k+1})=\frac{p^{k}(x_i^k)}{\exp(\ell_\theta(x_i^k,\tau))}$.
\ENDFOR
\STATE
\STATE \textbf{Output:}
$\left\{ \begin{array}{l}
        \text{- Samples } x_1^k,...,x_N^k \text{ of } \mu^{k+1}, \\
        \text{- Density values }p^{k+1}(x_1^{k+1}),...,p^{k+1}(x_N^{k+1}).
        \end{array}
        \right.$
\end{algorithmic}
\caption{Sampling and density propagation for neural JKO steps}
\label{alg:density_probagation_CNF}
\end{algorithm}

\begin{algorithm}[H]
\begin{algorithmic}
\STATE \textbf{Input:} 
$\left\{ \begin{array}{l}\text{- Samples $x_1^k,...,x_N^k$ of $\mu^k$ with corresponding densities}\\
\text{- desired rejection rate $r$, unnormalized target density $g$}
\end{array}\right.$
\STATE
\STATE Choose $c$ by bisection search such that
$$
1-r=\frac1N\sum_{i=1}^N\alpha_k(x_i^k),\qquad \alpha_k(x)=\min\left\{1,\frac{g(x)}{cp^k(x)}\right\}.
$$

\STATE \textbf{Output:} Rejection parameter $c$ and estimate of $\E[\alpha(X_\tau^k)]\approx 1-r$.
\end{algorithmic}
\caption{Parameter selection for importance-based rejection steps}
\label{alg:training_rejection}
\end{algorithm}

\begin{algorithm}[H]
    \begin{algorithmic}
        \STATE \textbf{Input:} $x\in\R^d$, model $(X_0,...,X_K)$
        \IF{$K=0$}
            \STATE \textbf{Return} latent density $p^0(x)$.
        \ELSIF{the last step is a rejection step}
            \STATE 1. Evaluate $p^{K-1}(x)$ by applying this algorithm for $(X_0,...,X_{K-1})$.
            \STATE 2. \textbf{Return} $p^{K}(x)=p^{K-1}(\alpha_k(x)+1-\E[\alpha_k(X_{k-1})])$
        \ELSE 
            \STATE \hfill \textit{last step is a neural JKO step}
            \STATE 1. Solve the ODE system (with $v_\theta$ from the neural JKO step)
            $$
            \left(\begin{array}{c}\dot z_\theta(x,t)\\\dot \ell_\theta(x,t)\end{array}\right)=\left(\begin{array}{c}v_\theta(z_\theta(x,t),t)\\\mathrm{trace}(\nabla v_\theta(z_\theta(x,t),t))\end{array}\right),\qquad \left(\begin{array}{c}z_\theta(x,\tau)\\\ell_\theta(x,\tau)\end{array}\right)=\left(\begin{array}{c}x\\0\end{array}\right).
            $$
            \STATE 2. Set $\tilde x=z_\theta(x,0)$.
            \STATE 3. Evaluate $p^{K-1}(\tilde x)$ by applying this algorithm for $(X_0,...,X_{K-1})$.
            \STATE 4. \textbf{Return} $p^{K}(x)=p^{K-1}(\tilde x)\exp(\ell_\theta(x,0))$.
        \ENDIF
    \STATE \textbf{Output:} Density $p^K(x)$
    \end{algorithmic}
    \caption{Density Evaluation of Importance Corrected Neural JKO Models}
    \label{alg:density_evaluation}
\end{algorithm}
\end{figure}

\subsection{Density Evaluation of Importance Corrected Neural JKO Models}\label{app:density_evaluation}

Let $(X_0,...,X_K)$ be an importance corrected neural JKO model. We aim to evaluate the density $p^K$ of $X_K$ at some given point $x\in\R^d$. Using Theorem~\ref{thm:density_propagation_rejection}, this can be done recursively by Algorithm~\ref{alg:density_evaluation}. Note that the algorithm always terminates since $K$ is reduced by one in each call.

\section{Additional Numerical Results and Implementation Details}\label{app:detailed_experiments}

\subsection{Test Distributions}\label{sec:test_distributions}

We evaluate our method on the following test distributions.
\begin{itemize}
    \item[-] \textbf{Mustache:} 
    The two-dimensional log-density is given as $\log\mathcal{N}(0,\Sigma) \circ T$ with $\Sigma= [1, \sigma ; \sigma,1]$ and $T(x_1,x_2) = (x_1, (x_2-(x_1^2-1)^2))$. Note that $\mathrm{det}(\nabla T(x))=1$ for all $x$. In particular, we obtain directly by the transformation formula that the normalization constant is one.
    Depending on $\sigma\in[0,1)$ close to $1$, this probability distribution has very long and narrow tails making it hard for classical MCMC methods to sample them. In our experiments we use $\sigma=0.9$.
    \item[-] \textbf{Shifted 8 Modes:} A two-dimensional Gaussian mixture model with 8 equal weighted modes and covariance matrix $\num{1e-2} I$. The modes are placed in a circle with radius 1 and center $(-1,0)$. Due to the shifted center classical MCMC methods have difficulties to distribute the mass correctly onto the modes.
    \item[-] \textbf{Shifted 8 Peaky:} This is the same distribution as the shifted 8 Modes with the difference that we reduce the width of the modes to the covariance matrix $\num{5e-3} I$. Since the modes are disconnected, it becomes harder to sample from them.
    \item[-] \textbf{Funnel:} We consider the (normalized) probability density function given by $q(x)=\mathcal N(x_1|0,\sigma_f^2)\mathcal N(x_{2:10}|0,\exp(x_1)I)$, where $\sigma_f^2=9$. This example was introduced by \cite{N2003}. Similarly to the mustache example, this distribution has a narrow funnel for small values $x_1$ which can be hard to sample.
    \item[-] \textbf{GMM-$d$:} A $d$-dimensional Gaussian mixture model with 10 equal weighted modes with covariance matrix $\num{1e-2} I$ and means drawn randomly from a uniform distribution on $[-1,1]^d$. This leads to a peaky high-dimensional and multimodal probability distribution which is hard to sample from.
    \item[-] \textbf{GMM40-50D:} Another Gaussian mixture model, which was used as an example in \citep{BJEVN2024, CRBBNA2025} based on \citep{MSSSH2023}. It is a 50 dimensional GMM with 40 equally weighted modes with the identity as covariance matrix, where the means are drawn uniformly from $[-40,40]^d$. We use the same seed for drawing the modes as in \cite{CRBBNA2025}.
    \item[-] \textbf{LGCP:} This is a high dimensional standard example taken from \cite{AMD2021,MARD2022,VGD2023}. It describes a Log-Gaussian Cox process on a $40\times40$ grid as arising from spatial statistics \cite{MSW1998}. This leads to a $1600$-dimensional probability distribution with the unnormalized density function 
    $q(x)\propto \mathcal N(x|\mu,K)\prod_{i\in\{1,...,40\}^2} \exp(x_iy_i-a\exp(x_i))$, where $\mu$ and $K$ are a fixed mean and covariance kernel, $y_i$ is some data and $a$ is a hyperparameter. For details on this example and the specific parameter choice we refer to \cite{MARD2022}.
\end{itemize}

\subsection{Error Measures} 
We use the following error measures.
\begin{itemize}
    \item[-] The \textbf{energy distance} was proposed by \cite{S2002} and is defined by
    $$
    D(\mu,\nu)=-\frac12\int_{\R^d}\int_{\R^d}\|x-y\|\d(\mu-\nu)(x)\d(\mu-\nu)(y).
    $$
    It is the maximum mean discrepancy with the negative distance kernel $K(x,y)=-\|x-y\|$ (see \citealp{SBGF2013}) and can be estimated from below and above by the Wasserstein-1 distance (see \citealp{HWAH2024}).
    It is a metric on the space of probability measures. Consequently, a smaller energy distance indicates a higher similarity of the input distributions.
    By discretizing the integrals it can be easily evaluated based on $N\in\mathbb{N}$ samples $\bm{x}=(x_i)_i\sim \mu^{\otimes N}$, $\bm{y}=(y_i)_i\sim \nu^{\otimes N}$ as
    $$
    D(\zb x,\zb y)=\sum_{i,j=1}^N\|x_i-y_j\|-\frac12\sum_{i,j=1}^N\|x_i-x_j\|-\frac12\sum_{i,j=1}^N\|y_i-y_j\|.
    $$
    We use $N=50000$ samples in Table~\ref{tab:energy_distance}.
    \item[-] We also evaluate the \text{squared Wasserstein-2 distance} which is defined in Section~\ref{sec:prelim}. To this end, we use the Python Optimal Transport package (POT, \citealp{FCGA2021}). Note that computing the Wasserstein distance has complexity $O(n^3)$ where $n$ is the number of points. Hence, we evaluate the Wasserstein distance based on fewers samples compared to the case of other metrics. We use $N=5000$ samples in Table~\ref{tab:w2_results}. In addition, we want to highlight that 
    the expected Wasserstein distance evaluated on empirical measures instead of its continuous counterpart suffers from the curse of dimensionality. In particular, its  sample complexity scales as $O(n^{-1/d})$ \citep[Chapter 8.4.1]{PC2019}. Consequently, the sample-based Wasserstein distance in high dimensions can differ significantly from the true Wasserstein distance of the continuous distributions. Indeed, we can see in Table~\ref{tab:w2_results} that the sampling error has often the same order of magnitude as the reported errors. Overall we can draw similar conclusions from this evaluation as for the energy distance in Table~\ref{tab:energy_distance}. 
    \item[-] We \textbf{estimate the log normalizing constant} (short $\log(Z)$ estimation) which is used as a benchmark standard in various references (e.g., in \citealp{AMD2021,MARD2022,PDHD2024,VGD2023}). More precisely, for the generated distribution $\mu$ with normalized density $p$ and target measure $\nu$ with density $q(x)=g(x)/Z_g$ we evaluate the term
    $$
    \E_{x\sim\mu}\left[\log\left(\frac{g(x)}{p(x)}\right)\right]=\log(Z_g)-\E_{x\sim\mu}\left[\log\left(\frac{p(x)}{q(x)}\right)\right]=\log(Z_g)-\mathrm{KL}(\mu,\nu).
    $$
    Due to the properties of the KL divergence, a higher $\log(Z)$ estimate implies a lower KL divergence between $\mu$ and $\nu$ and therefore a higher similarity of generated and target distribution. In our experiments we compute the $\log(Z)$ estimate based on $N=50000$ samples. The results are given in Table~\ref{tab:logZ}.
    \item[-] To quantify how well the mass is distributed on different modes for the mixture model examples (shifted 8 Modes, shifted 8 Peaky, GMM-$d$), we compute the mode weights. That is, we generate $N=50000$ samples and assign each generated samples to the closest mode of the GMM. Afterwards, we compute for each mode the fraction of samples which is assigned to each mode. To evaluate this distribution quantitatively, we compute the mean square error (MSE) between the mode weights of the generated samples and the ground truth weights from the GMM. We call this error metric the \textbf{mode MSE}, give the results are in Table~\ref{tab:mode_MSE}.
\end{itemize}

\begin{table}
\caption{We report the expected squared empirical Wasserstein-$2$ distance ($W_2^2$) and its standard deviation between generated and ground truth samples of size $N=5000$ for the different methods and for all examples where the ground truth model is known. A smaller value of $W_2^2$ indicates a better result. Note that the curse of dimensionality present in the sample complexity, might limit the reliability of the results for the high-dimensional examples. In particular for the funnel distribution, we observe that the expected empirical Wasserstein-$2$ distance between two independent sets of ground truth samples is higher than the observed $W_2^2$ values.}
\centering
\scalebox{.55}{
\begin{tabular}{ccllllllll}
\toprule
  & \phantom{.}   &  \multicolumn{6}{c}{Sampler}  &  \phantom{.} & \\
  \cmidrule{3-8}  
Distribution& &\multicolumn{1}{c}{MALA}&\multicolumn{1}{c}{HMC}&\multicolumn{1}{c}{DDS}&\multicolumn{1}{c}{CRAFT}&\multicolumn{1}{c}{Neural JKO}&\multicolumn{1}{c}{Neural JKO IC (\textbf{ours})}& &Sampling Error\\
\midrule
Mustache&&$\num{4.7e+1}\pm\num{1.3e+1}$&$\num{2.8e+1}\pm\num{4.2e0}$&$\num{5.2e+1}\pm\num{2.0e+0}$&$\num{5.4e+1}\pm\num{1.2e+1}$&$\num{3.0e+1}\pm\num{1.3e+1}$&\best$\num{1.7e+1}\pm\num{6.0e+0}$&&$\num{1.2e1}$\\
shifted 8 Modes&&$\num{5.5e-2}\pm\num{6.9e-3}$&$\num{4.7e-3}\pm\num{3.6e-3}$&$\num{8.7e-2}\pm\num{3.1e-2}$&$\num{2.4e-1}\pm\num{2.4e-3}$&$\num{5.6e-1}\pm\num{1.6e-2}$&\best$\num{6.5e-3}\pm\num{2.1e-3}$&&$\num{7.4e-3}$\\
shifted 8 Peaky&&$\num{5.8e-2}\pm\num{2.4e-2}$&$\num{5.6e-1}\pm\num{1.4e-2}$&$\num{1.0e-1}\pm\num{2.5e-2}$&$\num{2.5e-1}\pm\num{1.1e-1}$&$\num{5.9e-1}\pm\num{1.7e-2}$&$\best\num{7.2e-3}\pm\num{1.3e-3}$&&$\num{5.6e-3}$\\
Funnel&&$\num{5.5e+2}\pm\num{1.2e+2}$&$\num{7.4e+2}\pm\num{5.6e+2}$&\best$\num{5.3e+2}\pm\num{7.5e+1}$&$\num{7.9e+2}\pm\num{4.4e+2}$&$\num{9.4e+2}\pm\num{9.3e+2}$&$\num{8.5e+2}\pm\num{3.9e+2}$&&$\num{1.0e3}$\\
GMM-10&&$\num{3.8e+0}\phantom{^-}\pm\num{4.2e-1}$&$\num{3.8e+0}\phantom{^-}\pm\num{3.9e-1}$&$\num{3.8e-1}\pm\num{1.1e-1}$&$\num{2.4e+0}\phantom{^-}\pm\num{1.0e+0}$&$\num{6.3e-1}\pm\num{1.4e-1}$&\best$\num{1.4e-1}\pm\num{2.3e-2}$&&$\num{1.4e-1}$\\
GMM-20&&$\num{8.9e+0}\phantom{^-}\pm\num{4.0e-1}$&$\num{9.0e+0}\phantom{^-}\pm\num{3.9e-1}$&$\num{8.8e-1}\pm\num{1.0e-1}$&$\num{7.9e+0}\phantom{^-}\pm\num{1.5e+0}$&$\num{1.2e-0}\phantom{^-}\pm\num{2.0e-1}$&\best$\num{3.7e-1}\pm\num{5.0e-2}$&&$\num{3.7e-1}$\\
GMM-50&&$\num{2.7e+1}\pm\num{1.0e+0}$&$\num{2.7e+1}\pm\num{9.8e-1}$&$\num{3.6e+0}\phantom{^-}\pm\num{8.9e-1}$&$\num{2.6e+1}\pm\num{2.6e+0}$&$\num{4.4e-0}\phantom{^-}\pm\num{7.9e-1}$&\best$\num{1.2e0}\phantom{^-}\pm\num{1.3e-1}$&&$\num{1.2e0}$\\
GMM-100&&$\num{5.7e+1}\pm\num{1.2e+0}$&$\num{5.7e+1}\pm\num{1.3e+0}$&$\num{9.9e+0}\phantom{^-}\pm\num{2.9e+0}$&$\num{5.6e+1}\pm\num{1.1e0}$&$\num{1.1e+1}\pm\num{2.7e0}$&\best$\num{2.9e0}\phantom{^-}\pm\num{4.7e-1}$&&$\num{2.8e0}$\\
GMM-200&&$\num{1.2e+2}\pm\num{2.8e+0}$&$\num{1.2e+2}\pm\num{2.8e+0}$&$\num{2.4e+1}\pm\num{4.0e+1}$&$\num{1.1e+2}\pm\num{3.0e+0}$&$\num{2.4e1}\pm\num{3.9e0}$&\best$\num{7.8e0}\phantom{^-}\pm\num{6.9e-1}$&&$\num{5.9e-0}$\\
\bottomrule
\end{tabular}
}
\label{tab:w2_results}
\end{table}

\begin{remark}[Bias in $\log(Z)$ Computation]\label{rem:biasa}
In the cases, where the importance corrected neural JKO sampling fits the target distribution almost perfectly, we sometimes report in Table~\ref{tab:logZ} $\log(Z)$ estimates which are slightly larger than the ground truth. This can be explained by the fact that the density evaluation of the continuous normalizing flows uses the Hutchinson trace estimator for evaluating the divergence and a numerical method for solving the ODE. Therefore, we have a small error in the density propagation of the neural JKO steps. This error is amplified by the rejection steps since samples with underestimated density are more likely to be rejected than samples with overestimated density.

We would like to emphasize that this effect only appears for examples, where the energy distance between generated and ground truth samples is in the same order of magnitude like the average distance between two different sets of ground truth samples (see Table~\ref{tab:energy_distance}). This means that in the terms of the energy distance the generated and ground truth distribution are indistinguishable. At the same time the bias in the $\log(Z)$ estimate appears at the third or fourth relevant digit meaning that it is likely to be negligible.
\end{remark}

\begin{table}
\caption{We report the mode MSEs for the different methods for all examples which can be represented as mixture model. A smaller mode MSE indicates a better result.}
\centering
\scalebox{.58}{
\begin{tabular}{ccrrrrrr}
\toprule
  & \phantom{.}   &  \multicolumn{6}{c}{Sampler}  \\
  \cmidrule{3-8}  
Distribution& &\multicolumn{1}{c}{MALA}&\multicolumn{1}{c}{HMC}&\multicolumn{1}{c}{DDS}&\multicolumn{1}{c}{CRAFT}&\multicolumn{1}{c}{Neural JKO}&\multicolumn{1}{c}{Neural JKO IC (\textbf{ours})}\\
\midrule
shifted 8 Modes&&$\num{3.2e-3}\pm\num{1.3e-4}$&$\num{1.9e-5}\pm\num{1.1e-5}$&$\num{8.8e-3}\pm\num{2.2e-3}$&$\num{3.2e-2}\pm\num{6.1e-3}$&$\num{8.3e-2}\pm\num{1.0e-3}$&$\best\num{1.3e-5}\pm\num{4.4e-6}$\\
shifted 8 Peaky&&$\num{8.3e-2}\pm\num{3.7e-4}$&$\num{7.8e-2}\pm\num{9.9e-4}$&$\num{7.9e-3}\pm\num{2.2e-3}$&$\num{3.3e-2}\pm\num{1.3e-2}$&$\num{8.4e-2}\pm\num{6.7e-4}$&$\best\num{1.5e-5}\pm\num{3.2e-6}$\\
GMM-10&&$\num{1.2e-2}\pm\num{5.5e-3}$&$\num{1.0e-2}\pm\num{5.8e-3}$&$\num{3.6e-3}\pm\num{1.9e-3}$&$\num{1.4e-1}\pm\num{4.7e-2}$&$\num{1.1e-2}\pm\num{5.6e-3}$&$\best\num{2.1e-5}\pm\num{3.0e-6}$\\
GMM-20&&$\num{6.6e-3}\pm\num{2.2e-3}$&$\num{6.6e-3}\pm\num{2.4e-3}$&$\num{3.6e-3}\pm\num{1.5e-3}$&$\num{3.8e-1}\pm\num{3.9e-2}$&$\num{7.3e-3}\pm\num{2.8e-3}$&$\best\num{2.4e-5}\pm\num{9.8e-6}$\\
GMM-50&&$\num{1.0e-2}\pm\num{3.8e-3}$&$\num{9.8e-3}\pm\num{3.6e-3}$&$\num{9.6e-3}\pm\num{4.9e-3}$&$\num{9.0e-1}\pm\num{6.4e-5}$&$\num{1.2e-2}\pm\num{4.4e-3}$&$\best\num{2.6e-5}\pm\num{6.0e-6}$\\
GMM-100&&$\num{1.1e-2}\pm\num{5.4e-3}$&$\num{1.1e-2}\pm\num{5.4e-3}$&$\num{1.1e-2}\pm\num{5.8e-3}$&$\num{9.0e-1}\pm\num{4.8e-8}$&$\num{1.4e-2}\pm\num{7.0e-3}$&$\best\num{1.5e-4}\pm\num{8.5e-5}$\\
GMM-200&&$\num{1.4e-2}\pm\num{5.1e-3}$&$\num{1.4e-2}\pm\num{4.6e-3}$&$\num{2.2e-2}\pm\num{8.6e-3}$&$\num{9.0e-1}\pm\num{5.8e-8}$&$\num{1.9e-2}\pm\num{6.3e-3}$&$\best\num{6.8e-4}\pm\num{4.0e-4}$\\
\bottomrule
\end{tabular}
}
\label{tab:mode_MSE}
\end{table}

\subsection{Additional Comparison}

We additionally compare our importance corrected neural JKO model with several other methods on the example of a $50$ dimensional Gaussian mixture model as used in \citep{MSSSH2023}. We follow the setting of Table 3 in \citet{CRBBNA2025} and evaluate the Sinkhorn distance \cite{C2013} with entropic regularization parameter $10^{-3}$ based on $2000$ generated and ground truth samples. Moreover we average the results over $5$ training runs with $25$ evaluations from each run. We compare our results with sequential Monte Carlo (SMC, \citealp{DDJ2006}), denoising diffusion sampler (DDS, \citealp{VGD2023}), continual repeated annealed flow transport Monte Carloe (CRAFT, \citealp{MARD2022}), path integral sampler (PIS, \citealp{ZC2021}), controlled Monte Carlo Diffusion (CMCD, \citealp{VN2023}) and sequential controlled Langevin diffusion (SCLD, \citet{CRBBNA2025}). For CMCD, we consider the log-variance (LV) variant considered in \cite{CRBBNA2025} since it gives better results. The hyperparameters used for our importance corrected neural JKO scheme are given in Table~\ref{tab:hyper}. Similar as most comparison methods, we use a normal distribution with a larger variance as initial distribution $\mu_0$ and use a gradient flow on the negative log-density prior to the learned CNF and rejection steps, see Remark~\ref{rem:gf_init_and_scaling} for details.

The results are given in Table~\ref{tab:lorenz}, where the values for the comparison are taken from \citep{CRBBNA2025}. We observe that neural JKO IC significantly outperforms the comparisons on this example.

\begin{table}
\caption{We report the mode Sinkhorn distance for the different methods for the GMM40-50D example. A smaller mode Sinkhorn distance indicates a better result. The comparison values are taken from \citep{CRBBNA2025}.}
\centering
\scalebox{.58}{
\begin{tabular}{ccccccccc}
\toprule
  & \phantom{.}   &  \multicolumn{6}{c}{Sampler}  \\
  \cmidrule{3-9}  
& &\multicolumn{1}{c}{SMC}&\multicolumn{1}{c}{DDS}&\multicolumn{1}{c}{CRAFT}&\multicolumn{1}{c}{PIS}&\multicolumn{1}{c}{CMCD-LV}&\multicolumn{1}{c}{SCLD}&\multicolumn{1}{c}{Neural JKO IC (\textbf{ours})}\\
\midrule
Sinkhorn distance&&$46370.34\pm137.79$&$5435.18\pm 172.20$&$28960.70\pm354.89$&$10405.75\pm69.41$&$4258.57\pm737.15$&$3787.73\pm249.75$&$\best3025.61\pm418.10$\\
\bottomrule
\end{tabular}
}
\label{tab:lorenz}
\end{table}

\subsection{Additional Figures and Evaluations}

Additionally to the results from the main part of the paper, we provide the following evaluations.

\paragraph{Visualization of the Rejection Steps} In Figure~\ref{fig:death-birth} and with involved steps in Figure~\ref{fig:8modes_2d}, we visualize the different steps of our importance corrected neural JKO model on the shifted 8 Peaky example. Due to the shift of the modes, the modes on the right attract initially more mass than the ones on the left. In the rejection layers, we can see that samples are mainly rejected in oversampled regions such that the mode weights are equalized over time. This can also be seen in Figure~\ref{fig:weight_conv_8modes_2d}, where we plot the weights of the different modes over time. We observe, that these weights are quite imbalanced for the latent distribution but are equalized over the rejection steps until they reach the ground truth value.

\paragraph{(Marginal) Plots of Generated Samples} Despite the quantitative comparison of the methods in the Tables~\ref{tab:energy_distance}, \ref{tab:logZ} and \ref{tab:mode_MSE}, we also plot the first to coordinates of generated samples for the different test distributions for all methods for a visual comparison.

In Figure~\ref{fig:samples_8mixtures}, we plot samples of the shifted 8 Modes example. We can see that all methods roughly cover the ground truth distribution even though CRAFT and DDS have slight and the uncorrected neural JKO scheme has a severe imbalance in the assigned mass for the different modes.

For the shifted 8 Peaky example in Figure~\ref{fig:samples_8peaky}, we see that this imbalance increases drastically for CRAFT, HMC, MALA and the uncorrected neural JKO. Also DDS has has a slight imbalance, while the importance corrected neural JKO scheme fits the ground truth almost perfectly.

The Funnel distribution in Figure~\ref{fig:samples_funnel} has two difficult parts, namely the narrow but high-density funnel on the left and the wide moderate density fan on the right. We can see that DDS does cover none of them very well. Also MALA, CRAFT and the uncorrected neural JKO do not cover the end of the funnel correctly and have also difficulties to cover the fan. HMC covers the fan well, but not the funnel. Only our importance corrected neural JKO scheme covers both parts in a reasonable way.

For the Mustache distribution in Figure~\ref{fig:samples_schnauzbart}, the critical parts are the two heavy but narrow tails. We observe that CRAFT and DDS are not able to cover them at all, while HMC and MALA and uncorrected neural JKO only cover them only partially. The importance corrected neural JKO covers them well.

Finally, for the GMM-$d$ example we consider the dimensions $d=10$ and $d=200$ in the Figures~\ref{fig:samples_mixtures10} and \ref{fig:samples_mixtures200}. We can see that CRAFT mode collapses, i.e., for $d=10$ it already finds some of the modes and for $d=200$ it only finds one mode. DDS, HMC, MALA and the uncorrected neural JKO find all modes but do not distribute the mass correctly onto all modes. While this already appears for $d=10$ it is more severe for $d=200$. The importance corrected neural JKO sampler finds all modes and distributes the mass accurately.

Note that the distributions generated by MALA and the neural JKO are always similar. This is not surprising since both simulate a Wasserstein gradient flow with respect to the KL divergence and only differ by the time-discretization and the Metropolis correction in MALA.

\paragraph{Development of Error Measures over the Steps}
We plot how the quantities of interest decrease over the application of the steps of our model. The results are given in the Figure~\ref{fig:qoi}.
It can be observed that the different steps may optimize different metrics. While the rejection steps improve the $\log(Z)$ estimate more significantly, the JKO steps focus more on the minimization of the energy distance.
Overall, we see that in all figures the errors decrease over time.

\subsection{Implementation Details}\label{app:implementation_details}

To build our importance corrected neural JKO model, we first apply $n_1\in\mathbb{N}$ JKO steps followed by $n_2\in\mathbb{N}$ blocks consisting out of a JKO step and three importance-based rejection steps. The velocity fields of the normalizing flows are parameterized by a dense three-layer feed-forward neural network. For the JKO steps, we choose an initial step size $\tau_0>0$ and then increase the step size exponentially by $\tau_{k+1}=4\tau_k$. The choices of $n_1$, $n_2$, $\tau_0$ and the number of hidden neurons from the networks is given in Table~\ref{tab:hyper} together with the execution times for training and sampling. For evaluating the density propagation through the CNFs, we use the Hutchinson trace estimator with $5$ Rademacher distributed random vectors whenever $d>5$ and the exact trace evaluation otherwise. Note that we redraw the Rademacher vectors in every step of the ODE solver such that errors average out throughout the trajectory. We observed that the estimator of the divergence already has a sufficiently small variance, particularly, since the trajectories are almost straight due to the norm penalty of the velocity field, see also Appendix~\ref{app:cnf_details}. For implementing the CNFs, we use the code from Ffjord \citep{GCBSD2018} and the \texttt{torchdiffeq} library by \cite{C2018}. We provide the code online at \url{https://github.com/johertrich/neural_JKO_ic}.

For the MALA and HMC we run an independent chain for each generated sample and perform $50000$ steps of the algorithm. For HMC we use $5$ momentum steps and set the step size to $0.1$ for 8Modes and Funnel and step size $0.01$ for mustache. To stabilize the first iterations of MALA and HMC we run the first iterations with smaller step sizes ($0.01$ times the final step size in the first $1000$ iterations and $0.1$ times the final step size for the second $1000$ iterations). 
For MALA we use step size $0.001$ for all examples. For DDS and CRAFT we use the official implementations. In particular for the test distributions such as Funnel and LGCP we use the hyperparameters suggested by the original authors. For the other examples we optimized them via grid search.

\begin{remark}[Initializations]\label{rem:gf_init_and_scaling}
To deal with target distributions living on different scales, we consider initial distributions $\mu_0=\mathcal N(0,c^2\mathrm{Id})$, where $c^2$ is a scaling parameter. The choice of $c$ for each target distribution is givein in Table~\ref{tab:hyper} in the row ``latent scaling''.
Moreover, it is beneficial in some cases to choose the velocity field in the first CNF not by learning it, but by fixing it to $v_t(x)=\nabla_x \log(g(x))=\nabla_x \log(q(x))$. We report the final time $T$ of this first gradient flow step in Table~\ref{tab:hyper} in the row ``gradient flow initialization'' with value - if we do not use a gradient flow step. This initialization is closely related to Langevin preconditioning, which is used in most neural sampling methods, see \cite{HDVZ2025} for a detailed study.
\end{remark}

During the training of $v_{\theta}^{k+1}$, we do not draw in each training step a new batch from $\mu_k$. Instead, we maintain a dataset of $N=50000$ samples of $\mu_k$ and draw batches from it. After the training is completed, we generate a new dataset of $N$ samples by applying the learned CNF onto $N$ new drawn samples from $\mu_k$. To avoid overfitting effects, we redraw our dataset after each CNF step.

\begin{table}
\centering
\caption{Parameters, training and sampling times for the different examples. For the sampling time we draw $N=50000$ samples once the method is trained. The execution times are measured on a single NVIDIA RTX $4090$ GPU with $24$ GB memory.}
\scalebox{.63}{
\begin{tabular}{ccccccccccccccc}
\toprule
  & \phantom{.}   &  \multicolumn{10}{c}{Distribution}  \\
  \cmidrule{3-13} 
Parameter&&Mustache&shifted 8 Modes&shifted 8 Peaky&Funnel&GMM-10&GMM-20&GMM-50&GMM-100&GMM-200&GMM40-50D&LGCP\\
\midrule
Dimension&&2&2&2&10&10&20&50&100&200&50&1600\\
Number $n_1$ of flow steps&&6&2&2&6&4&4&4&4&5&4&3\\
Number $n_2$ of rejection blocks&&6&4&4&6&6&6&7&8&8&7&6\\
Initial step size $\tau_0$ &&0.05&0.01&0.01&5&0.0025&0.0025&0.0025&0.0025&0.001&1&5\\
Latent scaling $c$ &&1&1&1&1&1&1&1&1&1&40&1\\
Gradient Flow Initialization&&-&-&-&-&-&-&-&-&-&3&-\\
Hidden neurons&&54&54&54&256&70&90&150&250&512&256&1024\\
batch size&&5000&5000&5000&5000&5000&5000&5000&5000&2000&5000&500\\
Required GPU memory&&5 GB&5 GB&5 GB&5 GB&5 GB&5 GB&5 GB&5 GB&6 GB&6 GB&11 GB\\
Training time (min)&&38&20&21&107&33&34&44&53&80&34&163\\
Sampling time (sec)&&15&2&3&79&22&23&72&129&473&550&535\\
\bottomrule
\end{tabular}
}

\label{tab:hyper}
\end{table}

\section{Computational Aspects of Normalizing Flows}

In this appendix, we give some more details  about the computational aspects of the normalizing flows in our model.
First, we discuss the relation between multimodal target distributions, mode collapse and non-convex loss functions of normalizing flows.
Afterwards, we discuss some computational aspects of continuous normalizing flows like ODE discretizations and density evaluations with trace estimators.
Finally, we run some standard normalizing flow networks on our numerical example distributions and compare them with our importance corrected neural JKO sampling.

\subsection{Multimodalities, Mode Collapse and Non-Convex Loss Functions}\label{app:nf_collapse}

For the sampling application, normalizing flows are usually trained with the reverse KL loss function $\F(\mu)=\mathrm{KL}(\mu,\nu)$, see, e.g., \cite{MMPS2016}. More precisely, a normalizing flow aims to learn the parameters $\theta$ of a family of diffeomorphisms $\mathcal T_\theta$ such that ${\mathcal T_\theta}_\#\mu_0\approx \nu$ by minimizing $\F({\mathcal T_\theta}_\#\mu_0)$.
In the case that $\nu$ is multimodal it can be observed that this training mode collapses. That is, the approximation $\mathcal T_\#\mu_0$ covers not all of the modes of $\nu$ but instead neglects some of them. Examples of this phenomenon can be seen in the numerical comparison in Appendix~\ref{app:nf_numerics}.
One reason for this effect is that the functional $\F$ is convex if and only if $\nu$ is log-concave and therefore unimodal, see \citep[Prop. 9.3.2]{AGS2005}. In particular, for multimodal $\nu$, the functional $\F$ is non-convex. 
In the latter case of, then the mode collapses can correspond to the convergence to a local minimum, as the following example shows.

\begin{example}\label{ex:nonconvexity}
We consider the case $d=1$ and the target distribution 
$$\nu=\tfrac12\mathcal{N}(-\tfrac12,0.05^2)+\tfrac12\mathcal{N}(\tfrac12,0.05^2).$$
As latent distribution $\mu_0$ we choose a standard Gaussian. Then, we parametrize the normalizing flow $\mathcal T_\theta=(1-\theta)\mathcal \mathcal T_0+\theta\mathcal T_1$ for $\theta\in[0,1]$, where $\mathcal T_0$ is the optimal transport map between $\mu_0$ and $\nu$ and $\mathcal T_1$ is the optimal transport map between $\mu_0$ and $\mathcal N(\frac12,0.05^2)$. In particular, ${\mathcal T_\theta}_\#\mu_0$ perfectly recovers the target distribution for $\theta=0$ and produces a mode collapsed version of it for $\theta=1$.
Now, we plot the reverse KL loss function $\mathcal L(\theta)=\mathrm{KL}({\mathcal T_\theta}_\#\mu_0,\nu)$ and the densities of the generated distributions ${\mathcal T_\theta}_\#\mu_0$ in Figure~\ref{fig:nonconvexity}. We observe that it has two local minima for $\theta=0$ and $\theta=1$ (for $\theta$ outside of $[0,1]$, $\mathcal T$ is no longer invertible), where $\theta=0$ is the perfectly learned parameter and $\theta=1$ is the mode collapsed version.
At the same time, we note that the curve $\theta\mapsto {\mathcal T_\theta}_\#\mu_0$ is a geodesic in the Wasserstein space. In particular, the non-convexity of $\mathcal L$ is a direct consequence of the fact that $\F(\mu)=\mathrm{KL}(\mu,\nu)$ is geodesically non-convex.
\end{example}

\begin{remark}[Motivation of Wasserstein Regularization]
    This connection between the non-convexity of the loss function $\F$ and mode collapse also motivates the Wasserstein regularization from Section~\ref{sec:neural_JKO}.
    By considering the loss function $\mathcal G(\mu)=\F(\mu)+\frac1{2\tau}W_2^2(\mu, \mu_\tau^k)$ for $\tau>0$ small enough instead of $\F$, we obtain a loss function which is convex in the Wasserstein space, see \citep[Lem. 9.2.7]{AGS2005}.
    Even though this does not imply that the map $\theta\mapsto \mathcal G({\mathcal T_\theta}_\#\mu_0)$ is convex, we can expect that the training does not get stuck in local minima as long as the architecture of $\mathcal T_\theta$ is expressive enough.

    Theoretically, we need for $\lambda$-convex $\F$ that $\smash{\tau\leq\frac1\lambda}$ to ensure that $\mathcal G$ is geodesically convex. However, if $\mu^k$ is already close to a minimum of $\F$, then it can be sufficient that the functional $\mathcal G$ is convex locally around $\mu^k$. In this case, the distribution generated by the CNF will stay in this neighborhood. Note that in practice the constant $\lambda$ is unknown such that we start with a small step size $\tau_0$ and increase it over time as outlined at the end of Section~\ref{sec:rejection_steps} and in Appendix~\ref{app:implementation_details}.
\end{remark}

\subsection{Computational Limitations of (Continuous) Normalizing Flows}\label{app:cnf_details}

Similar to the literature on neural JKO schemes \citep{VWTON2023,XCX2024}, our implementation of the neural JKO scheme relies on continuous normalizing flows \citep{CRBD2018}. This comes with some limitations and challenges, which were extensively discussed in \citep{CRBD2018}. Since they are relevant for our method, we give a synopsis below.

\paragraph{Derivatives of ODE Solutions}

For the training phase we need the derivative of the solution of an ODE. For this, we use the \texttt{torchdiffeq} package \citep{C2018}. In particular, this package does not rely on backpropagation through the steps from the forward solver. Instead, it overwrites the backward pass of the ODE by solving an \textit{adjoint ODE}. This avoids expensive tracing in the automatic differentiation process within the forward pass and keeps the memory consumption low.
Moreover, the quadratic regularization of the velocity field leads to straight paths such that the adaptive solvers only require a few steps for solving the ODE, see \citep{OFLR2021} for a detailed discussion. Indeed, we use in our numerical examples the \texttt{dopri5} solver from \texttt{torchdiffeq}, which is an adaptive Runge-Kutta method.
However, we still have to solve two ODEs during the training time. This still can be computational costly, in particular when the underlying model is large.

\paragraph{Trace Estimation for Density Computations}

In order to evaluate the (log-)density of our model, we have to compute the divergence $\operatorname{div}v_{\theta}=\mathrm{trace}(\nabla v_\theta(z_\theta(x,t),t))$ of the learned vector field $v_\theta$ (and integrate it over time), see \eqref{eq:neuralJKO_ODE}. Computing the trace of the Jacobian of $v_\theta$ becomes computationally costly in high dimensions. As a remedy, we consider the Hutchinson trace estimator \citep{H1989}, which states that for any matrix $A$ and a random vector $z$ with mean zero and identity covariance matrix it holds that $\E[z^\tT Az]=\mathrm{trace}(A)$. Applying this estimator to the divergence, we obtain that the divergence coincides with $\E[z^\tT\nabla v_\theta(z_\theta(x,t),t)z]$. The integrand is now again a Jacobian-vector product which can be computed efficiently.
Finally, we estimate the trace by empirically discretizing the expectation by finitely many realizations of $z$.

In our numerics, we choose $z$ to be Rademacher random vectors, i.e. each entry is with probability $\frac12$ either $-1$ or $1$. For the training of the continuous normalizing flow, an unbiased low-precision estimator is sufficient, such that we discretize the expectation  with one realization of $z$. During evaluation, we require a higher precision and use $5$ realizations instead.

\paragraph{Initialization}
In order to find a stable initialization of the model, we initialize the last layer of the velocity field $v_\theta$ with zeros such that it holds $v_\theta(x,t)=0$ for all $x\in\R^d$, $t\in[0,\tau]$. In this case, the solution $z_\theta$ of the ODE $\dot z_\theta=v_\theta$ with initial condition $z_\theta(x,0)=x$ is given by $z_\theta(x,t)=x$ for all $t$. In particular, we have that the transport map $\mathcal T_\theta=z_\theta(x,\tau)$ is the identity such that the generated distribution ${\mathcal T_\theta}_\#\mu_0$ coincides with the latent distribution for the initial parameters.

\begin{remark}[Other Normalizing Flow Architectures]\label{rem:other_architectures}
In general any architecture $\mathcal T_\theta$ of normalizing flows can be considered in the neural JKO scheme. To this end, we can replace the velocity field regularization in \eqref{eq:dynamic_JKO} by the penalizer $\int_{\R^d}\|\mathcal T_\theta(x)-x\|^2\d \mu_\tau^k$. By Brenier’s theorem \citep{Brenier1987} this is again equivalent to minimizing the Wasserstein distance.
However, while discrete-time normalizing flows like coupling-based networks \citep{DSB2016,KD2018} and autoregressive flows \citep{CTA2019,DBMP2019,HKLC2018,PPM2017}  are often faster than CNFs, this is not generally true in our setting, since their evaluation time does not benefit from the OT-regularization. 
Additionally, we observed numerically that the expressiveness of discrete-time architectures scales much worse to high dimensions and that these architectures are less stable to train.
Nevertheless, their evaluation can be cheaper since these architectures do not have to deal with trace estimators within the density evaluation.
Residual architectures \citep{BGCD2019,CBDJ2029} are at least similar to continuous normalizing flows, but they are very expensive to train and evaluate, and additionally rely on trace estimators for computing the density of the generated samples.
\end{remark}

\subsection{Numerical Comparison}\label{app:nf_numerics}

\begin{table}
\caption{Comparison of neural JKO IC with normalizing flows trained with the reverse KL loss function. We evaluate the energy distance (smaller values are better), the $\log(Z)$-estimation (larger values are better) and the expected squared empirical Wasserstein distance (smaller values are better).}
\centering
\scalebox{.66}{
\begin{tabular}{cclllclllclll}
\toprule
  & \phantom{.}   &  \multicolumn{3}{c}{Energy distance}& \phantom{.}   &  \multicolumn{3}{c}{$\log(Z)$-estimation}& \phantom{.}   &  \multicolumn{3}{c}{squared Wasserstein-$2$}\\
  \cmidrule{3-5}  \cmidrule{7-9} \cmidrule{11-13}    
Distribution& &\multicolumn{1}{c}{continuous NF}&\multicolumn{1}{c}{coupling NF}&\multicolumn{1}{c}{neural JKO IC}& &\multicolumn{1}{c}{continuous NF}&\multicolumn{1}{c}{coupling NF}&\multicolumn{1}{c}{neural JKO IC}& &\multicolumn{1}{c}{continuous NF}&\multicolumn{1}{c}{coupling NF}&\multicolumn{1}{c}{neural JKO IC}\\
\midrule
Mustache&&$\num{2.1e-2}$&$\num{4.9e-3}$&\best$\num{2.9e-3}$&&$\num{-7.1e-2}$&$-\num{2.2e-2}$&\best$-\num{7.3e-3}$&&$\num{3.8e1}$&$\num{3.1e1}$&\best$\num{1.7e1}$\\
shifted 8 Modes&&$\num{4.1e-1}$&$$\num{4.1e-2}$$&\best$\num{1.2e-5}$&&$\num{-1.4e-0}$&$-\num{4.3e-1}$&\best$+\num{5.1e-6}$&&$\num{1.4e0}$&$\num{2.2e-1}$&\best$\num{6.5e-3}$\\
shifted 8 Peaky&&$\num{5.7e-1}$&$\num{5.8e-1}$&\best$\num{3.4e-5}$&&$\num{-2.1e-0}$&$-\num{2.1e-0}$&\best$-\num{2.1e-3}$&&$\num{1.8e0}$&$\num{1.8e0}$&\best$\num{7.2e-3}$\\
Funnel&&$\num{1.9e-1}$&\best$\num{2.2e-3}$&$\num{1.4e-2}$&&$\num{-9.4e-1}$&$-\num{2.8e-2}$&\best$-\num{7.1e-3}$&&\best$\num{3.8e2}$&$\num{8.1e2}$&$\num{8.5e2}$\\
GMM-10&&$\num{5.4e-1}$&$\num{5.3e-1}$&\best$\num{5.2e-5}$&&$\num{-2.3e-0}$&$-\num{2.3e-0}$&\best$+\num{3.5e-3}$&&$\num{3.4e0}$&$\num{3.5e0}$&\best$\num{1.4e-1}$\\
GMM-20&&$\num{1.1e-0}$&$\num{1.1e-0}$&\best$\num{1.1e-4}$&&$\num{-2.4e-0}$&$-\num{2.3e-0}$&\best$+\num{6.4e-3}$&&$\num{9.5e0}$&$\num{9.4e0}$&\best$\num{3.7e-1}$\\
\bottomrule
\end{tabular}
}
\label{tab:nf_comparison}
\end{table}

Finally, we compare our neural JKO IC scheme with standard normalizing flows trained with the reverse KL loss function as proposed in \cite{MMPS2016}. In particular, we aim to demonstrate the benefits of our neural JKO IC scheme to avoid mode collapse.

To this end, we train two architectures of normalizing flows for our examples using the reverse KL loss function as proposed in \cite{MMPS2016}. First, we use a continuous normalizing flow with the same architecture as used in the neural JKO (IC). That is, we parameterize the velocity field $v_\theta$ by a dense neural network with three hidden layers and the same hidden dimensions as in Table~\ref{tab:hyper}. Second, we compare with a coupling network with 5 Glow-coupling blocks \citep{KD2018}, where the coupling blocks have two hidden layers with three times the hidden dimension as in Table~\ref{tab:hyper}.
We found numerically that choosing larger architectures does not bring significant advantages.

We plot some generated distributions of the continuous and coupling normalizing flows as well as the neural JKO IC scheme in Figure~\ref{fig:NF_comparison}. As we have already seen in the previous examples, the neural JKO IC scheme is able to recover multimodal distributions almost perfectly. On the other hand, the normalizing flow architectures always collapse to one or a small number of modes.
We additionally report the error measures in Table~\ref{tab:nf_comparison}. We can see that neural JKO IC performs always better than the normalizing flow architectures (note that for the Funnel distribution all $W_2^2$-values are below the sampling error reported in Table~\ref{tab:w2_results}). For multimodal distributions, the normalizing flow approximations are by several orders of magnitude worse than neural JKO IC, while they work quite well for unimodal distributions (Mustache and Funnel).

\begin{figure}
\begin{figure}[H]
\begin{center}
    \includegraphics[width = 0.8\linewidth]{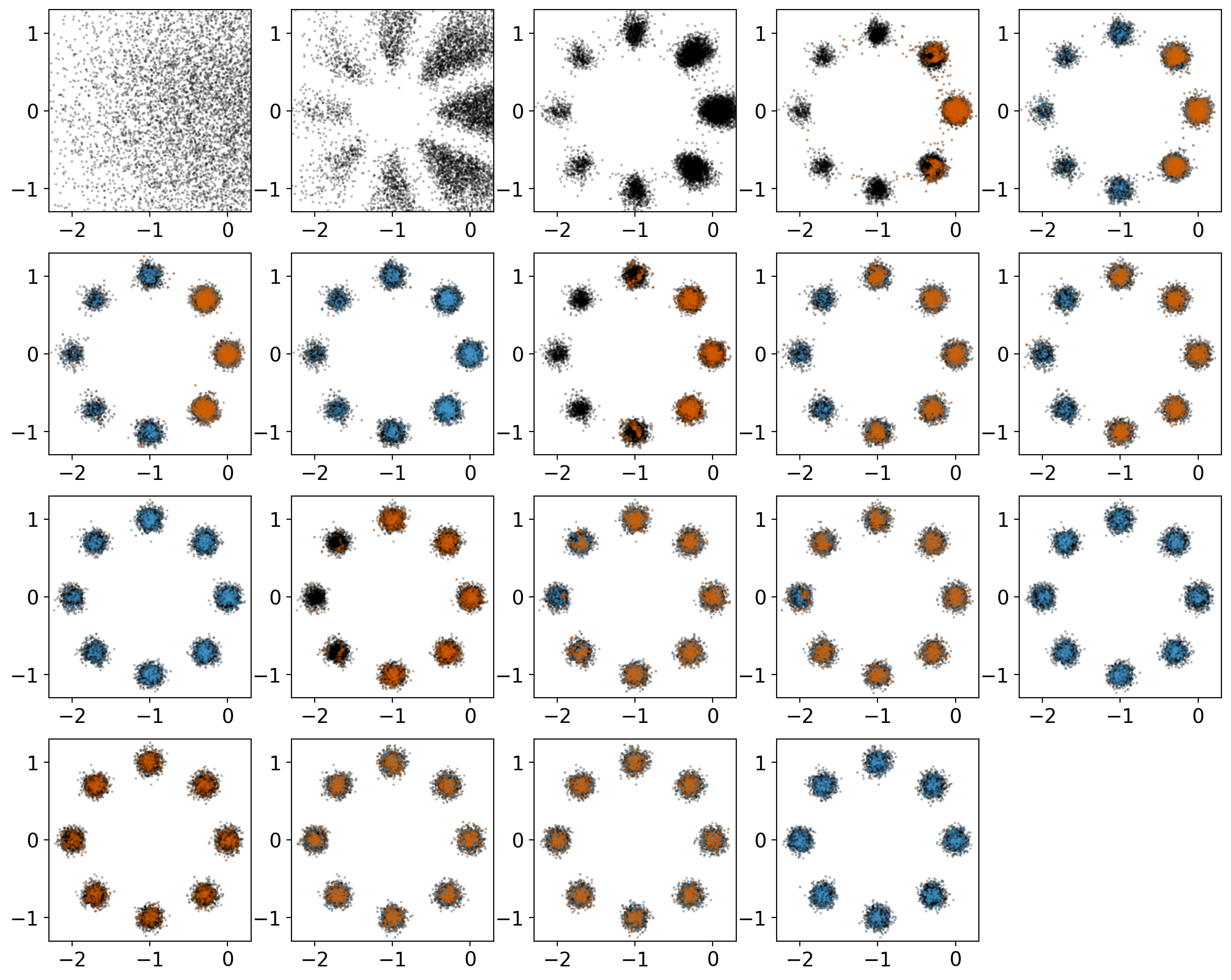}
\caption{
Visualization of the steps of the importance corrected neural JKO model for the shifted 8 Peaky example. We start at the top left with the latent distribution and apply in each image one step from the model. The orange color indicates samples which are rejected in the next step and the blue color marks the resampled points.
\label{fig:8modes_2d}
}
\end{center}
\end{figure}

\begin{figure}[H]
\begin{center}
    \includegraphics[width = .7\linewidth]{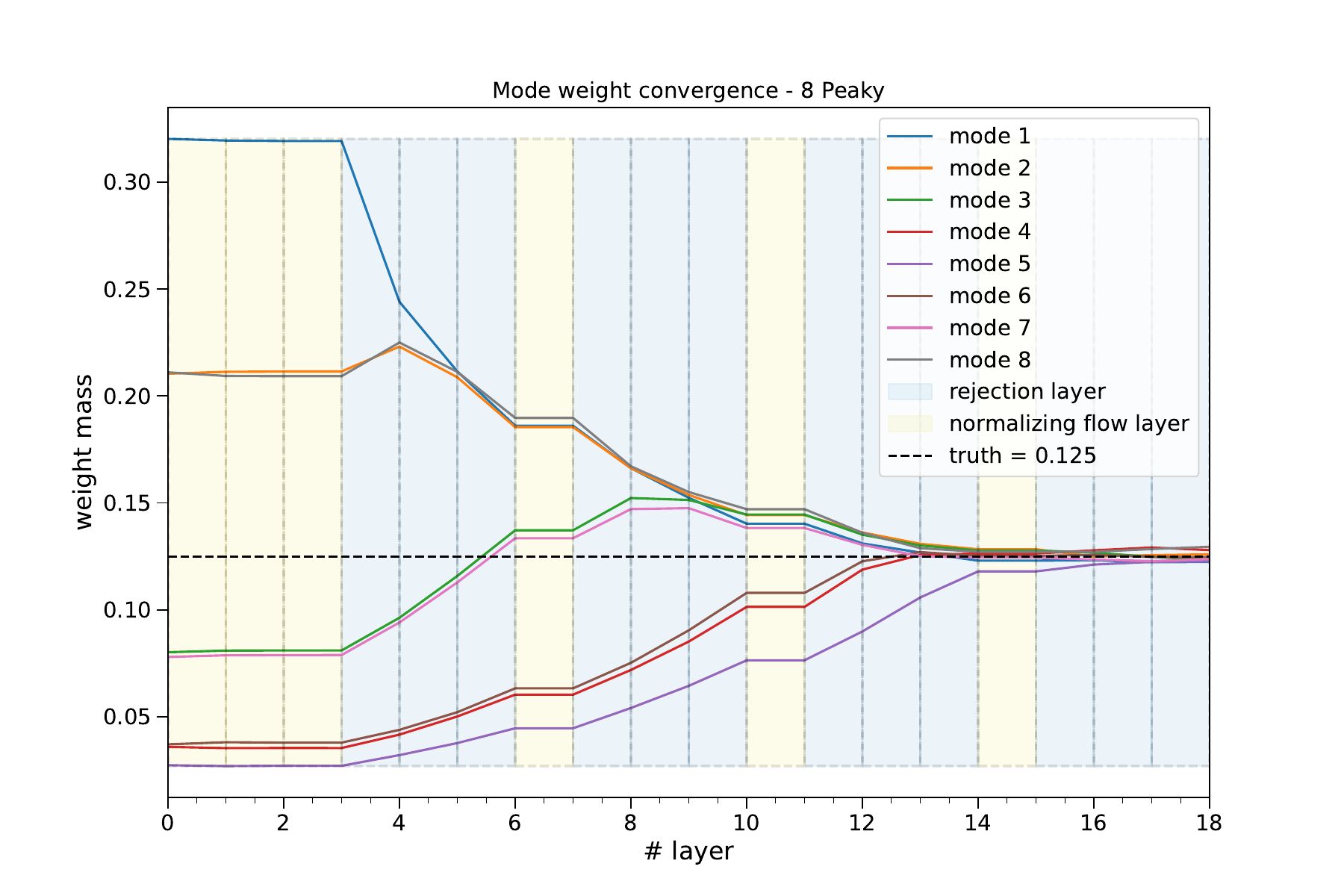}
\caption{
Plot of the mode weights for the 8 Peaky example over the different layers of the importance corrected neural JKO model. We observe that the weights are mainly changed by the rejection steps and not by the neural JKO steps. 
\label{fig:weight_conv_8modes_2d}
}
\end{center}
\end{figure}
\end{figure}

\begin{figure}
\begin{figure}[H]
\begin{center}
    \includegraphics[width = .8\linewidth]{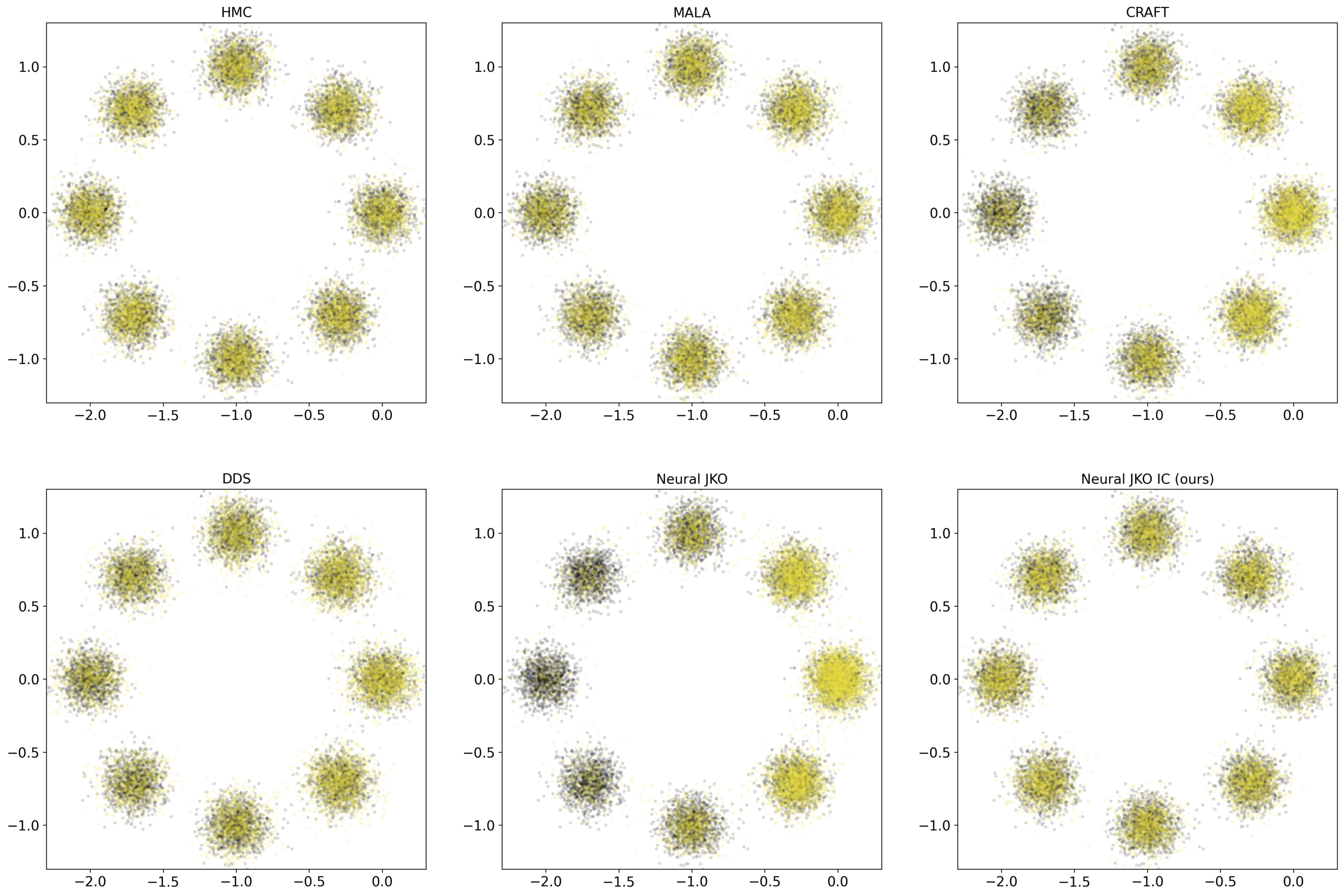}
\caption{
Sample generation with various methods for shifted $8$ Modes example with \textit{ground truth} samples and \textcolor[RGB]{240, 228, 66}{\textit{generated}} samples for each associated method.
While most methods recover the distribution well, we can see a imbalance in the modes for the uncorrected neural JKO, CRAFT and DDS.
\label{fig:samples_8mixtures}
}
\end{center}
\end{figure}

\begin{figure}[H]
\begin{center}
    \includegraphics[width = .8\linewidth]{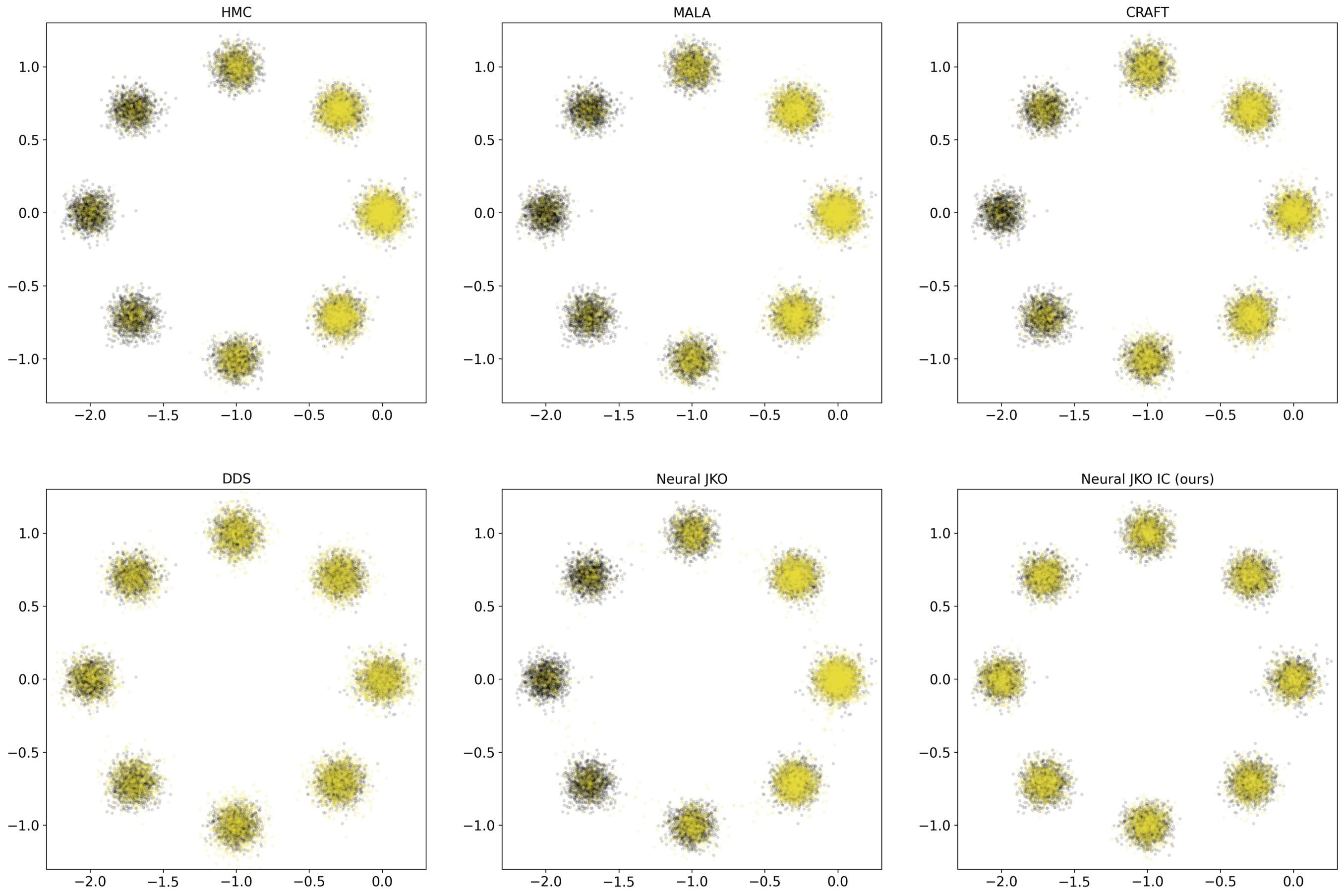}
\caption{
Sample generation with various methods for shifted $8$ Peaky mixtures with \textit{ground truth} samples and \textcolor[RGB]{240, 228, 66}{\textit{generated}} samples for each associated method. We can see a severe imbalance among the modes for HMC, MALA, CRAFT and neural JKO. Also DDS has a slight imbalance between the modes.
\label{fig:samples_8peaky}
}
\end{center}
\end{figure}
\end{figure}

\begin{figure}
\begin{center}
    \includegraphics[width = 1.\linewidth]{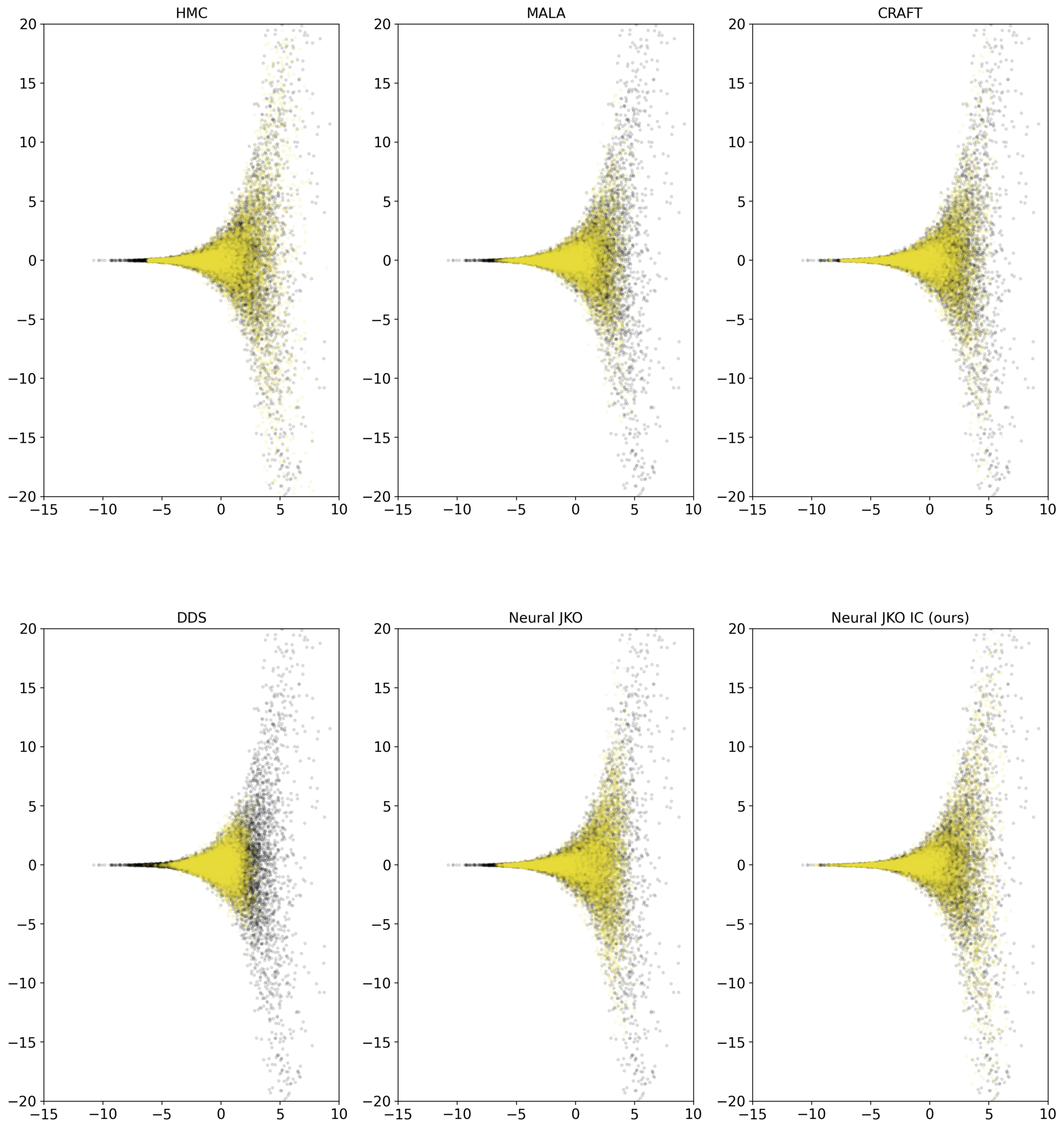}
\caption{
Marginalized sample generation with various methods for the $d=10$ funnel distribution with \textit{ground truth} samples and \textcolor[RGB]{240, 228, 66}{\textit{generated}} samples for each associated method.
We observe that only the importance corrected neural JKO covers the thin part of the funnel well.
\label{fig:samples_funnel}
}
\end{center}
\end{figure}

\begin{figure}
\begin{center}
    \includegraphics[width = 1\linewidth]{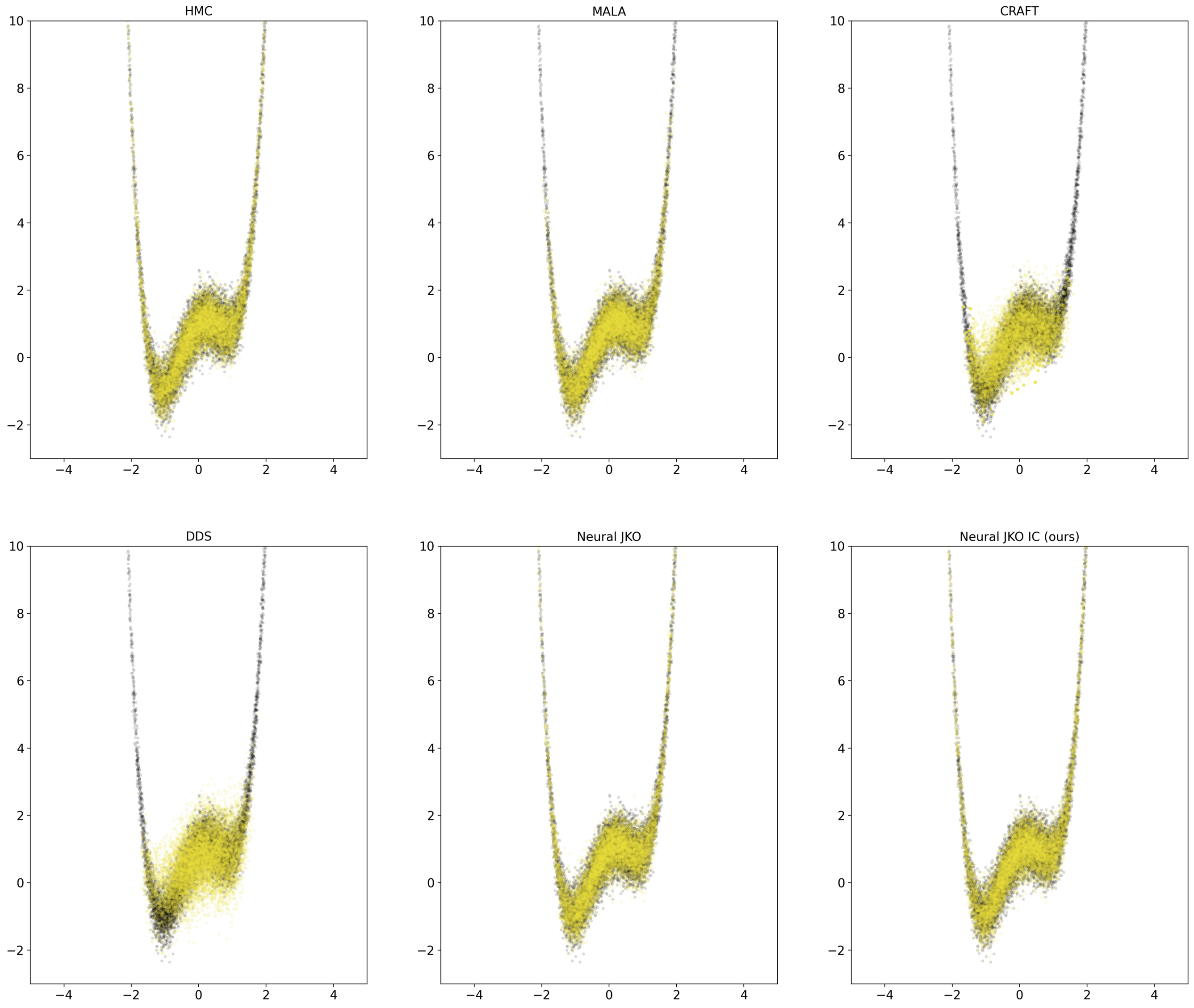}
\caption{
Sample generation with various methods for the $d=2$ mustache distribution with \textit{ground truth} samples and \textcolor[RGB]{240, 228, 66}{\textit{generated}} samples for each associated method.
We can see that MALA, CRAFT and DDS have difficulties to model the long tails of the distribution properly.
\label{fig:samples_schnauzbart}
}
\end{center}
\end{figure}

\begin{figure}
\begin{figure}[H]
\begin{center}
    \includegraphics[width = .8\linewidth]{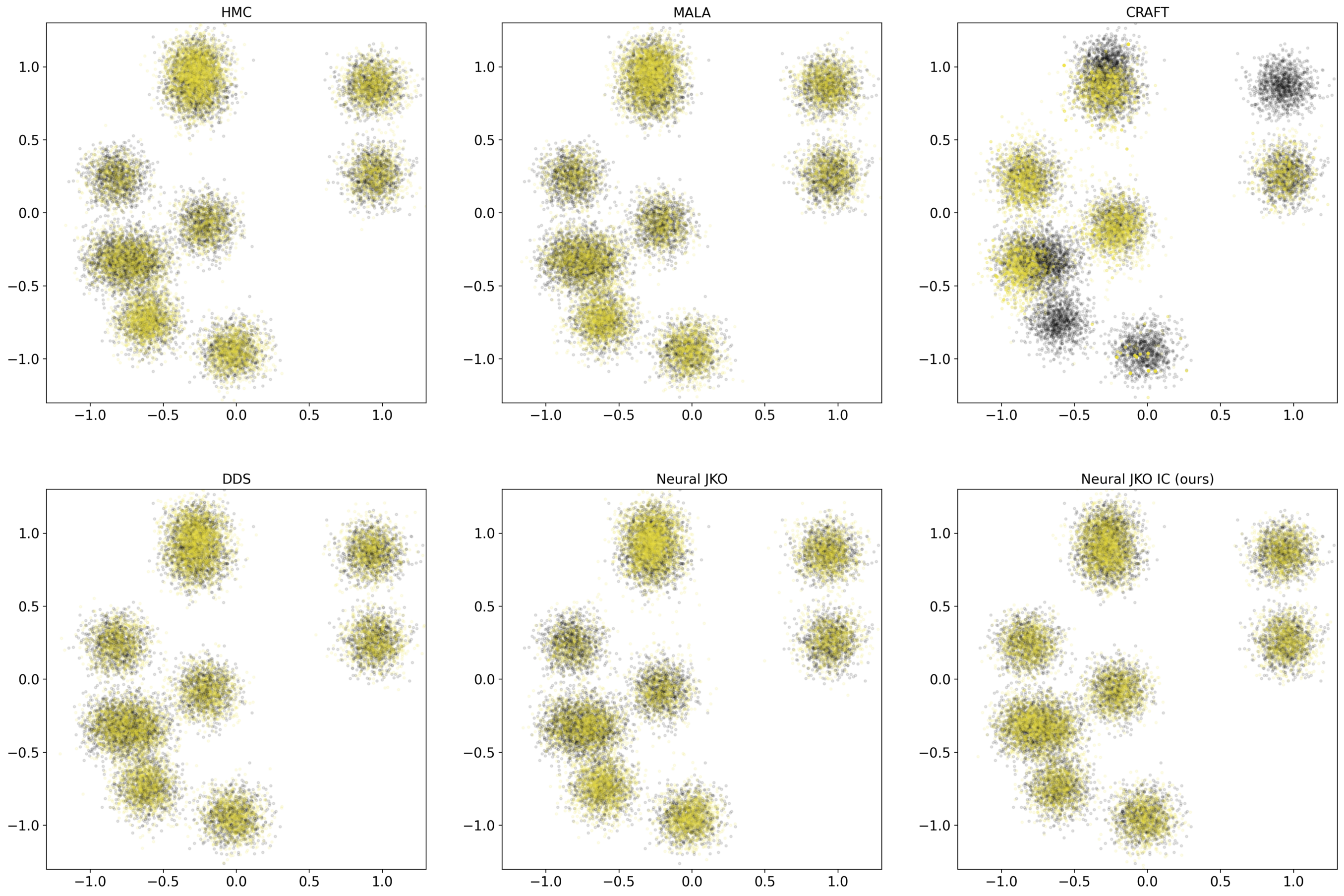}
\caption{
Marginalized sample generation with various methods for the GMM-$10$ distribution with \textit{ground truth} samples and \textcolor[RGB]{240, 228, 66}{\textit{generated}} samples for each associated method. We observe that CRAFT mode collapses and that only the importance corrected neural JKO model distributes the mass correctly onto the modes.
\label{fig:samples_mixtures10}
}
\end{center}
\end{figure}

\begin{figure}[H]
\begin{center}
    \includegraphics[width = .8\linewidth]{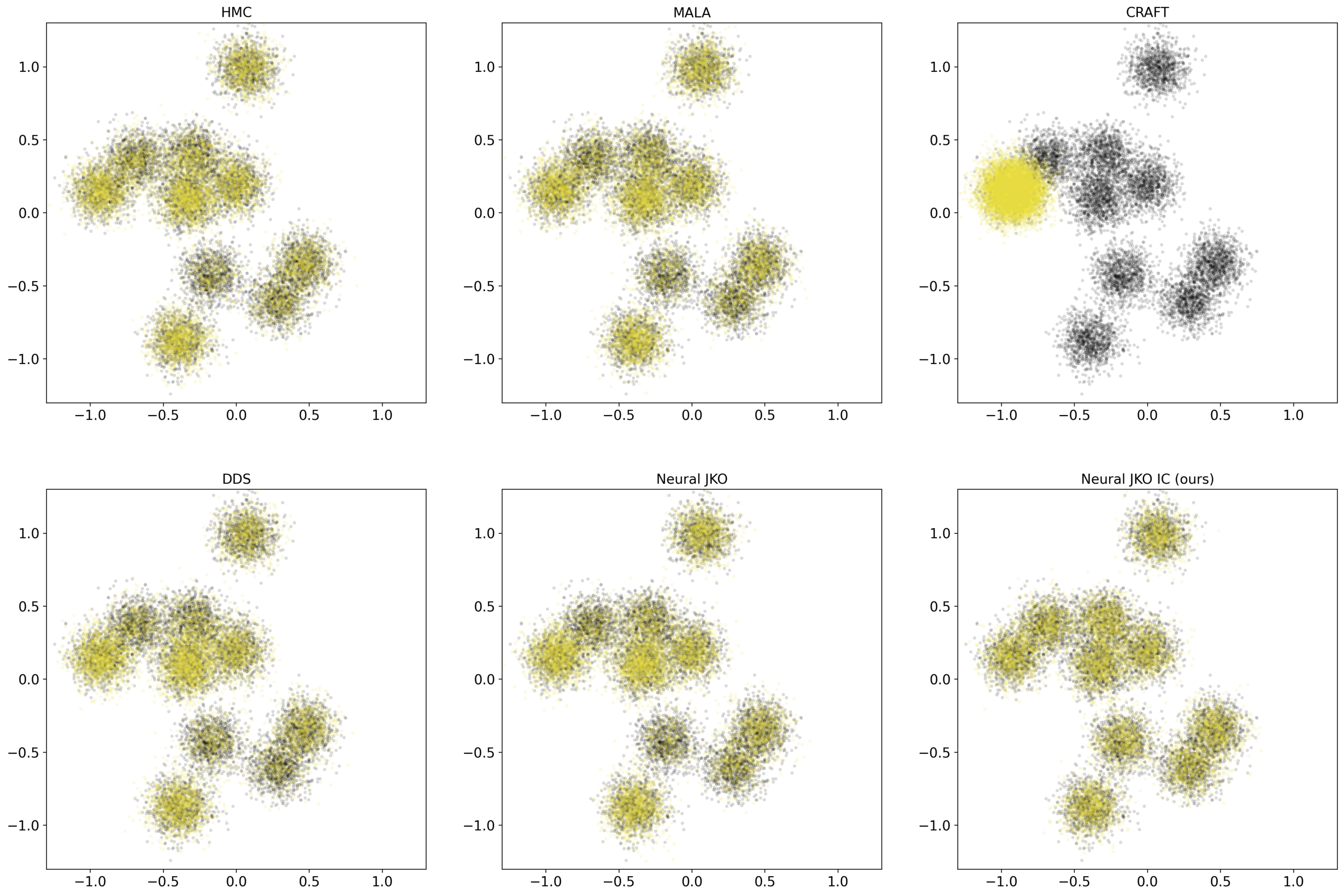}
\caption{
Marginalized sample generation with various methods for the GMM-$200$ distribution with \textit{ground truth} samples and \textcolor[RGB]{240, 228, 66}{\textit{generated}} samples for each associated method. We observe that CRAFT mode collapses and that only the importance corrected neural JKO model distributes the mass correctly onto the modes.
\label{fig:samples_mixtures200}
}
\end{center}
\end{figure}
\end{figure}

\begin{figure}
\begin{center}
    Mustache
    
    \includegraphics[width = .78\linewidth]{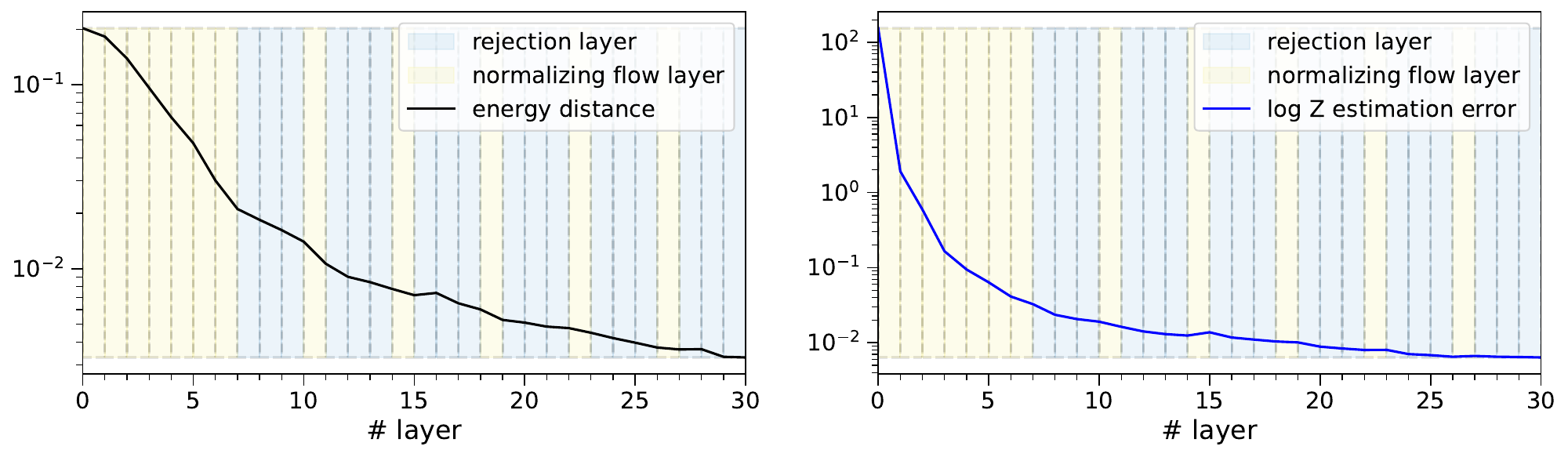}  
    
    shifted $8$ Modes
    
    \includegraphics[width = .78\linewidth]{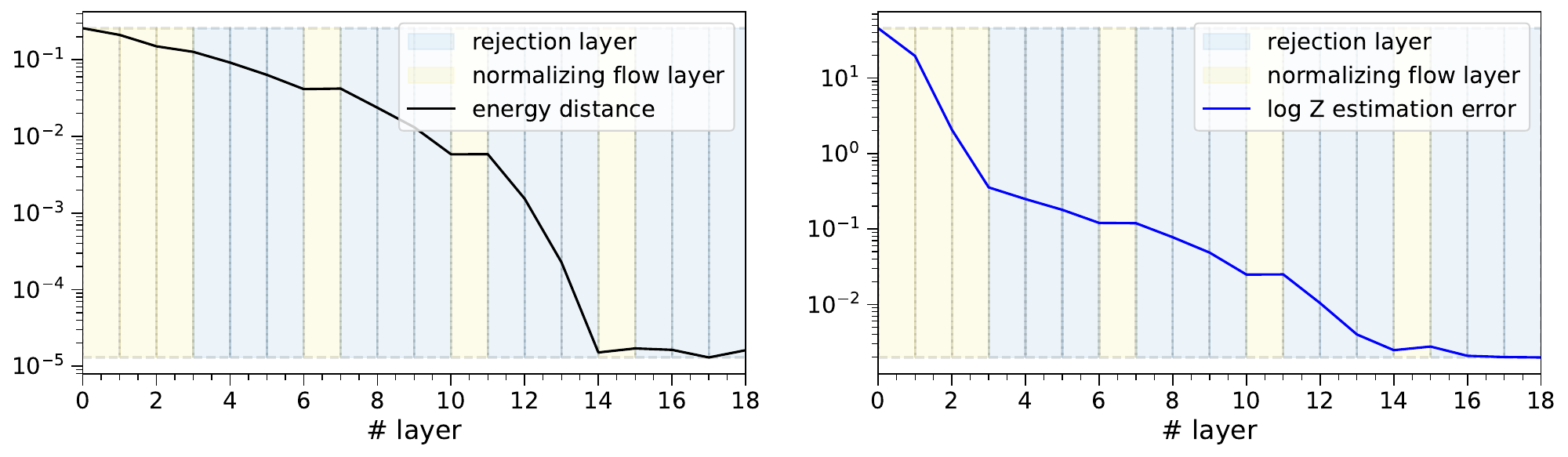}
    
    shifted $8$ Peaky
    
    \includegraphics[width = .78\linewidth]{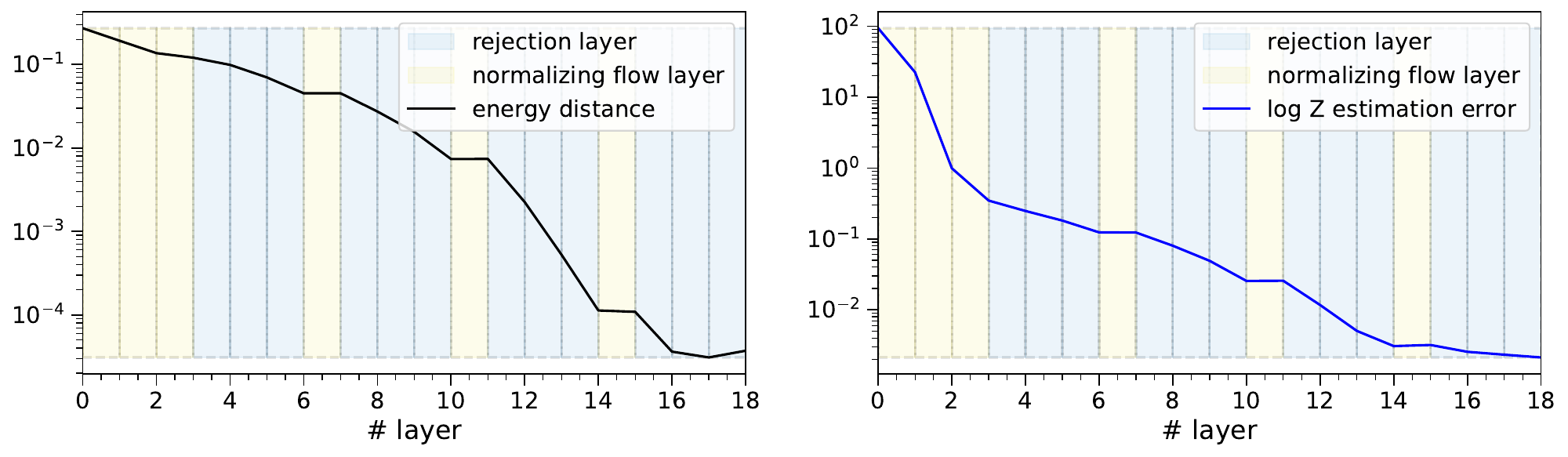}
    
    Funnel
    
    \includegraphics[width = .78\linewidth]{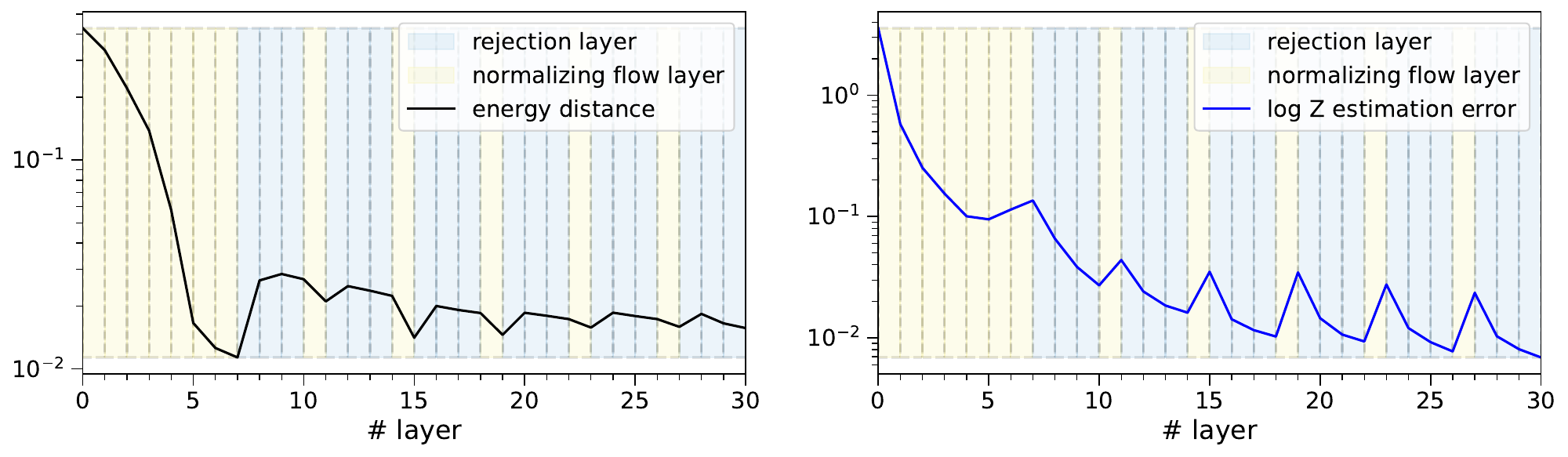}
    
    GMM-$10$
    
    \includegraphics[width = .8\linewidth]{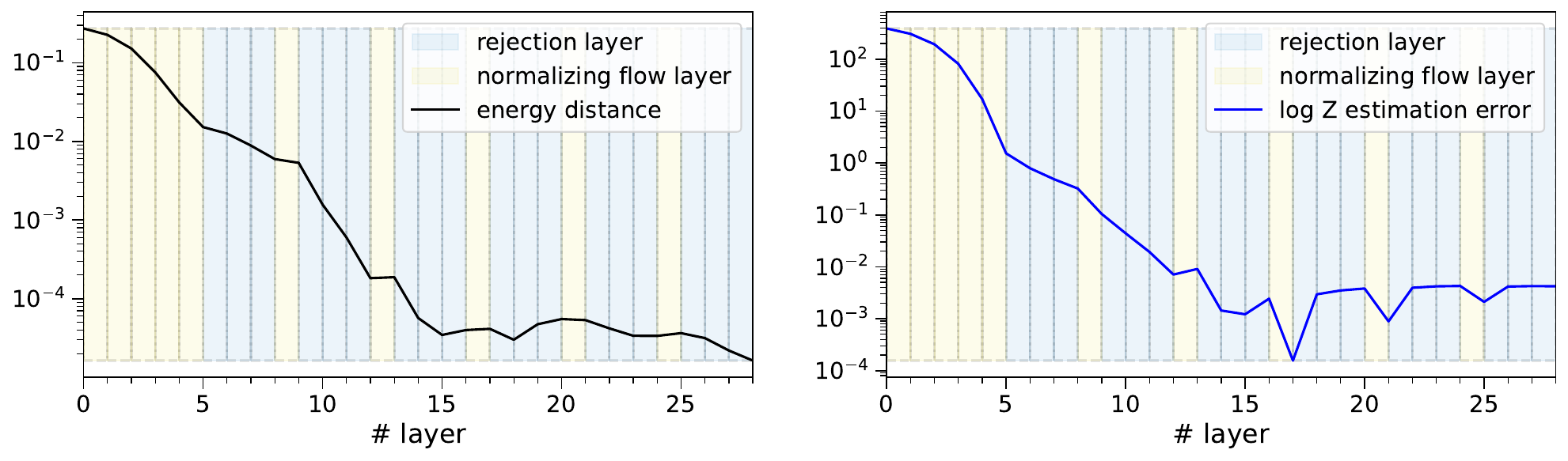}
\caption{We plot the energy distance (\textbf{left}) and $\log(Z)$ estimate (\textbf{right}) over the steps of our importance corrected neural JKO method for different examples. The error measures decrease in the beginning and then saturate at some values.
\label{fig:qoi}
}
\end{center}
\end{figure}

\begin{figure}
\begin{figure}[H]
    \centering
    Non-convex loss function $\mathcal L(\theta)=\mathrm{KL}({\mathcal T_\theta}_\#\mu_0,\nu)$
    
    \begin{subfigure}[t]{.4\textwidth}
        \centering
        
        \includegraphics[width=0.65\textwidth]{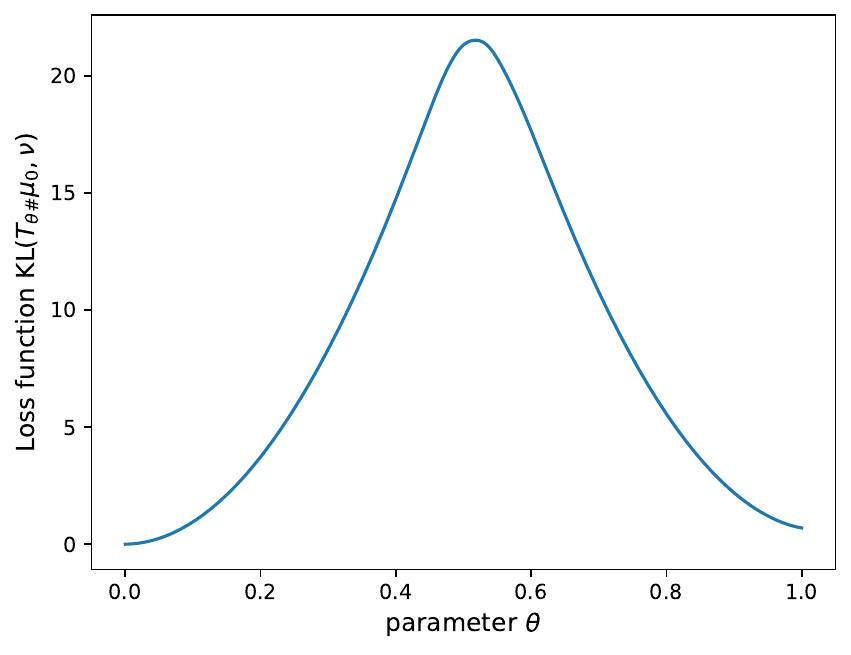}
    \end{subfigure}
    \vspace{.5cm}

    Density of ${\mathcal T_\theta}_\#\mu_0$ for different values of $\theta$
    
    \begin{subfigure}[t]{.23\textwidth}
        \includegraphics[width=0.9\textwidth]{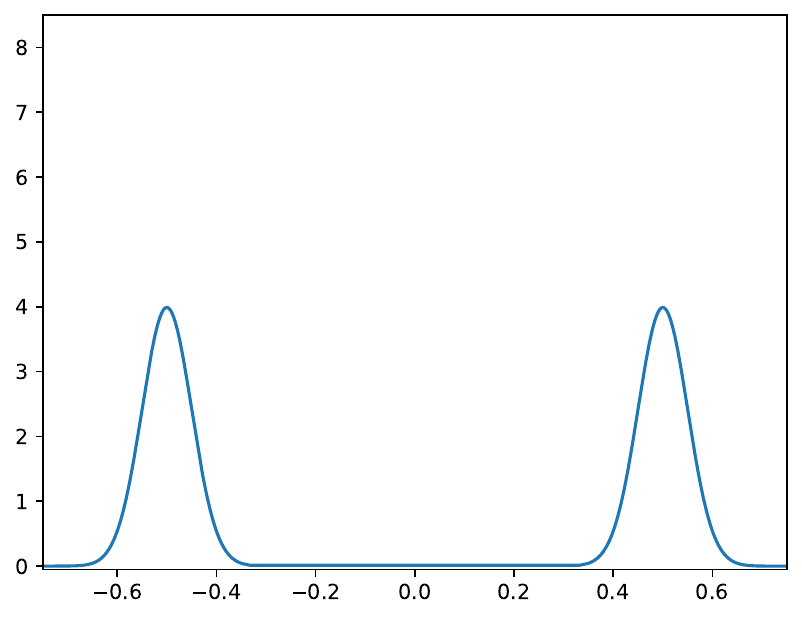}
        \caption*{$\theta=0$}
    \end{subfigure}
        \begin{subfigure}[t]{.23\textwidth}
        \includegraphics[width=0.9\textwidth]{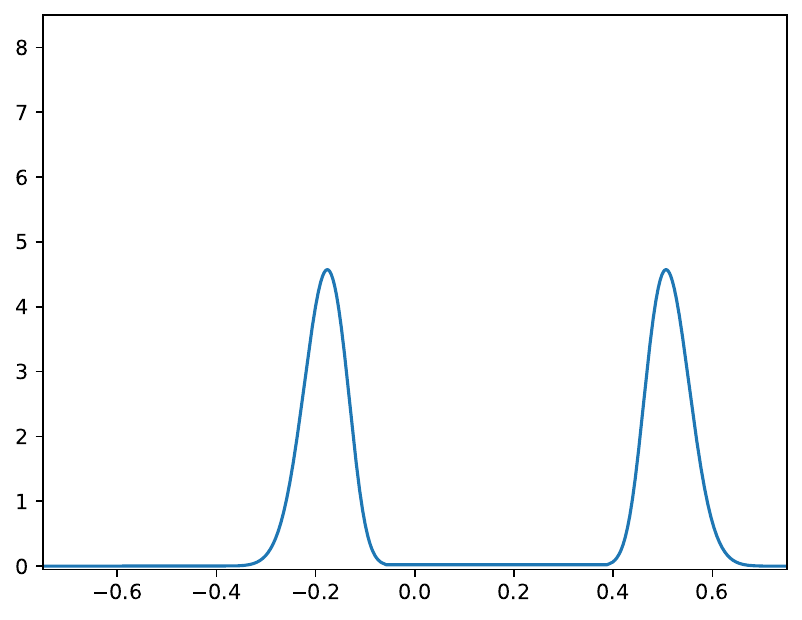}
        \caption*{$\theta=\frac13$}
    \end{subfigure}
        \begin{subfigure}[t]{.23\textwidth}
        \includegraphics[width=0.9\textwidth]{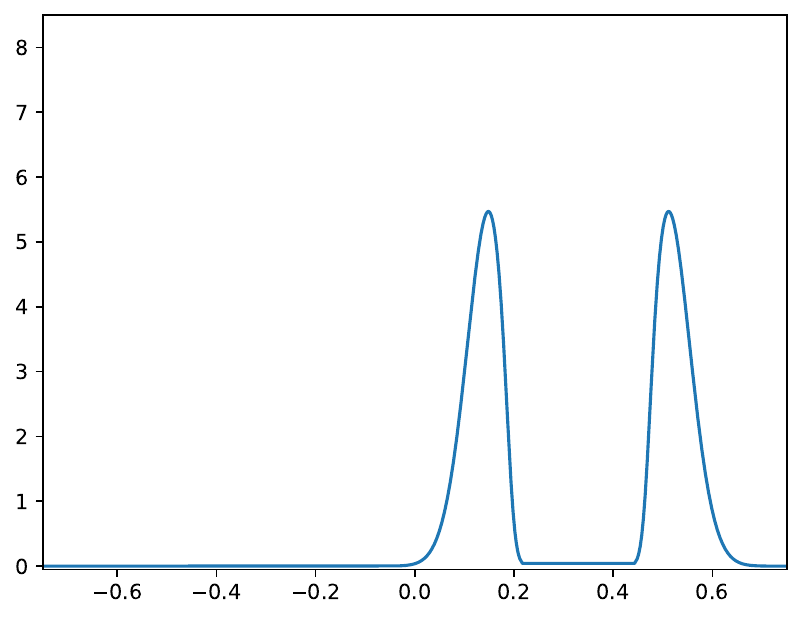}
        \caption*{$\theta=\frac23$}
    \end{subfigure}
        \begin{subfigure}[t]{.23\textwidth}
        \includegraphics[width=0.9\textwidth]{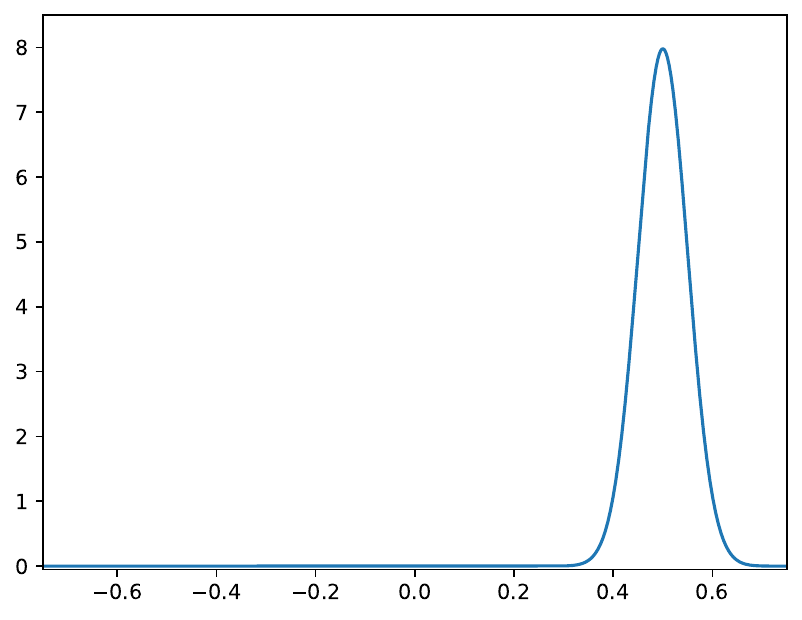}
        \caption*{$\theta=1$}
    \end{subfigure}
    \caption{
    Illustration of the loss function $[0,1]\ni \theta \mapsto \mathcal L(\theta)=\mathrm{KL}({\mathcal T_\theta}_\#\mu_0,\nu)$ and the densities of the generated distributions ${\mathcal T_\theta}_\#\mu_0$ from Example~\ref{ex:nonconvexity}. Both values $\theta=0$ and $\theta = 1$ correspond to local minima (note that $\mathcal{L}(1)>0=\mathcal{L}(0)$). In particular $\theta=1$ corresponds to the case of mode collapse.
    }
    \label{fig:nonconvexity}
\end{figure}

\begin{figure}[H]
    \centering
    \begin{subfigure}{.21\textwidth}
        \includegraphics[width=0.9\textwidth]{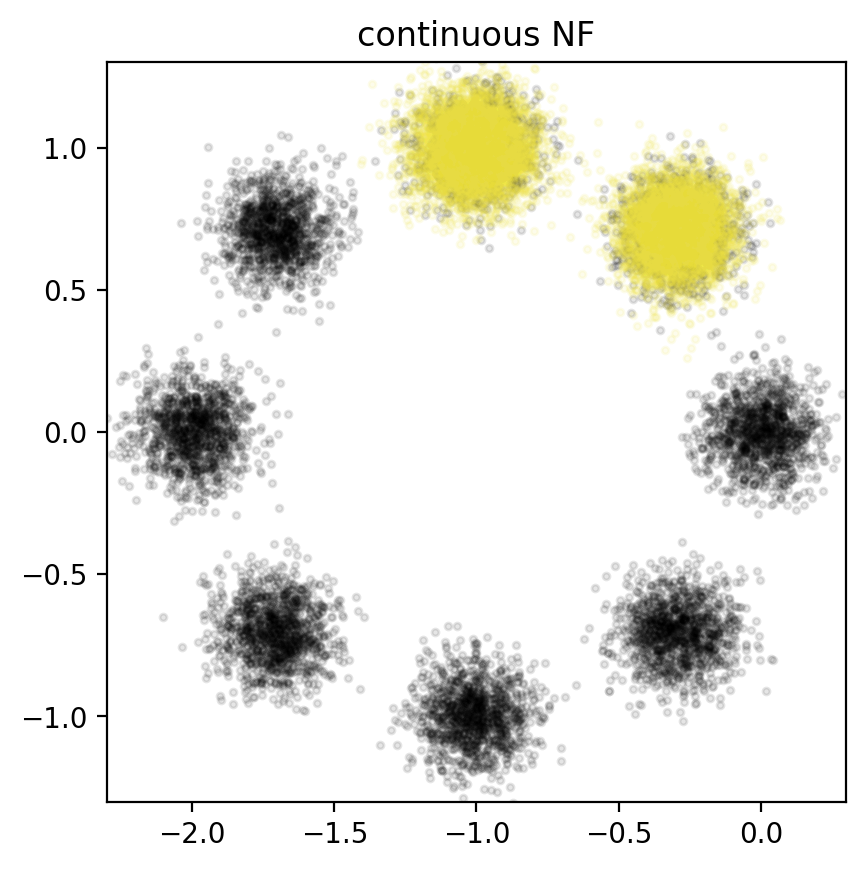}
        \includegraphics[width=0.9\textwidth]{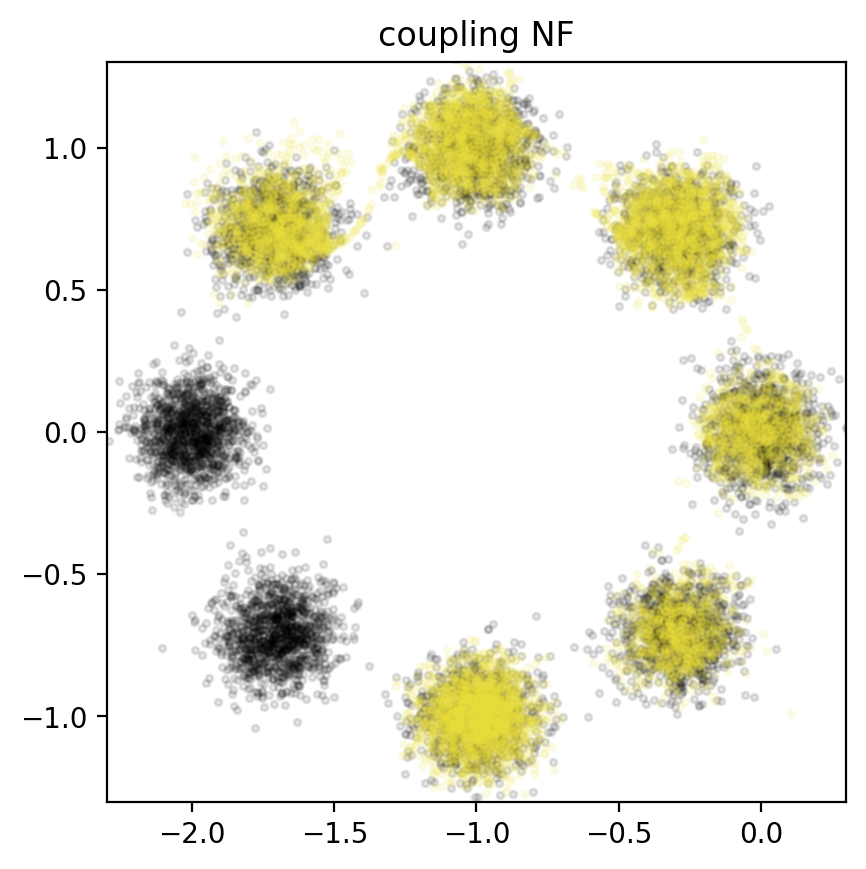}
        \includegraphics[width=0.9\textwidth]{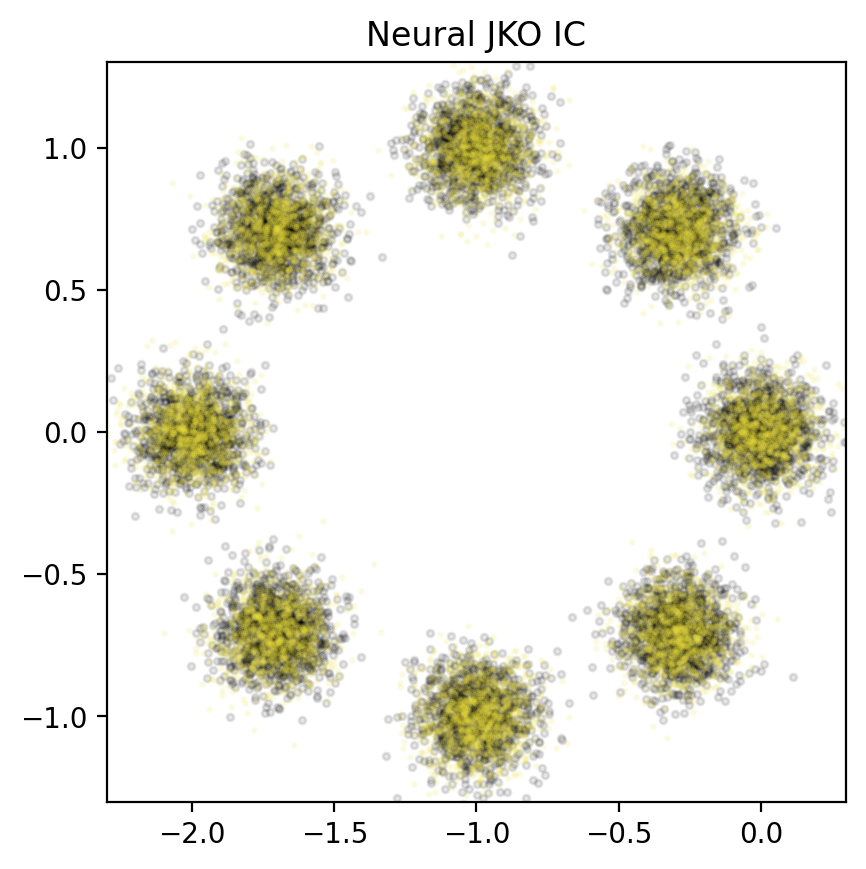}
    \caption*{8 Modes}
    \end{subfigure}
    \begin{subfigure}{.21\textwidth}
        \includegraphics[width=0.9\textwidth]{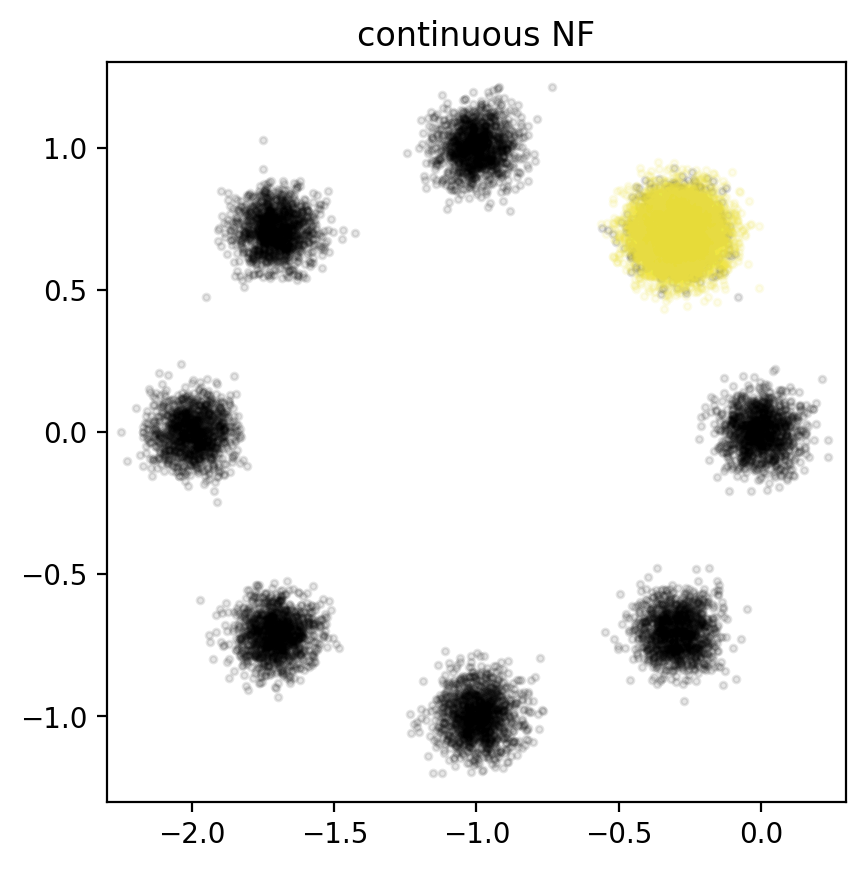}
        \includegraphics[width=0.9\textwidth]{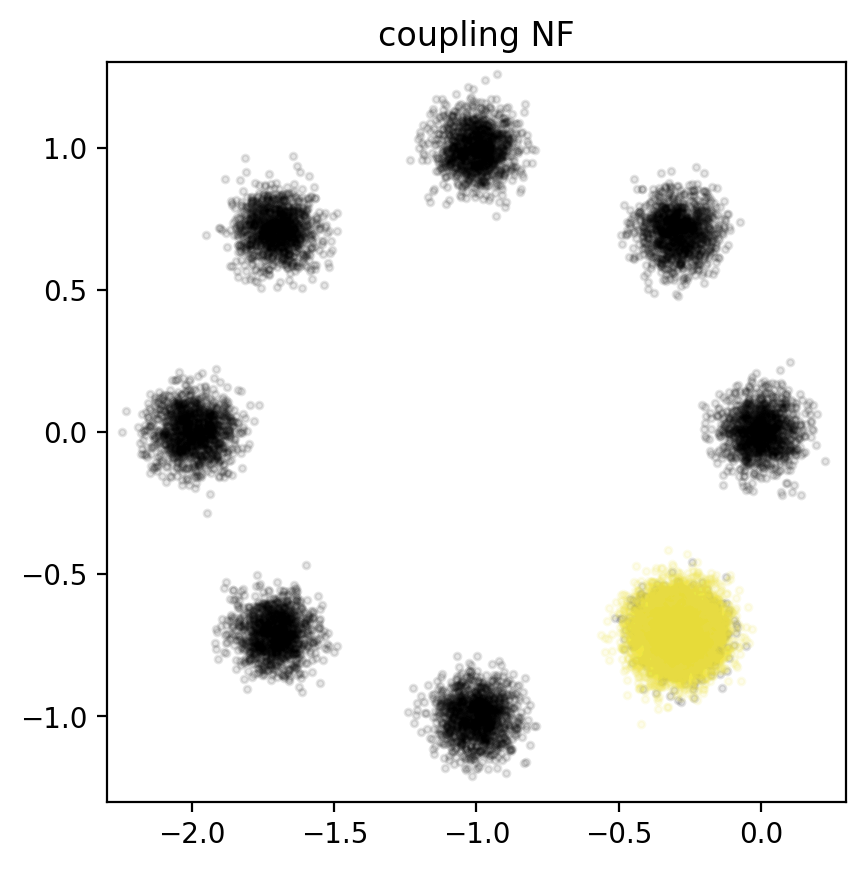}
        \includegraphics[width=0.9\textwidth]{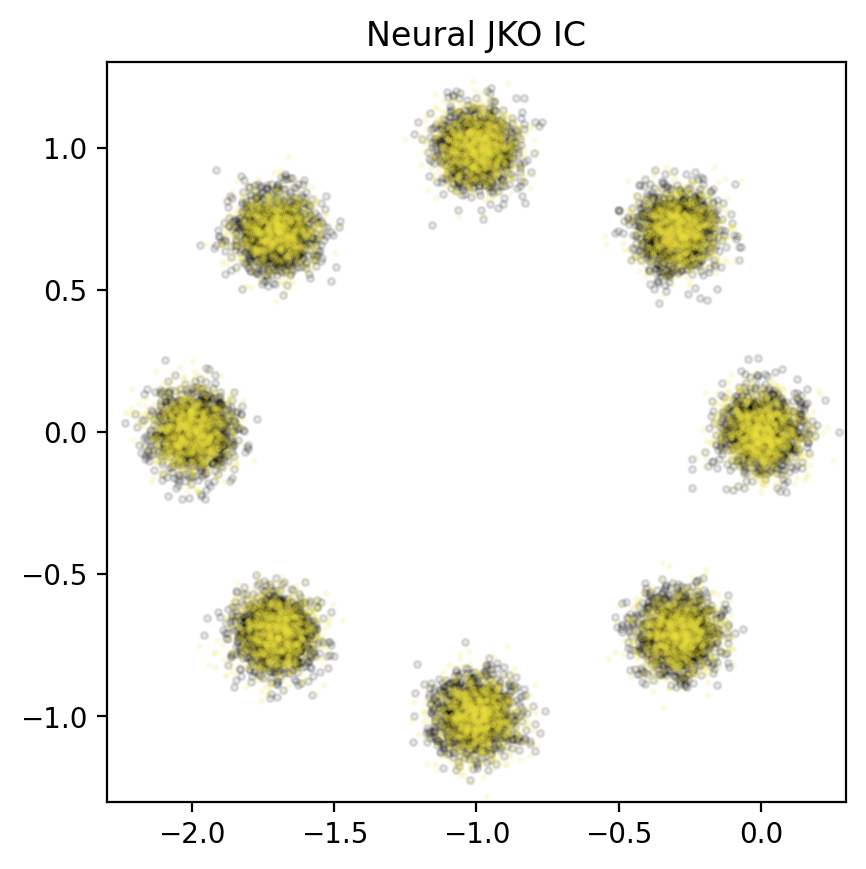}
    \caption*{8 Peaky}
    \end{subfigure}
    \begin{subfigure}{.21\textwidth}
        \includegraphics[width=0.9\textwidth]{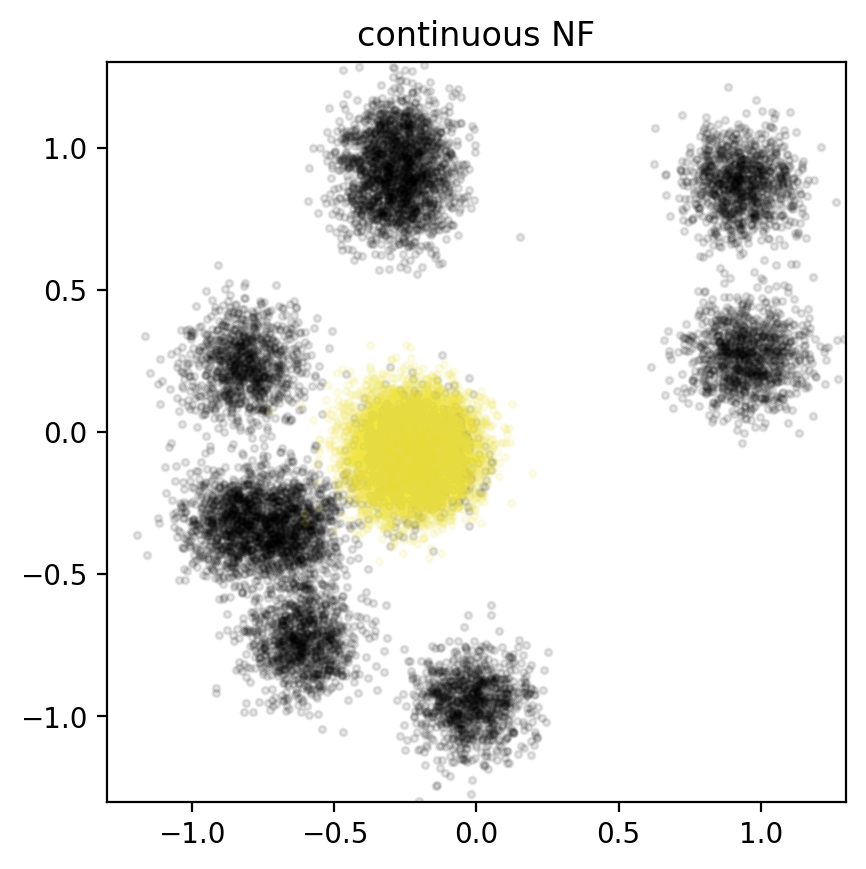}
        \includegraphics[width=0.9\textwidth]{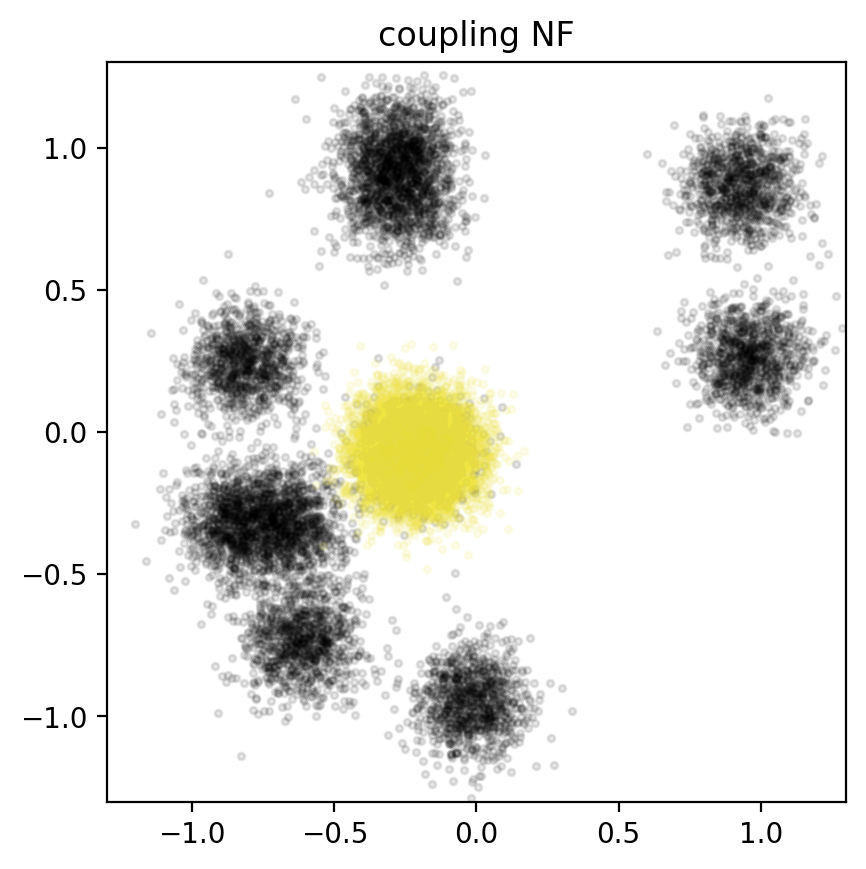}
        \includegraphics[width=0.9\textwidth]{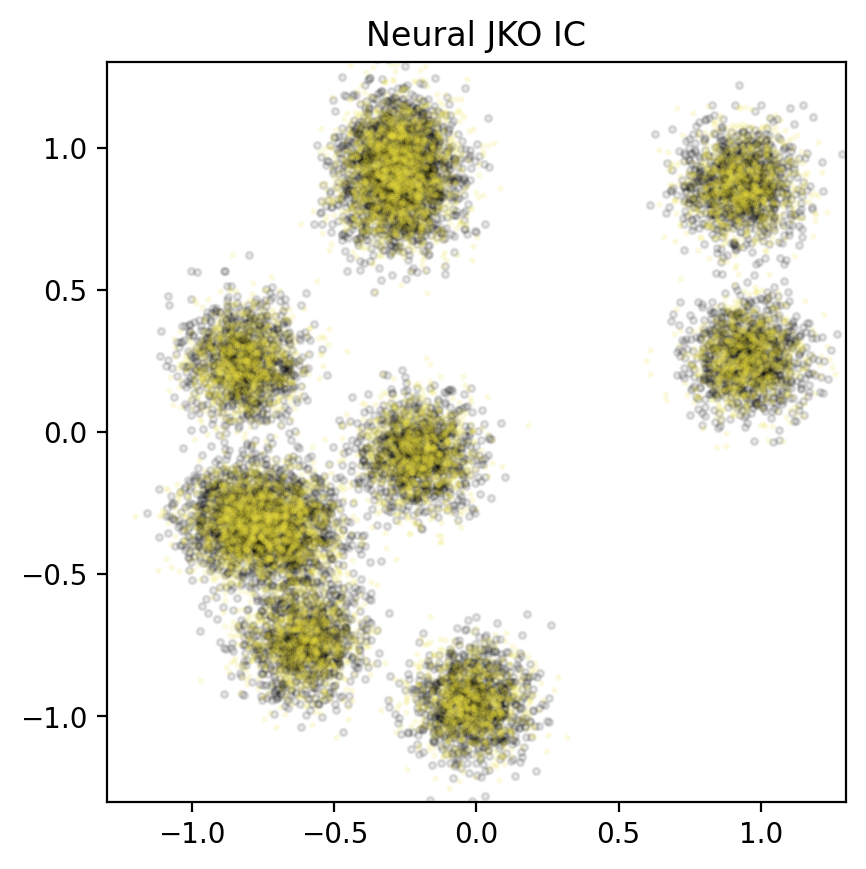}
    \caption*{GMM-$10$}
    \end{subfigure}    
    \begin{subfigure}{.21\textwidth}
        \includegraphics[width=0.9\textwidth]{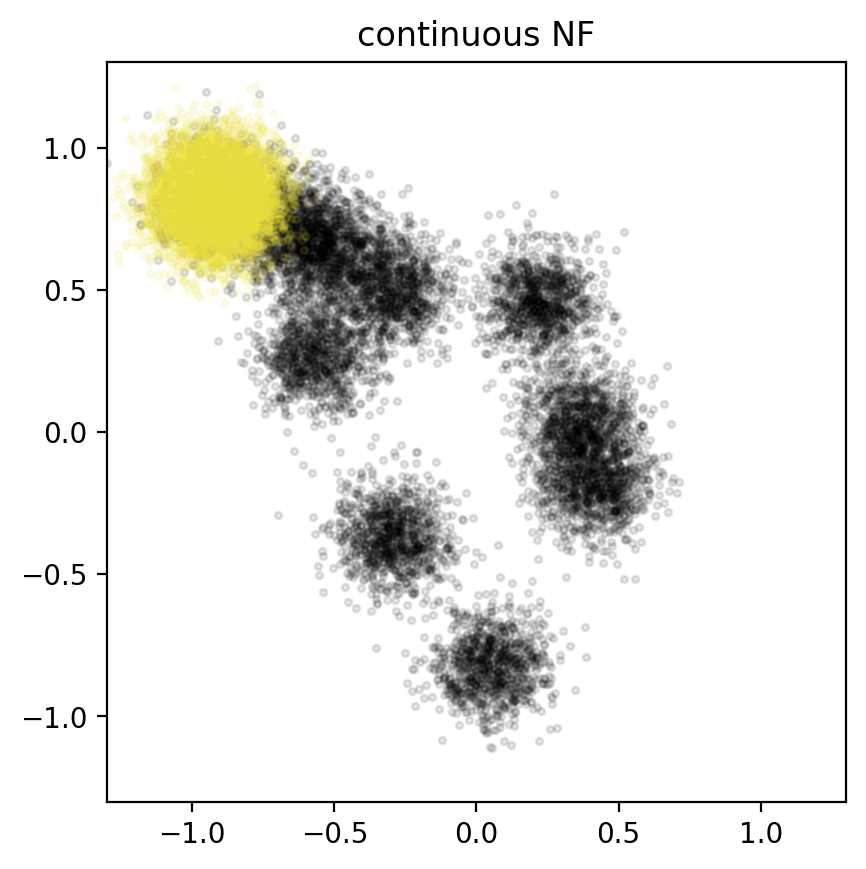}
        \includegraphics[width=0.9\textwidth]{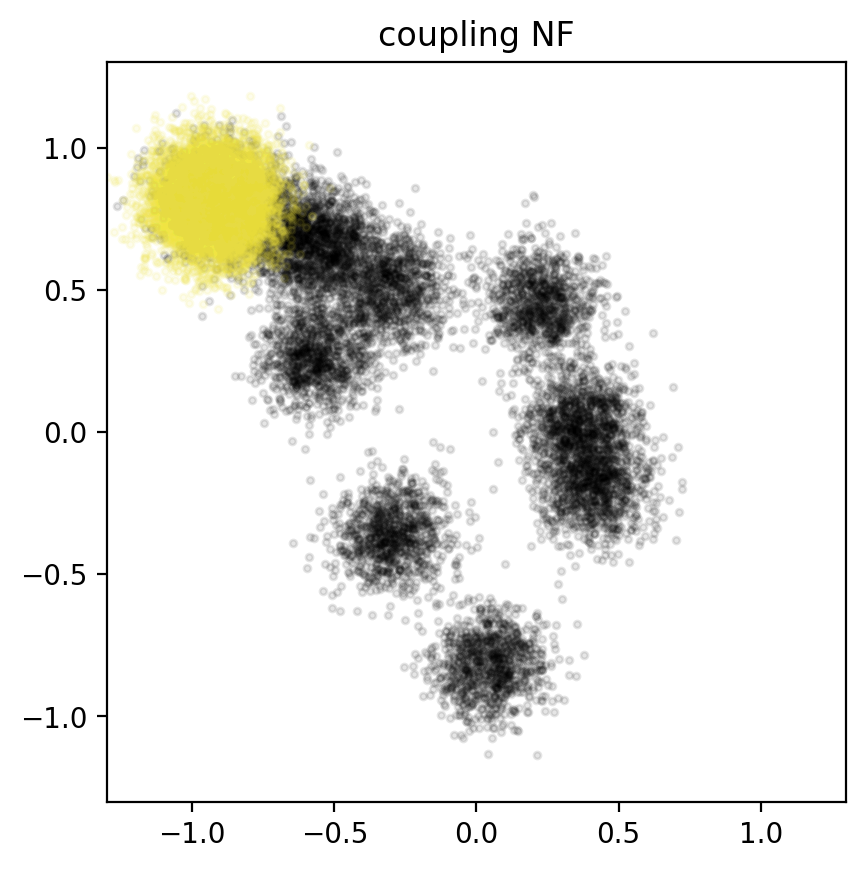}
        \includegraphics[width=0.9\textwidth]{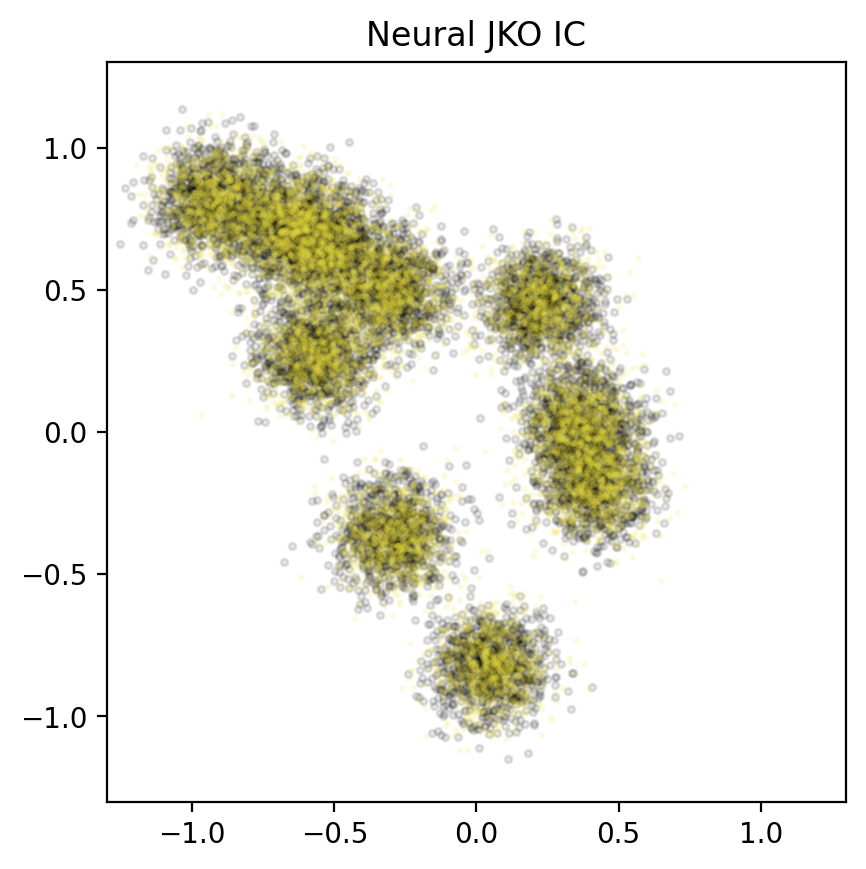}
    \caption*{GMM-$20$}
    \end{subfigure}    
\caption{
Marginalized sample generation for a single normalizing flow compared with neural JKO IC for different example distributions with \textit{ground truth} samples and \textcolor[RGB]{240, 228, 66}{\textit{generated}} samples for each associated method. We observe that the standard normalizing flow architectures always collapse to one or few modes while neural JKO IC recovers all modes correctly.
\label{fig:NF_comparison}
}
\end{figure}
\end{figure}

\end{document}